\PassOptionsToPackage{backref,pagebackref}{hyperref}
\documentclass[11pt]{article}


\usepackage[top=30truemm,bottom=30truemm,left=25truemm,right=25truemm]{geometry}

\usepackage[utf8]{inputenc} 
\usepackage[T1]{fontenc}    
\usepackage{textcomp}
\usepackage{bm}
\usepackage{bbm}  

\usepackage{amsmath}
\usepackage{amssymb}
\usepackage{amsthm}
\usepackage{amsfonts}
\usepackage{mathrsfs}
\usepackage{nicefrac}

\usepackage{booktabs}
\usepackage{multirow}

\usepackage[authoryear, round]{natbib} 

\usepackage[svgnames]{xcolor}
\usepackage{colortbl}  
\usepackage{hyperref}       
\hypersetup{
  colorlinks=true,
  linkcolor=blue,
  citecolor=blue,
}

\usepackage[nameinlink]{cleveref}
\usepackage{stmaryrd}
\usepackage{tgtermes}
\usepackage{comment}

\usepackage[ruled,vlined,linesnumbered]{algorithm2e}
\SetArgSty{textnormal}

\usepackage{microtype}
\usepackage[labelfont=bf]{caption}
\usepackage{wrapfig}
\usepackage{tikz}

\usepackage{url}

\usepackage{mathtools}

\usepackage{enumitem}

\usepackage[normalem]{ulem}
\usepackage{tcolorbox}  
\usepackage{mdframed} 


\usepackage{threeparttable}


\usepackage{cleveref}
\crefname{equation}{}{}
\Crefname{equation}{Eq.}{Eqs.}
\Crefname{algocf}{Algorithm}{Algorithms}  

\crefalias{AlgoLine}{line} 
\crefalias{algocfline}{line}


\theoremstyle{plain}
\newtheorem{theorem}{Theorem}
\newtheorem{proposition}[theorem]{Proposition}
\newtheorem{lemma}[theorem]{Lemma}

\theoremstyle{definition}

\newtheorem{remark}{Remark}

\renewcommand{\hat}{\widehat}
\renewcommand{\tilde}{\widetilde}
\renewcommand{\epsilon}{\varepsilon}

\def\E{\mathbb{E}}
\def\P{\mathbb{P}}
\def\I{\mathbb{I}}
\def\R{\mathbb{R}}

\def\Rnn{\mathbb{R}_{\geq 0}} 
\def\N{\mathbb{N}}

\def\calA{\mathcal{A}}
\def\calB{\mathcal{B}}

\def\calE{\mathcal{E}}

\def\calK{\mathcal{K}}

\def\calM{\mathcal{M}}

\def\calP{\mathcal{P}}

\def\calS{\mathcal{S}}

\DeclareMathOperator*{\argmax}{arg\,max}
\DeclareMathOperator*{\argmin}{arg\,min}


\DeclarePairedDelimiter{\abs}{\lvert}{\rvert} %
\DeclarePairedDelimiter{\brk}{[}{]}

\DeclarePairedDelimiter{\set}{\{}{\}}
\DeclarePairedDelimiter{\prn}{(}{)}
\DeclarePairedDelimiter{\nrm}{\|}{\|}

\DeclarePairedDelimiter{\inpr}{\langle}{\rangle}  

\newcommand{\ind}[1]{\mathbbm{1}{\left[#1\right]}}  

\newcommand{\relmiddle}[1]{\mathrel{}\middle#1\mathrel{}}



\newcommand{\comred}[1]{{\color{red}{}}}



\newcommand{\ones}{\mathbf{1}}

\newcommand{\ie}{\textit{i.e.,}}
\newcommand{\eg}{\textit{e.g.,}}

\newcommand{\Reg}{\mathsf{Reg}}






 
 
 

\newcommand{\qbar}{\bar{q}}
\newcommand{\qubar}{\underbar{$q$}}
\newcommand{\unif}{\mathsf{Unif}}
\newcommand{\ber}{\mathsf{Ber}}
\newcommand{\Pidet}{\Pi_\mathsf{det}}
\newcommand{\Pipref}{\Pi_\mathsf{pref}}
\newcommand{\Pmat}{\mathsf{P}}  
\newcommand{\pio}{\pi^\circ} 
\newcommand{\nuo}{\nu^\circ} 
\newcommand{\stil}{\tilde{s}} 
\newcommand{\Ktil}{\tilde{K}} 
\newcommand{\Nsubopt}{N_{\mathsf{subopt}}} 
\renewcommand{\H}{\operatorname{H}} 

\newcommand{\nn}{\nonumber\\}
\newcommand{\n}{\nonumber}
\newcommand{\per}{\,.}
\newcommand{\com}{\,,}

\newcommand{\sumT}{\sum_{t=1}^T}

\title{Reinforcement Learning from Adversarial Preferences \\ in Tabular MDPs}

\author{
  Taira Tsuchiya\footnote{
    The University of Tokyo and RIKEN; 
    \texttt{tsuchiya@mist.i.u-tokyo.ac.jp}.
  }
  \and
  Shinji Ito\footnote{
    The University of Tokyo and RIKEN; \texttt{shinji@mist.i.u-tokyo.ac.jp}.
  }
  \and
  Haipeng Luo\footnote{
    University of Southern California; \texttt{haipengl@usc.edu}.
  }
}

\begin{document}
\maketitle

\begin{abstract}
We introduce a new framework of \emph{episodic tabular Markov decision processes (MDPs) with adversarial preferences}, which we refer to as preference-based MDPs (PbMDPs). Unlike standard episodic MDPs with adversarial losses, where the numerical value of the loss is directly observed, in PbMDPs the learner instead observes preferences between two candidate arms, which represent the choices being compared. In this work, we focus specifically on the setting where the reward functions are determined by Borda scores. We begin by establishing a regret lower bound for PbMDPs with Borda scores. As a preliminary step, we present a simple instance to prove a lower bound of $\Omega(\sqrt{HSAT})$ for episodic MDPs with adversarial losses, where $H$ is the number of steps per episode, $S$ is the number of states, $A$ is the number of actions, and $T$ is the number of episodes. Leveraging this construction, we then derive a regret lower bound of $\Omega( (H^2 S K)^{1/3} T^{2/3} )$ for PbMDPs with Borda scores, where $K$ is the number of arms. Next, we develop algorithms that achieve a regret bound of order $T^{2/3}$. We first propose a global optimization approach based on online linear optimization over the set of all occupancy measures, achieving a regret bound of $\tilde{O}((H^2 S^2 K)^{1/3} T^{2/3} )$ under known transitions. However, this approach suffers from suboptimal dependence on the potentially large number of states $S$ and computational inefficiency. To address this, we propose a policy optimization algorithm whose regret is roughly bounded by $\tilde{O}( (H^6 S K^5)^{1/3} T^{2/3} )$ under known transitions, and further extend the result to the unknown-transition setting.
\end{abstract}

\section{Introduction}\label{sec:introduction}
In recent years, reinforcement learning based on preference feedback instead of numerical rewards has attracted increasing attention \citep{wirth17survey,christiano17deep,novoseller20dueling,joey23inverse,saha23dueling,wu24making}.
One of the representative frameworks in this line of research is the dueling bandit problem~\citep{yue09interactively}, where the learner compares two arms and observes only a binary outcome indicating which one is preferred.
Based on this preference feedback, the learner aims to maximize a notion of cumulative reward, which corresponds to minimizing regret.
Over the past decade, many dueling bandit algorithms have been proposed, most of which are designed under the assumption that consistent preferences are available throughout all episodes.
Specifically, many of them assume the existence of a Condorcet winner~\citep{urvoy13generic,zoghi14relative,komiyama15regret}, which is an arm that wins in comparisons against every other arm, or assume the existence of an underlying utility value for each arm~\citep{ailon14reducing,gajane15relative}.

However, in practical problems where preferences can be inconsistent or cyclic, the assumptions of the existence of a Condorcet winner or an underlying utility are often unrealistic.
To address this issue, recent work has devoted considerable attention to an alternative notion of optimality called the \emph{Borda winner}~\citep{urvoy13generic,jamieson15sparse,ramamohan16dueling,falahatgar17maxing,heckel18approximate,saha21adversarial,wu24borda,suk24nonstationary}.
The Borda winner is defined as the arm with the highest probability of winning against a uniformly random opponent arm, and it is practically appealing because its existence is always guaranteed.
In fact, the Borda winner has been employed in collective decision-making for survival tasks \citep{hamada20wisdom}, performance assessment of battery electric vehicles \citep{ecer21consolidated}, and preference aggregation under energy constraints \citep{terzopoulou23voting}.
However, mathematical models involving a Borda winner have so far been limited to bandit settings that do not involve \emph{state transitions}, which restricts their applicability to more complex sequential decision-making problems involving preference-based feedback.

To overcome this limitation, we propose a general framework for reinforcement learning from preferences, which can model dueling bandits with a Borda winner involving state transitions as a special case.
Our framework, called \emph{episodic tabular Markov decision processes (MDPs) with adversarial preferences}---preference-based MDPs (PbMDPs) for short---extends standard episodic MDPs to settings with preference feedback.
In standard episodic MDPs with adversarial losses~\citep{evendar09online,zimin13online,neu14online,jin2021best}, the learner observes numerical loss values, whereas in PbMDPs, the learner instead receives preference feedback between two arms at each state.
Importantly, while existing preference-based reinforcement learning typically focuses on learning from stochastic feedback between trajectories~\citep{xu20preference,saha21adversarial,wu24making}, our model considers adversarially generated preferences between arms at each state.
Although PbMDPs can model various preference-based loss functions, we specifically focus on the setting where the reward (or loss) functions of the MDP are determined by Borda scores.

\begin{table*}[t]
  \caption{Regret upper and lower bounds for episodic tabular MDPs with adversarial losses and for episodic tabular MDPs with adversarial preferences and Borda scores under known transitions.
  The number of episodes is denoted as $T$,
  the number of states as $S$,
  the number of actions as $A$,
  the number of arms as~$K$,
  and
  the number of steps as $H$.
  OLO is an abbreviation for online linear optimization.
  \vspace{-5pt}
  }
  \label{table:regret}
  \centering
  \resizebox{\textwidth}{!}{
  \begin{tabular}{llll}
    \toprule
    Setting
    & Reference & Regret & Algorithm
    \\
    \midrule
    \multirow{1}{5em}{adversarial losses}
    & \citet{zimin13online}, \textbf{This work (\Cref{thm:lower_bound_adv_loss})} & $\Omega(\sqrt{H S A T})$ & -- 
    \\
    \rule{0pt}{3ex}
    &
    \cite{zimin13online} & $O(\sqrt{H S A T})$ & OLO over polytope
    \\
     \midrule
    \multirow{1}{5em}{adversarial preferences with Borda scores}
    &
    \textbf{This work (\Cref{thm:lower_bound_adv_pref_borda})} & $ \Omega \prn{(H^2 S K)^{1/3} T^{2/3}}$ & --
    \\
    \rule{0pt}{3ex}
    &
    \textbf{This work (\Cref{thm:reg_occ_meas})}
    & $ \tilde{O} \prn*{ \prn{H^2 S^2 K}^{1/3} T^{2/3} }$ & OLO over polytope
    \\ 
    \rule{0pt}{3ex}
    &
    \textbf{This work (\Cref{thm:regret_po_23})} 
    &  $ \tilde{O} \prn*{ (H^6 S K^5)^{1/3} T^{2/3} }$ & policy optimization
    \\ 
    \bottomrule
  \end{tabular}
  }
  \vspace{-12pt}
\end{table*}

To characterize the difficulty of PbMDPs with Borda scores, we begin by deriving a regret lower bound.
As a preliminary step, for episodic tabular MDPs with adversarial losses, we establish a regret lower bound of $\Omega(\sqrt{H S A T})$, where $H$ is the number of steps per episode, $S$ is the number of states, $A$ is the number of actions, and $T$ is the number of episodes.
The instance construction used in the proof of the lower bound is mentioned in \citet{zimin13online}, although they did not provide an explicit proof.
This construction is significantly simpler than existing ones~\citep{jaksch10thomas,jin18is,domingues21episodic}.
Inspired by this instance and the lower bound construction for dueling bandits with a Borda winner in \citet{saha21adversarial}, we prove a regret lower bound of $\Omega((H^2 S K)^{1/3} T^{2/3})$ for PbMDPs with Borda scores, where $K$ is the number of arms.

We then construct two algorithms for PbMDPs with Borda scores, each achieving a regret upper bound of~$T^{2/3}$.
A summary of the regret bounds is provided in \Cref{table:regret}.
We first present an algorithm based on a global optimization approach, which solves an online linear optimization problem over the set of all occupancy measures, and achieves a regret upper bound of $\tilde{O}((H^2 S^2 K)^{1/3} T^{2/3})$.
Our algorithm employs follow-the-regularized-leader (FTRL) with the negative Shannon entropy regularizer as the online linear optimization algorithm.
To achieve the desired regret bound, several technical components are necessary.
A naive application of FTRL would lead to a regret bound containing a denominator term that can become arbitrarily small depending on properties of the transition kernel.
Furthermore, in the MDP formulation, it is necessary to construct and properly use an estimator of the Borda score---a challenge not encountered in dueling bandits with a Borda winner (see \Cref{sec:global_optimization} for details).
To address these challenges, we estimate the Borda score only at a specific state sampled uniformly at random, and design an exploration strategy based on a policy that maximizes the probability of reaching the sampled state.

However, due to the uniform sampling of states, the above regret upper bound exhibits a suboptimal dependence on $S$:
while the regret lower bound scales as $S^{1/3}$, the regret upper bound scales as~$S^{2/3}$.
Moreover, as pointed out in many existing studies~\citep{shani20optimistic,luo21policy,dann23best}, the global optimization approach suffers from computational inefficiency, as it requires solving a convex optimization problem over a polytope with $\Theta(S)$ linear constraints in each episode.
In addition, extending the global optimization approach to structured MDPs~\citep{jin20provably,cai20provably,neu21online} is also challenging.

To address these issues, we design a policy optimization algorithm, an approach that has recently attracted growing interest.
Inspired by the work of \citet{luo21policy}, we propose a policy optimization algorithm that, intuitively, incorporates a bonus term into FTRL to encourage exploration of states visited less frequently.
Their policy optimization approach requires estimating the Q-function, and we design a carefully constructed Q-function estimator based on an importance-weighting scheme tailored to PbMDPs with Borda scores, relying on the same Borda score estimator used for the global optimization approach.
We show that, under known transitions, the regret of this algorithm is roughly bounded by $\tilde{O}\prn{(H^6 S K^5)^{1/3} T^{2/3}}$,
which has a desirable dependence on $S$ while maintaining~$T^{2/3}$ dependence, at the expense of worse dependence on $H$ and $K$.
We further extend this result to the unknown-transition setting (see \Cref{app:po_unknown} for details due to space constraints).

\section{Preliminaries}\label{sec:preliminaries}
In this section, as a preliminary, we introduce tabular MDPs and the problem setting of online reinforcement learning for episodic tabular MDPs with adversarial losses.
\paragraph{Notation}
For a natural number $n \in \N$, we let $[n] = \set{1, \dots, n}$.
Given a vector $x$, we denote its $i$-th element by $x_i$ and
and use $\nrm{x}_p$ to denote its $\ell_p$-norm for $p \in [1, \infty]$.
The set $\Delta(\calK)$ represents all probability distributions over the set $\calK$, and $\Delta_d = \set{ x \in [0,1]^d \colon \nrm{x}_1 = 1 }$ denotes the $(d-1)$-dimensional probability simplex.
The indicator function $\ind{\cdot}$ returns $1$ if the specified condition is true, and $0$ otherwise.
For sets $\calA$ and $\calB$, we use $\calA^\calB$ to denote the set of all functions from $\calB$ to $\calA$.
We use $\ber(p)$ to denote the Bernoulli distribution with mean $p$, and $\unif(\calA)$ to denote the uniform distribution over the set $\calA$.

\paragraph{Tabular MDPs}\label{subsec:tabular_MDPs}
We study a tabular MDP $\calM = (\calS, \calA, P, H, s_0)$,
where $\calS$ is a finite state space with $S = |\calS|$, $\calA$ is a finite action space with $A = |\calA|$, $P \colon \calS \times \calA \to \Delta(\calS)$ is the transition kernel, where $P(s' \mid s,a)$ denotes the probability of transitioning to state $s'$ when taking action $a$ in state $s$, and $s_0 \in \calS$ is the initial state.

Following existing studies \citep{zimin13online,du19provably,jin20learning,jin2021best}, we assume that the state space $\calS$ is layered into $(H+1)$ distinct layers as follows.
The state space $\calS$ can be written as a disjoint union $\calS = \bigcup_{h=0}^{H} \calS_h$ with $S_h = |\calS_h|$.
The initial and terminal layers are singleton sets $\calS_0 = \set{s_0}$ and $\calS_H = \set{s_H}$, respectively,
and all intermediate layers $\calS_h$ for $h \in [H-1]$ are non-empty and mutually disjoint.
State transitions are restricted to proceed forward one layer at each step:
for each $h \in \set{0, \dots, H-1}$ and for any state-action pair $(s,a) \in \calS_h \times \calA$, it holds that $\sum_{s' \in \calS_{h+1}} P(s' \mid s, a) = 1$ and $P(s' \mid s, a) = 0$ for all $s' \notin \calS_{h+1}$.
Note that this layered assumption is without loss of generality: any non-layered episodic MDP can be transformed into a layered one by treating each $(s, h) \in \calS \times [H-1]$ as a new state.
For simplicity, we sometimes write $\calS$ for $\calS \setminus \set{s_H}$.

\paragraph{Episodic tabular MDPs with adversarial losses}\label{subsec:episodic_mdp_adv_loss}
Here, we introduce episodic tabular MDPs with adversarial losses, which will be investigated when proving lower bounds and help clarify the problem setting of PbMDPs.
The episodic tabular MDP with adversarial losses, denoted by $\calM_{\mathsf{loss}} = \prn{ \calM, \set{\ell_t}_{t=1}^T }$ for loss function $\ell_t \colon \calS \times \calA \to [0,1]$ at episode $t \in [T]$, proceeds as follows:
\begin{mdframed}
  For each episode $t = 1, \dots, T$:
  \begin{enumerate}[topsep=-3pt, itemsep=0pt, partopsep=0pt, leftmargin=25pt]
    \item 
    Learner selects a policy $\pi_t$ based on past observations;
    \item 
    Environment chooses a loss function $\ell_t \colon \calS \times \calA \to [0,1]$;
    \item  
    Learner executes policy $\pi_t$ to obtain a trajectory $\set*{ (s_{t,h}, a_{t,h}, \ell_t(s_{t,h},a_{t,h})) }_{h=0}^{H-1} \in (\calS \times \calA \times \brk{0,1})^{H-1}$.
    Specifically,
    for each step $h = 0, \dots, H-1$,
    the learner selects $a_{t,h} \sim \pi_t(\cdot \mid s_{t,h})$,
    moves to $s_{t,h+1} \sim P(\cdot \mid s_{t,h}, a_{t,h})$,
    incurs and observes loss $\ell_t(s_{t,h},a_{t,h})$.
  \end{enumerate}
\end{mdframed}

The goal of the learner is to minimize the \emph{regret}.
To formalize this, we define the value function.
Given a transition kernel $P$, a function $\ell$, and a (possibly non-Markovian) policy $\pi$, the value function $V^\pi \colon \calS \to [0, H]$ is defined as
$
V^\pi(s; \ell)
= 
\E\brk[\big]{\sum_{k=h(s)}^{H-1} \ell(s_k, a_k) \mid s_{h(s)} = s, a_k \sim \pi(\cdot \mid s_k), s_{k+1} \sim P( \cdot \mid s_k, a_k), \forall k = \set{h(s), \dots, H - 1}}
\eqqcolon
\E\brk[\big]{\sum_{k=h(s)}^{H-1} \ell(s_k, a_k) \mid \pi, P}
,
$
where 
$h(s)$ is the layer of state $s$, \ie~$h \in [H]$ such that $s \in \calS_h$, and we omit the dependency on $P$ for simplicity.
Then, using these definitions, we can define the regret by
\begin{equation}\label{eq:def_regret}
  \Reg_T
  =
  \sumT V^{\pi_t}(s_0; \ell_t)
  -
  \sumT V^{\pi^*}(s_0; \ell_t)
  \com
  \quad
  \pi^*
  \in 
  \argmin_{\pi \in \Pi}
  \set*{
    \sumT V^{\pi_t}(s_0; \ell_t)
  }
  \com
\end{equation}
where $\Pi = \Delta(\calA)^{\calS}$ is the set of all stochastic Markov policies.
We study both the known-transition and unknown-transition settings:
in the known-transition case, the learner has full access to the transition kernel $P$,
whereas in the unknown-transition case, the learner must infer $P$ through interactions with the environment.

\section{Episodic tabular MDPs with adversarial preferences and Borda score}\label{sec:mdp_adv_pref}
This section introduces episodic tabular MDPs with adversarial preferences.
We then focus on the case where the loss functions in the preference-based MDP are determined based on Borda scores.

\paragraph{Episodic MDPs with adversarial preferences (preference-based MDPs)}
Here, we establish the framework of episodic tabular MDPs with adversarial preferences, which we refer to as preference-based MDPs (PbMDPs).
A PbMDP reformulates episodic tabular MDPs with adversarial losses, by replacing the learner's observations from numerical loss values with preference feedback, and by redefining the incurred loss as a function of underlying preference functions.
Formally, a PbMDP $\calM_{\mathsf{pref}} = (\calM, \set{\Pmat_t}_{t=1}^T, \ell)$ consists of a tabular MDP $\calM$, a preference function $\Pmat_t \colon \calS \times \calA \to [0,1]$ of each episode $t \in [T]$, and a loss function $\ell \colon \calS \times \calA \times (\calS \times \calA \to [0,1]^{K \times K}) \to [0,1], (s, a, \Pmat) \mapsto \ell(s,a,\Pmat)$.
This model assumes that action space $\calA$ of the MDP $\calM$ satisfies $\calA = [K] \times [K]$ for the set of arms $[K] = \set{1,\dots,K}$.
The preference function $\Pmat_t$ returns the outcome of comparing two arms $a = (a^L, a^R) \in [K] \times [K]$ and is assumed to satisfy $\Pmat_t(s,(a^L, a^R)) = 1 - \Pmat_t(s,(a^R, a^L))$.

The PbMDP $\calM_{\mathsf{pref}} = (\calM, \set{\Pmat_t}_{t=1}^T, \ell)$ proceeds as follows:
\begin{mdframed}
  For each episode $t = 1, \dots, T$:
  \begin{enumerate}[topsep=-3pt, itemsep=0pt, partopsep=0pt, leftmargin=25pt]
    \item
    Learner selects a policy $\pi_t$ based on past observations;
    \item Environment chooses a preference function $\Pmat_t \colon \calS \times \calA \to [0,1]$; 
    \item  
    Learner executes policy $\pi_t$ to obtain a trajectory $\set[\big]{ (s_{t,h}, a_{t,h}, o_t(s_{t,h}, a_{t,h}^L, a_{t,h}^R)) }_{h=0}^{H-1} \in (\calS \times \calA \times \brk{0,1})^{H-1}$.
    Specifically,
    for each step $h = 1, \dots, H$,
    the learner selects an arm pair $a_{t,h} = (a_{t,h}^L, a_{t,h}^R) \sim \pi_t( \cdot \mid s_{t,h})$ based on policy $\pi_t$,
    moves to $s_{t,h+1} \sim P( \cdot \mid s_{t,h}, a_{t,h})$,
    incurs some loss $\ell(s_{t,h}, a_{t,h}, \Pmat_t)$,
    and only observes a preference feedback $o_t(s_{t,h}, a_{t,h}^L, a_{t,h}^R) \sim \ber(\Pmat_t(s_{t,h}, a_{t,h}^L, a_{t,h}^R))$.
  \end{enumerate}
\end{mdframed}
As in the case of episodic MDPs with adversarial losses, the goal of the learner is to minimize regret.
Defining $\ell_t(s, a) \coloneqq \ell(s, a, \Pmat_t)$ for each $(s,a) \in \calS \times \calA$, we can define the regret in exactly the same way as in \cref{eq:def_regret}.
The loss function $\ell$ is determined for each problem setting.

\paragraph{Borda score}
This study focuses on the setting in which the loss function $\ell$ in the PbMDP is defined based on the Borda score.
Given a preference function $\Pmat_t$, the (shifted) Borda score $b_t \colon \calS \times [K] \to [-1,0]$ and the corresponding loss function $\ell_t \colon \calS \times \calA \to [0,1]$ are defined as
\begin{equation}\label{eq:def_borda_ell}
  b_t(s,i) = \frac{1}{K} \sum_{j=1}^K \prn*{\Pmat_t(s, i, j) - 1}
  \com\quad
  \ell_t(s, a)
  =
  -
  \frac12\prn*{
    b_t(s, a^L) + b_t(s, a^R)
  }
  \per
\end{equation}
The shifted Borda score is the expected value of the preference when the opponent's arm is selected uniformly at random, shifted for analytical convenience.
Note also that PbMDPs with Borda scores reduce to dueling bandits with a Borda winner when the MDP consists of a single state and a single step.
Alternative choices of loss functions, such as those based on the Condorcet winner, can also be considered, as discussed in \Cref{sec:conclusion}, and this is an important direction for future work.

\paragraph{Additional notation}
We introduce additional notation for the subsequent sections.
Let $\Pidet = \calA^\calS$ denote the set of all deterministic Markov policies.
The occupancy measure $q^{\pi} \colon \calS \times \calA \to [0,1]$ of a (possibly non-Markovian) policy $\pi$ is defined as the probability of visiting a state-action pair $(s,a)$ under policy $\pi$ and transition $P$, that is,
$q^{\pi}(s,a) = \Pr[\exists h \in [H], s_h = s, a_h = a \mid \pi, P]$, and define $q^{\pi}(s) = \sum_{a \in \calA} q^{\pi}(s,a)$.
We denote by $\Omega = \set{q^\pi \colon \pi \in \Pi}$ the set of all occupancy measures, which is known to form a polytope with $\Theta(S)$ linear constraints~\citep{puterman14markov}.
Let $\pi^q \in \Pi$ be a Markov policy induced by an occupancy measure $q$, which is given by $\pi^q(a \mid s) = q(s,a) / q(s)$.
Let 
$
\I_t(s,a)
\coloneqq
\ind{ \exists h \in [H], (s_{t,h}, a_{t,h}) = (s, a)}
$
be the indicator function representing whether the state-action pair $(s,a)$ is visited under a policy $\pi_t$ of episode $t$ and transition kernel $P$,
and let
$
\I_t(s) = \sum_{a \in \calA} \I_t(s,a).
$
Given a transition kernel $P$, a function $\ell$, and a policy $\pi \in \Pi$, the Q-function $Q^{\pi} \colon \calS \times \calA \to [0, H]$ is defined as
$
Q^\pi(s, a; \ell)
=
\ell(s,a) + \E\brk[\big]{\sum_{k=h(s)+1}^{H-1} \ell(s_k, a_k) \mid \pi, P}.
$
We use $\E_t\brk{\cdot}$ to denote the expectation conditioned on all observations before episode $t$.

\section{Minimax lower bounds}\label{sec:lower_bounds}
In this section, we present a regret lower bound for PbMDPs with Borda scores in order to characterize the learning difficulty of the problem.
We consider a simple tabular MDP instance shown in \Cref{fig:mdp_instance}.
In this instance, the state space $\calS$ is evenly divided across layers, with $S' \coloneqq (S - 2) / (H - 1) \simeq S / H$ states per layer; that is, for each layer $h \in [H-1]$, we have $|\calS_h| = S'$ (for simplicity, we assume that $S'$ is a positive integer).
Transitions occur uniformly at random to states in the next layer; that is, for any $(s, a) \in \calS_h \times \calA$, it holds that $P(s' \mid s, a) = 1 / S_{h+1}$ for all $s' \in \calS_{h+1}$.

\begin{wrapfigure}[12]{r}{0.4\columnwidth}
  \vspace{-16pt}
  \centering
  \includegraphics[width=0.4\columnwidth]{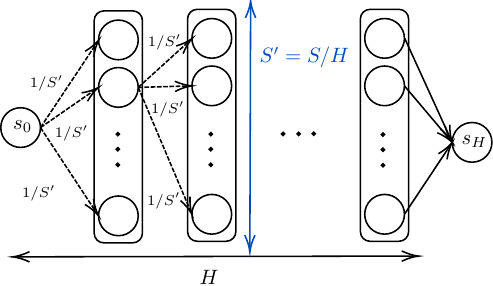}
  \caption{
    A simple tabular MDP instance for proving a lower bound.
    The transitions occur uniformly at random to states in the next layer.
  }
  \label{fig:mdp_instance}
\end{wrapfigure}

\subsection{Lower bound for episodic MDPs with adversarial losses}\label{subsec:lower_bound_adv_loss}
We first provide a construction of a $\Omega(\sqrt{H S A T})$ lower bound for episodic tabular MDPs with adversarial losses, using the above MDP instance.
\begin{theorem}\label{thm:lower_bound_adv_loss}
  Suppose that $H \geq 3$ and $A \geq 5$.
  Then, for any policy, there exists an episodic MDP with adversarial losses such that 
  $\E\brk{\Reg_T} = \Omega(\sqrt{H S A T})$.
\end{theorem}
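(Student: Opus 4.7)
The plan is to exploit the fact that in the instance of \Cref{fig:mdp_instance} the transition kernel $P(\cdot \mid s,a)$ is independent of $a$: at every layer $h$ the next state is drawn uniformly from $\calS_{h+1}$, so the entire sequence of states visited in any episode is an i.i.d.\ random walk whose distribution does not depend on the learner's policy. Writing $S' = (S-2)/(H-1)$, this implies that for any intermediate state $s \in \calS_h$ with $h \in [H-1]$, the visit count $n_s \coloneqq \sum_{t=1}^T \I_t(s)$ has distribution $\mathrm{Bin}(T, 1/S')$ and, jointly across $s$, is independent of every action the learner ever takes. The MDP therefore decouples into $(H-1)S' = S-2$ essentially independent adversarial bandit problems.

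I would build the hard instance as follows. Independently for each intermediate state $s$, sample a hidden optimal arm $i_s^\star \sim \unif([A])$, and at each episode $t$ draw $\ell_t(s,a) \sim \ber(1/2 - \epsilon \ind{a = i_s^\star})$ with gap $\epsilon = c_0\sqrt{AS'/T}$ for a small absolute constant $c_0$; set $\ell_t \equiv 0$ on $\calS_0 \cup \calS_H$. The best Markov policy $\pi^\star \in \Pi$ plays $i_s^\star$ at every $s$, and since the trajectory visits exactly one state per layer and the losses across states and episodes are mutually independent, the regret decomposes as
\begin{equation*}
\E[\Reg_T] = \epsilon \sum_{h=1}^{H-1}\sum_{s \in \calS_h}\E\brk*{\sum_{t \colon \I_t(s)=1}\prn[\big]{1 - \pi_t(i_s^\star \mid s)}}.
\end{equation*}
Conditioning on the transition randomness (which is independent of everything else), the sub-problem at each $s$ is exactly an $A$-armed i.i.d.\ Bernoulli bandit with $n_s$ rounds and gap $\epsilon$.

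The next step invokes the classical multi-armed bandit lower bound, proved via a KL/Pinsker argument averaged over the $A$ possible locations of $i_s^\star$ (cf.\ \citet[Theorem~15.2]{lattimore20bandit}): for any $\epsilon \in (0,1/4]$ and any learner, the expected regret in an $A$-armed Bernoulli bandit with $n$ rounds is at least $\Omega(\epsilon n) - O(\epsilon^2 n \sqrt{n/A})$. Under the single deterministic choice $\epsilon = c_0\sqrt{AS'/T}$, this becomes $\Omega(\sqrt{AT/S'})$ on the event $\set{n_s \geq T/(2S')}$. A Chernoff plus union bound yields $n_s \geq T/(2S')$ simultaneously for every intermediate $s$ with probability at least $1 - (S-2)\exp(-T/(8S'))$, so summing across the $(H-1)S'$ states gives
\begin{equation*}
\E[\Reg_T] = \Omega\prn[\big]{(H-1) S' \sqrt{AT/S'}} = \Omega\prn[\big]{(H-1)\sqrt{S' A T}} = \Omega\prn*{\sqrt{HSAT}},
\end{equation*}
using $(H-1) S' = S - 2$.

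The main obstacle is handling the randomness of $n_s$: a direct Jensen bound on $\sqrt{n_s}$ would point the wrong way. I avoid this by committing in advance to a single gap $\epsilon$ calibrated to the expected visit count $T/S'$, and by using Chernoff concentration to ensure $n_s = \Theta(T/S')$ at every state simultaneously; the borderline regime $T \lesssim S' \log S$ in which this concentration is loose corresponds to $\sqrt{HSAT} = O(1)$ and is trivial. The assumptions $H \geq 3$ and $A \geq 5$ enter only through the need for at least one intermediate layer and for enough arms to support the standard needle-in-a-haystack bandit lower bound.
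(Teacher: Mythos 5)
Your construction is exactly the paper's: the uniform-transition layered MDP of \Cref{fig:mdp_instance}, a hidden optimal arm drawn uniformly and independently at each intermediate state, Bernoulli losses with gap $\epsilon\asymp\sqrt{AS'/T}$, and an average over the hidden arms to reduce to $S-2$ needle-in-a-haystack bandit problems with $\sim T/S'$ rounds each. Where you diverge is in how the per-state information-theoretic step is executed. The paper never conditions on visit counts: it introduces the auxiliary instances $\calM_{\mathsf{loss}}(\pio,\stil)$ with the gap removed at a single state, applies Pinsker plus the chain rule of KL to the \emph{full} $T$-episode process, so that the horizon enters only through the \emph{expected} pull count $\E^{(\pio,\stil)}[N_T(\stil,\pio(\stil))]$, and the Cauchy--Schwarz step over states then delivers $\sqrt{HST/A}$ directly. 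You instead condition on the exogenous state sequence (legitimate, since transitions are action-independent) and invoke the fixed-horizon MAB lower bound per state with the random horizon $n_s\sim\mathrm{Bin}(T,1/S')$. That route can be made to work, and it is arguably more modular, but it forces you to deal with fluctuations of $n_s$ that the paper's argument never sees.

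There is one concrete gap in your handling of that random horizon. The per-state bound you cite has the form $c_1\epsilon n - c_2\epsilon^2 n^{3/2}/\sqrt{A}$, whose \emph{negative} term is increasing in $n$. Restricting to the one-sided event $\set{n_s\geq T/(2S')}$ controls only the positive term; on realizations where $n_s$ is much larger than $T/S'$ the displayed lower bound can be negative, so the claim that it ``becomes $\Omega(\sqrt{AT/S'})$ on that event'' does not follow as written. You need two-sided control, e.g.\ restrict to $\set{T/(2S')\leq n_s\leq 2T/S'}$ (using non-negativity of the per-state regret off this event) or bound $\E[n_s^{3/2}]=O((T/S')^{3/2})$ directly via $\E[n_s^{3/2}]\leq\sqrt{\E[n_s]\,\E[n_s^2]}$ for the binomial. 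A second, smaller point worth a sentence: the learner at state $s$ also observes losses at other states, so the ``sub-problem at $s$'' is a bandit with extra side observations; since those observations are independent of $i_s^\star$, the minimax lower bound still applies, but this should be said rather than assumed. Both repairs are routine, and with them your argument recovers the paper's $\Omega(\sqrt{HSAT})$ bound.
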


The proof can be found in \Cref{app:proof_lower_bounds}.
This lower bound matches the regret upper bound of $O(\sqrt{H S A T})$ obtained by the standard approach of running FTRL with the Shannon entropy regularizer over $\Omega$~\citep{zimin13online}.

It is intuitive that the above construction yields a lower bound of $\Omega(\sqrt{H S A T})$.
In the above instance, each state is visited $T' \coloneqq T / S'$ times over $T$ episodes in expectation.
Hence, we can view this instance as solving $S$ independent multi-armed bandits, each with $A$ arms and $T'$ rounds,
whose minimax lower bound is $\Omega(\sqrt{A T'})$~\citep{vogel60asymptotic,auer95gambling}.
Therefore, we obtain $\E\brk{\Reg_T} \gtrsim S \sqrt{A T'} = S \sqrt{A (T / S')} \simeq \sqrt{H S A T}$.
Specifically, in the proof of \Cref{thm:lower_bound_adv_loss}, we construct $\abs{\Pidet} = A^S$ different MDP instances.
Each instance $\calM_{\mathsf{loss}}(\pi^\circ)$ has the same transition kernel as shown in \Cref{fig:mdp_instance}, and stochastic loss functions are given by
$
  \ell_t(s,a) \sim \mathsf{Ber}(\frac{1}{2} - \epsilon \ind{a = \pi^\circ(s)})
$
for each $(s,a) \in \calS \times \calA$.
Then, we can prove the lower bound by an argument similar to the proof of the minimax lower bound for multi-armed bandits (\eg~\citealt{auer95gambling}, \citealt[Chapter 15]{lattimore2020book}), preparing additional instances $\set{\calM_{\mathsf{loss}}(\pi^\circ,\stil)}_{\stil \in \calS}$ with stochastic loss functions 
$
  \ell_t(s,a) \sim \mathsf{Ber}(\frac{1}{2} - \epsilon \ind{a = \pi^\circ(s),s \neq \stil})
$
for each $\stil \in \calS$.

A notable advantage of this lower bound construction is its simplicity: unlike existing constructions (\eg~\citealt{jaksch10thomas,jin18is,domingues21episodic}), which involve complex structures such as building $A$-ary trees (though they have the advantage of achieving a tighter lower bound under unknown transitions), our instance is remarkably simple.
Thanks to this simplicity, as we will see below, a similar instance can also be used to establish a lower bound for PbMDPs with Borda scores.

\subsection{Lower bound for preference-based MDPs with Borda scores}
Inspired by the above instance construction and the lower bound construction for dueling bandits with a Borda winner~\citep{saha21adversarial}, we prove the following lower bound for PbMDPs with Borda scores, the proof of which can be found in \Cref{app:proof_lower_bounds}.
\begin{theorem}\label{thm:lower_bound_adv_pref_borda}
  Suppose that $H \geq 3$ and $K \geq 5$.
  Then, for any policy, there exists an episodic PbMDP with Borda scores such that
  $\E\brk{\Reg_T} = \Omega( \prn{ H^2 S K}^{1/3} T^{2/3})$.
\end{theorem}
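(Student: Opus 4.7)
The plan is to combine the layered-MDP construction from \Cref{thm:lower_bound_adv_loss} with the dueling-bandit-with-Borda-score lower-bound construction of \citet{saha21adversarial}. I would keep the MDP of \Cref{fig:mdp_instance} unchanged (so that transitions are uniform over the next layer and action-independent), and at every intermediate state $s \in \bigcup_{h=1}^{H-1}\calS_h$ plant an independent hidden ``good arm'' $i_s^\circ \in [K]$ drawn uniformly at random, with preference function (constant across $t$)
\begin{equation*}
  \Pmat_t(s, i, j) \;=\; \tfrac{1}{2} + \epsilon\bigl(\ind{i = i_s^\circ, j \neq i_s^\circ} - \ind{j = i_s^\circ, i \neq i_s^\circ}\bigr),
\end{equation*}
for a scalar $\epsilon$ to be tuned. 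A direct computation from \cref{eq:def_borda_ell} gives $b_t(s, i_s^\circ) - b_t(s, i) = \epsilon$ for every $i \neq i_s^\circ$, so $\ell_t(s,\cdot)$ is uniquely minimized at $(i_s^\circ, i_s^\circ)$ with per-step suboptimality gap of order $\epsilon$. Since preferences are stationary in $t$, this family lies inside the adversarial class, and any lower bound for it is a lower bound for PbMDPs with Borda scores.

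The reduction goes as follows. Since transitions are action-independent, the number of visits $n_s$ to any intermediate state $s$ is a random variable whose distribution is identical across all instances in the family, with $\E[n_s] = T/S'$ where $S' = (S-2)/(H-1)$. Conditional on the transition sample path, the preference observations at different states are mutually independent: the hidden arms $\{i_{s'}^\circ\}_{s'}$ are drawn independently across states, and the preference outcomes at $s$ are drawn from Bernoullis that depend only on the local $i_s^\circ$. Thus the learner effectively plays $(H-1)S'$ essentially independent $K$-armed Borda dueling bandit problems, one per intermediate state, and the regret decomposes as $\Reg_T = \sum_s R_s$. Invoking the single-state $\Omega(K^{1/3} T^{2/3})$ Borda-bandit lower bound of \citet{saha21adversarial} at each state, with the uniform tuning $\epsilon \asymp (KS'/T)^{1/3}$, yields $\E[R_s] = \Omega(K^{1/3}(T/S')^{2/3})$, and summing over the $(H-1)S' = S - 2$ intermediate states gives
\begin{equation*}
  \E[\Reg_T] \;=\; \Omega\bigl((H-1)\,(S')^{1/3} K^{1/3} T^{2/3}\bigr) \;=\; \Omega\bigl((H^2 S K)^{1/3} T^{2/3}\bigr),
\end{equation*}
as claimed.

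The main obstacle I expect is justifying the state-wise independence rigorously, since $\pi_t$ may depend on all past preference observations across states, so the per-state dueling bandits are coupled through the learner's own algorithm. My plan is to run a state-wise change-of-measure argument analogous to the one in the proof of \Cref{thm:lower_bound_adv_loss}: for each target state $\stil$, I would compare the true instance against the alternative that nulls out the signal at $\stil$ by setting all entries of $\Pmat_t(\stil,\cdot,\cdot)$ to $1/2$, while keeping the planted signals at every other state unchanged. Because the transitions are action-independent and the preference outcomes at distinct states are generated by independent Bernoullis, the chain rule collapses the KL divergence between the two full-transcript laws to $\E[n_{\stil}]\cdot O(\epsilon^2) = O(T\epsilon^2/S')$, exactly as in a single-state Borda bandit of horizon $T/S'$. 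A standard Fano (or Le Cam two-point) bound in the spirit of \citet[Chapter~15]{lattimore2020book} and \citet{saha21adversarial} then produces $\E[R_{\stil}] = \Omega(K^{1/3}(T/S')^{2/3})$ with the stated choice of $\epsilon$, and summing over $\stil$ finishes the argument.
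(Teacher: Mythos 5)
There is a genuine gap, and it lies in the choice of hard instance rather than in the independence/change-of-measure machinery (which is fine and closely mirrors what the paper does). Your planted-arm construction $\Pmat_t(s,i,j) = \tfrac12 + \epsilon(\ind{i=i_s^\circ, j\neq i_s^\circ} - \ind{j=i_s^\circ, i\neq i_s^\circ})$ does not force $T^{2/3}$ regret. In that instance \emph{every} arm pair has suboptimality gap $O(\epsilon)$, while the informative comparisons (those involving $i_s^\circ$) are among these cheap actions. A learner can therefore identify $i_s^\circ$ with roughly $K/\epsilon^2$ comparisons, each costing only $O(\epsilon)$ regret, for a total exploration cost of $O(K/\epsilon)$; balancing against the commitment cost $\epsilon T'$ gives $O(\sqrt{K T'})$ per state. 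So the minimax regret on your family is $\tilde{O}(\sqrt{KT'})$ per state, and no change-of-measure argument can extract $\Omega(K^{1/3}(T')^{2/3})$ from it. You cannot ``invoke'' the \citet{saha21adversarial} single-state lower bound here because that bound is proved for a structurally different family, not for arbitrary Borda instances with gap $\epsilon$.

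The paper's proof (and Saha et al.'s) instead splits the arms into a ``good'' half $\calK_g$ and a ``bad'' half $\calK_b$, with good arms beating bad arms with probability $0.9$ and the perturbation $2\epsilon$ planted \emph{only} on the entries comparing the hidden good arm $m$ against bad arms. This decouples information from cost: the identity of the best good arm is revealed only through comparisons against bad arms, and any such comparison incurs \emph{constant} (not $O(\epsilon)$) instantaneous regret because bad arms have Borda score $-0.7$ versus $-0.3$. The proof then does a case split on $\Nsubopt$, the expected number of comparisons involving bad arms: if it exceeds $c(H^2SKT^2)^{1/3}$ the regret is already large; otherwise the KL divergence between instances, which is controlled by $\Nsubopt$ rather than by the visit count $T/S'$, is too small to identify $m$, yielding regret $\epsilon \cdot \Omega(HT)$ with $\epsilon \asymp (SK/(HT))^{1/3}$. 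Your proposal has no analogue of this costly-but-informative action class and no case analysis on the exploration budget, and both are essential to obtain the $T^{2/3}$ rate. The MDP layer of your argument (uniform action-independent transitions, per-state visit count $T/S'$, state-wise change of measure) is the right scaffolding and matches the paper; what must be replaced is the single-state preference structure.
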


The intuition behind this lower bound can also be obtained, similarly to \Cref{subsec:lower_bound_adv_loss}.
We consider an instance based on \Cref{fig:mdp_instance}, and, in this instance, each state is visited $T' \coloneqq T / S'$ times in expectation over $T$ episodes.
Thus, we can view this instance as solving $S$ independent dueling bandits with a Borda winner, each with $K$ arms and $T'$ rounds, whose minimax lower bound is~$\Omega(K^{1/3} {T'}^{2/3})$~\citep{saha21adversarial}.
Hence, we obtain
$\E\brk{\Reg_T} \gtrsim S K^{1/3} {T'}^{2/3} = S K^{1/3} \prn{T / S'}^{2/3} \simeq \prn{H^2 S K}^{1/3} T^{2/3}$.

\section{Global optimization}\label{sec:global_optimization}
This section provides a global optimization algorithm for preference-based MDPs with Borda scores.

\subsection{Algorithm design}
Here, we describe the design of our algorithm.
The full pseudocode is provided in \Cref{alg:pf_mdp_occ}.

\paragraph{Reduction to online linear optimization over $\Omega$}
From the definitions of the regret and the occupancy measure, the regret can be written as
$
\E\brk{\Reg_T}
=
\E\brk{\sumT \inpr{q^{\pi_t} - q^{\pi^*}, \ell_t}}
$
for $\ell_t$ in~\cref{eq:def_borda_ell}.
Based on this observation, we can minimize the regret by solving an online linear optimization (OLO) problem over the set of all occupancy measures $\Omega$, a standard approach in the literature on online reinforcement learning (\eg~\citealt{evendar09online,zimin13online,jin20learning}, to name a few).
Therefore, if the Borda score can be estimated from observations, it may be possible to minimize the regret.

\paragraph{Borda score estimation}
We must estimate the Borda score using only the observed trajectories, \ie~bandit feedback.
For this purpose, we consider the following Borda score estimator inspired by the one proposed in \citet{saha21adversarial} for dueling bandits with a Borda winner.
For a policy $\pi_t$ used in episode $t$,
we define the estimator $\hat{b}_t \colon \calS \times [K] \to \R$ by
\begin{equation}\label{eq:def_b_hat}
  \hat{b}_t(s,i)
  =
  \frac{\ind{a_{t,s}^L = i}}{ K \sum_{j' \in [K]} \pi_t((i,j') \mid s) }
  \frac{o_t(s, a_{t,s}^L, a_{t,s}^R) - 1}{\sum_{i' \in [K]} \pi_t((i',a_{t,s}^R) \mid s)}
  \com
\end{equation}
where $a_{t,s}^L$ and $a_{t,s}^R$ are the first and second arms chosen at state $s$ in episode $t$, respectively.
It is important to note that when $a_{t,s}^L$ and $a_{t,s}^R$ are independent at state $s$,
the estimator $\hat{b}_t(s,i)$ is an unbiased estimator of the Borda score $b_t(s,i)$ (see \Cref{app:proof_global_opt} for the proof).

\paragraph{Epsilon-greedy-type algorithm}
As in many problems with minimax regret of $\Theta(T^{2/3})$~\citep{bartok11minimax,alon2015online,saha21adversarial}, we incorporate appropriate exploration instead of directly using the output of the OLO algorithm for exploitation.
An important point to note here is that, as discussed above, estimating the Borda score relies on the assumption that the arm pair $(a_{t,s}^L, a_{t,s}^R)$ is selected independently.
If two arms are chosen according to an arbitrary policy, it may not be possible to obtain an unbiased estimate of the Borda score.
To address this, we introduce an epsilon-greedy-type procedure:
for each episode, we sample from a Bernoulli distribution with parameter $\gamma \in (0,1/2]$ (Line~\ref{line:sample_xi}) to determine whether to exploit or explore.

In exploitation episodes (Line~\ref{line:exploit_occ}), we solve the OLO problem based on the Borda score estimated from past observations.
As the OLO algorithm, we use follow-the-regularized-leader (FTRL) with the negative Shannon entropy regularizer, as defined in \cref{eq:ftrl_occ}.
The purpose of exploitation episodes is to leverage past observations, and no estimation is performed: the Borda score estimator in the exploitation episode is set to $\tilde{b}_t(s,i) = 0$ for all $(s,i) \in \calS \times [K]$.

In exploration episodes (Line~\ref{line:explore_occ}), several techniques are incorporated.
A key issue is that the Borda score estimator $\tilde{b}_t(s,i)$ (in Line~\ref{line:def_b_tilde}) can become arbitrarily large due to the presence of $q^{\pi_t}(s)$ in the denominator.
The value of $q^{\pi_t}(s)$, the probability of reaching state $s$ under the policy $\pi_t$ and transition kernel $P$, can be extremely small depending on the properties of $P$.
To address this, we focus on a uniformly sampled state $\tilde{s}_t \sim \unif(\calS)$ and use the policy $\pi^{r_{\tilde{s}_t}}$ for $r_{\tilde{s}_t} \in \argmax_{q \in \Omega} q(\tilde{s}_t)$ until we reach the state $\tilde{s}_t$ (Line~\ref{line:sample_state_use_rs}).
This allows us to avoid having terms such as $\min_{s \in \calS} q^{\pi}(s)$ for some $\pi$---which are known to appear in regret upper bounds in standard reinforcement learning settings (\eg~\citealt{agarwal20optimality})---in the denominator of the regret bound.

Once state $\tilde{s}_t$ is reached, the learner switches the policy from $\pi^{r_{\tilde{s}_t}}$ to the uniform policy (Line~\ref{line:switch_to_unif}) in order to perform unbiased estimation of the Borda score.
We employ an inverse-weighted approach to estimate the Borda score $\tilde{b}_t(s,i)$ (Line~\ref{line:def_b_tilde}), setting the estimator to a non-zero value only when the episode is for exploration ($\xi_t = 1$), the sampled state satisfies $\tilde{s}_t = s$, and state $s$ is visited ($\I_t(s) = 1$).
Specifically, we define $\tilde{b}_t(s,i)$ based on $\hat{b}_t(s,i)$ in~\cref{eq:def_b_hat} as
$
  \tilde{b}_t(s,i)
  =
  \frac{S}{q^{\pi_t}(s) \gamma} \ind{\xi_t = 1, \tilde{s}_t = s} \I_t(s) \hat{b}_t(s,i)
  .
$
One can see that, with \Cref{alg:pf_mdp_occ}, this estimator $\tilde{b}_t(s,i)$ is an unbiased estimator for $b_t(s,i)$ and thus the estimator $\hat{\ell}_t(s,a) = - \frac12 \prn{ \tilde{b}_t(s,a^L) + \tilde{b}_t(s, a^R) }$ is also an unbiased estimator for $\ell_t(s,a)$.

\LinesNumbered
\SetAlgoVlined  
\begin{algorithm}[t]
\textbf{Input:} exploration rate $\gamma \in (0,1/2]$, learning rate $\eta > 0$ \\
Compute $r_s = \argmax_{q \in \Omega} q(s)$ for each state $s \in \calS$. \label{line:r_s} \\
\For{each episode $t = 1, 2, \dots, T$}{
  Sample $\xi_t \sim \ber(\gamma)$. \label{line:sample_xi} \\
  \If{$\xi_t = 0$}{ \label{line:exploit_occ}
    Compute $\tilde{q}_t \in \Omega$ by FTRL with the negative Shannon entropy regularizer:
    \begin{equation}\label{eq:ftrl_occ}
      \tilde{q}_t \in \argmin_{q \in \Omega} 
      \set*{
        \inpr*{q, \sum_{\tau=1}^{t-1} \hat{\ell}_\tau} \!- \frac{1}{\eta} \H(q)
      }
      \com\
      \H(q) = \!\!\!\!\!\! \sum_{(s,a)\in\calS\times\calA} q(s,a) \log\prn*{\frac{1}{q(s,a)}}
      \per
    \end{equation} \\
    Execute policy $\tilde{\pi}_t \coloneqq \pi^{\tilde{q}_t}$ given by 
    $\tilde{\pi}_t(a \mid s) = {\tilde{q}_t(s,a)}/{\tilde{q}_t(s)}$.
  }
  \Else{ \label{line:explore_occ}
    Sample $\tilde{s}_t \sim \unif(\calS)$ and
    execute policy $\pi^{r_{\tilde{s}_t}}$ and stop if the learner reaches state $\tilde{s}_t$. \label{line:sample_state_use_rs} \\
    Sample two arms $a_{t,\tilde{s}_t}^L, a_{t,\tilde{s}_t}^R \sim \unif([K])$ uniformly at random. \label{line:switch_to_unif} \\
    Execute arbitrary policy for the remaining steps.
  }
  Compute 
  $
    \tilde{b}_t(s,i)
    \!=\!
    \frac{S}{\gamma q^{\pi_t}(s)} \ind{\xi_t = 1, \tilde{s}_t = s} \, \I_t(s) \, \hat{b}_t(s,i)
  $
  for $s \in \calS, i \in [K]$ 
  for $\hat{b}_t$ in \cref{eq:def_b_hat} \label{line:def_b_tilde}.
  \\
  Compute the estimator
  by
  $
    \hat{\ell}_t(s,a)
    =
    -\frac12
    \prn[\big]{ \tilde{b}_t(s,a^L) + \tilde{b}_t(s,a^R) }
  $
  for each $(s,a) \in \calS \times \calA$.
}
\caption{
  Global optimization for preference-based MDPs with Borda scores
}
\label{alg:pf_mdp_occ}
\end{algorithm}

\begin{remark}
In the algorithm for adversarial dueling bandits with a Borda winner~\citep{saha21adversarial}, instead of considering policies over $\calA = [K] \times [K]$ as we do, the algorithm samples $a_{t}^L$ and $a_{t}^R$ independently from a probability distribution over $[K]$ determined by FTRL.
This allows them to construct an unbiased estimator of the Borda score.
However, such an approach---selecting arms independently---is not applicable in our setting of PbMDPs with Borda scores since the transitions are determined by the joint arm pair $(a_{t,s}^L, a_{t,s}^R)$.
\end{remark}

\subsection{Regret upper bound}
The above algorithm achieves the following bound, the proof of which can be found in \Cref{app:proof_global_opt}.
\begin{theorem}\label{thm:reg_occ_meas}
  Suppose that $T \geq 8 S^2 K \log (SK) / H^2$.
  Then, with appropriate choices of $\eta$ and $\gamma$, \Cref{alg:pf_mdp_occ} achieves 
  $
    \E\brk{\Reg_T}
    =
    O \prn*{ \prn{H^2 S^2 K}^{1/3} T^{2/3} \prn{ \log(SK) }^{1/3} }
  $
\end{theorem}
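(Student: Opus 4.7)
The plan is to decompose the regret based on whether each episode is exploratory ($\xi_t = 1$) or exploitative ($\xi_t = 0$). Since every episode incurs at most $H$ instantaneous regret and $\E[\sum_t \xi_t] = \gamma T$, the exploration contribution is bounded by $\gamma T H$. In exploitation episodes, $\pi_t = \tilde{\pi}_t$ yields $q^{\pi_t} = \tilde{q}_t$, and since $\xi_t$ is independent of past observations, the exploitation contribution equals $(1-\gamma)\E\brk*{\sum_t \inpr*{\tilde{q}_t - q^{\pi^*}, \ell_t}}$. To control this term, I would first verify that $\hat{\ell}_t$ is an unbiased estimator of $\ell_t$: conditioned on $\xi_t = 1$ and $\tilde{s}_t = s$, the learner deterministically follows $\pi^{r_s}$ until reaching $s$, so $\Pr[\I_t(s)=1 \mid \xi_t=1, \tilde{s}_t=s] = r_s(s) = q^{\pi_t}(s)$, and the subsequent uniform sampling of the arm pair at $s$ makes $\hat{b}_t(s,i)$ conditionally unbiased for $b_t(s,i)$. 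Unbiasedness then permits replacing $\ell_t$ by $\hat{\ell}_t$ in the exploitation term.

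Next, I would invoke the standard FTRL regret bound for the negative Shannon entropy on the polytope $\Omega$. Because $\sum_{(s,a)} q(s,a) = H$ for every $q \in \Omega$, the regularizer range is $O(H \log(SK^2))$, and under a smallness condition on $\eta$---which the assumption $T \geq 8 S^2 K \log(SK)/H^2$ is calibrated to ensure---the local-norm analysis gives
\[
\sum_t \inpr*{\tilde{q}_t - q^{\pi^*}, \hat{\ell}_t}
\leq \frac{H \log(SK^2)}{\eta} + \eta \sum_t \sum_{(s,a)} \tilde{q}_t(s,a)\, \hat{\ell}_t(s,a)^2 .
\]
The crux is to bound the expected second-moment term by $O(S^2 K T/\gamma)$. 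Using $\hat{\ell}_t(s,a)^2 \leq \tfrac{1}{2}(\tilde{b}_t(s,a^L)^2 + \tilde{b}_t(s,a^R)^2)$ and marginalizing over $a = (a^L,a^R)$, the task reduces to bounding $\sum_i \tilde{\pi}_t^L(i \mid s)\, \tilde{b}_t(s,i)^2$ (and its symmetric $R$-counterpart), where $\tilde{\pi}_t^L(i \mid s) = \sum_j \tilde{\pi}_t((i,j) \mid s)$.

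The crucial observation is that $\tilde{b}_t(s,i)^2$ carries the factor $\ind{a_{t,s}^L = i}\, K^2$ once the uniform-sampling probabilities are substituted into $\hat{b}_t$, so
\[
\sum_i \tilde{\pi}_t^L(i \mid s)\, \ind{a_{t,s}^L = i} = \tilde{\pi}_t^L(a_{t,s}^L \mid s) ,
\]
which has conditional expectation $1/K$ when $a_{t,s}^L \sim \unif([K])$. Combined with $\Pr[\xi_t=1, \tilde{s}_t=s, \I_t(s)=1] = \gamma\, r_s(s)/S$, this yields $\E_t\brk*{\sum_i \tilde{\pi}_t^L(i \mid s)\, \tilde{b}_t(s,i)^2} = \frac{SK}{\gamma\, r_s(s)}$, and hence $\E_t\brk*{\sum_{(s,a)} \tilde{q}_t(s,a)\, \hat{\ell}_t(s,a)^2} \leq \frac{SK}{\gamma} \sum_s \frac{\tilde{q}_t(s)}{r_s(s)}$. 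Since $\tilde{q}_t(s) \leq \max_{q \in \Omega} q(s) = r_s(s)$ by definition of $r_s$, the last sum is at most $S$, producing the claimed $O(S^2 K/\gamma)$ per-episode bound.

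Putting everything together gives
\[
\E[\Reg_T] \leq \gamma T H + \frac{H \log(SK^2)}{\eta} + \eta \cdot \frac{S^2 K T}{\gamma} ,
\]
and optimizing over $\eta$ first and then $\gamma$ yields the claimed $O((H^2 S^2 K)^{1/3} T^{2/3} (\log(SK))^{1/3})$ bound. The main obstacle is the stability calculation: in particular, recognizing that the saving of a factor of $K$ (rather than $K^2$) comes from the uniform sampling of the arm pair at $\tilde{s}_t$, and controlling the $1/q^{\pi_t}(s)$ importance weight via the reach-state policy $\pi^{r_{\tilde{s}_t}}$, which ensures the denominator equals the maximal reachability $r_s(s)$ rather than a potentially tiny transition-dependent quantity.
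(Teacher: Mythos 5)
Your proposal is correct and follows essentially the same route as the paper's proof: the same exploration/exploitation decomposition costing $\gamma H T$, the same entropy-regularized FTRL bound over $\Omega$ with penalty $H\log(SK^2)$, the same second-moment calculation yielding $SK/(\gamma\, r_s(s))$ per state and hence $S^2K/\gamma$ after using $\tilde{q}_t(s) \le r_s(s)$, and the same tuning of $\eta$ and $\gamma$. The only cosmetic differences are that you bound the cross terms via $(x+y)^2 \le 2(x^2+y^2)$ whereas the paper notes they vanish in expectation since $\hat{b}_t(s,i)\hat{b}_t(s,j)=0$ for $i \ne j$, and that the assumption on $T$ is used in the paper to guarantee $\gamma \le 1/2$ rather than for the FTRL local-norm condition (which holds automatically because $\hat{\ell}_t \ge 0$).
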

The regret upper bound in \Cref{thm:reg_occ_meas} has a $\tilde{O}(S^{1/3})$ gap compared to the lower bound in \Cref{thm:lower_bound_adv_pref_borda}.
This suboptimality arises from the uniform sampling of states in exploration episodes.
Since $S$ can be very large in many reinforcement learning tasks, this dependence may be undesirable, which we address in the next section.
When $H = S = 1$, PbMDPs with Borda scores is reduced to the dueling bandits with a Borda winner and our bound becomes $\E\brk{\Reg_T} = O\prn*{(K \log K)^{1/3} T^{2/3}}$, which matches the minimax regret in \citet[Theorems~2 and~6]{saha21adversarial} up to a logarithmic factor.

\section{Policy optimization}\label{sec:policy_optimization}

In the global optimization approach of the previous section, the regret upper bound has a suboptimal dependence of $S^{2/3}$.
Moreover, the global optimization approach is computationally inefficient, as it requires solving a convex optimization problem over $\Omega$ in every episode, and is also difficult to extend to structured MDPs.
To address these issues, this section presents a policy optimization algorithm.

\subsection{Algorithm design}
Here, we describe the design of our algorithm.
The full pseudocode is provided in \Cref{alg:pf_mdp_policy_optimization}.

\paragraph{Reduction to online linear optimization over simplex}
One of the most representative approaches to implementing policy optimization in episodic MDPs is to leverage the following performance difference lemma~\citep{kakade02approximately}:
for any policies $\pi, \pi^* \in \Pi$ and function $\ell \colon \calS \times \calA \to [0,1]$, it holds that
$
  V^{\pi}(s_0; \ell)
  -
  V^{\pi^*}(s_0; \ell)
  =
  \sum_{s \in \calS}
  q^{\pi^*}(s)
  \inpr*{\pi(\cdot \mid s) - \pi^*(\cdot \mid s), Q^{\pi}(s, \cdot; \ell)}
  .
$
Using this equality, we can rewrite the regret as
\begin{equation}\label{eq:state_wise_reg}
  \Reg_T
  =
  \sum_{s \in \calS}
  q^{\pi^*}(s)
  \sumT
  \inpr*{\pi_t(\cdot \mid s) - \pi^*(\cdot \mid s), Q^{\pi_t}(s, \cdot; \ell_t)}
  \per
\end{equation}
Hence, it suffices to solve an online linear optimization problem over $\Delta(\calA)$ with gradient vector $Q^{\pi_t}(s, \cdot; \ell_t) \in [0,H]^A$ for each state $s \in \calS$ to minimize the regret.
As in the case of the previous section,
we need to estimate the Q-function $Q^{\pi_t}(s, a; \ell_t)$ used for the gradient from observed trajectories.
Below, we will provide a Q-function estimator
and present an FTRL-based algorithm inspired by~\citet{luo21policy}.

\LinesNumbered
\SetAlgoVlined  
\begin{algorithm}[t]
\textbf{Input:} learning rate $\eta > 0$, exploration rate $\gamma \in (0,1/2]$, bias parameter $\delta > 0$, constant $c > 0$ \\

\For{each episode $t = 1, 2, \dots, T$}{
  \For{each state $s \in \calS$}{
  Sample $\xi_t(s) \sim \mathsf{Ber}(\gamma)$. \label{line:sample_xi_s_po} \\
  \lIf*{$\xi_t(s) = 0$}{
    Set 
    $\tilde{\pi}_t(\cdot \mid s) \propto \exp\prn[\big]{ - \eta \sum_{\tau=1}^{t-1} \prn{ \hat{Q}_{\tau} - M_{\tau}} }$ 
    for $s \in \calS$
    and 
    $\pi_t \leftarrow \tilde{\pi}_t$.
    \label{line:exploit_po}
  } \\
  \lElse*{
    Set $\pi_t(\cdot \mid s) \leftarrow \pi_0(\cdot \mid s) \coloneqq 1/A$.  \label{line:explore_po} \\
  }
  }
  Execute policy $\pi_t$, obtain a trajectory $\set{\prn{s_{t,h}, a_{t,h}, o_t(s_{t,h}, a_{t,h})}}_{h=0}^{H-1}$,
  and compute
  \begin{equation}\label{eq:def_l_hat}
    \hat{\ell}_t(s,a)
    \!=\!
    -
    \frac{\ind{\xi_t(s)\!=\! 1}}{\gamma}
    \hat{B}_t(s,a)
    \com
    \
    \hat{B}_t(s,a)
    \!=\!
    \frac{1}{2}
    \prn{\hat{b}_t(s,a^L) \!+\! \hat{b}_t(s,a^R)}
    \com
    \
    \hat{b}_t(s,i) \!=\! \mbox{\Cref{eq:def_b_hat}}
    \com
    \vspace{-2pt}
  \end{equation}
  \\
  For
  $\pi^{\circ}_t(\cdot \mid s) = (1 - \gamma) \tilde{\pi}_t(\cdot \mid s) + \gamma \frac{1}{A}$ and $q_t = q^{\pi_t^{\circ}} \in \Omega$,
  estimate the Q-function by
  \vspace{-2pt}
  \begin{equation}\label{eq:def_Q_hat}
    \hat{Q}_t(s,a)
    \!=\!
    \frac{q_t(s,a)}{q_t(s,a) \!+\! \delta}
    \frac{\I_t(s,a)}{q_t(s)/A}
    \hat{\ell}_t(s, a)
    +
    \frac{\I_t(s,a)}{q_t(s,a) \!+\! \delta}
    \hat{L}_{t,h(s)+1}
    \com \
    \hat{L}_{t,h}
    \!=\!
    \sum_{k=h}^{H-1} \!
    w_{t,k}
    \hat{\ell}_t(s_{t,k}, a_{t,k})
    \com
    \vspace{-8pt}
  \end{equation} 
  where 
  $
  w_{t,k}
  =
  {\pi^{\circ}_t(a_{t,k} \mid s_{t,k})}/{\pi_0(a_{t,k} \mid s_{t,k})}
  $.
  \vspace{2pt}
  \\
  Compute the bonus term by
  \vspace{-5pt}
  \begin{equation}\label{eq:def_bonus_M_po}
    M_t(s,a)
    =
    Q^{\pi_t}(s,a; m_t)
    \com\quad
    m_t(s)
    =
    m_t(s,a)
    =
    \sum_{a' \in \calA}
    \frac{c \, \delta H \tilde{\pi}_t(a' \mid s)}{q_t(s,a') + \delta}
    \per
  \end{equation}
  \vspace{-10pt}
}
\caption{
  Policy optimization for preference-based MDPs with Borda scores
}
\label{alg:pf_mdp_policy_optimization}
\end{algorithm}

\paragraph{Epsilon-greedy-type algorithm}
As in the case of global optimization, we consider an epsilon-greedy-type algorithm:
for each state $s$, we sample $\xi_t(s)$ from a Bernoulli distribution with parameter~$\gamma$ (Line~\ref{line:sample_xi_s_po}), and based on its value, we determine whether to perform exploration or exploitation at that state.
In exploitation episodes (Line~\ref{line:exploit_po}), we use FTRL with the negative Shannon entropy regularizer over $\Delta(\calA)$ w.r.t.~the gradient defined by the difference between an estimated Q-function $\hat{Q}_t$ and a bonus term $M_t$ (both defined below) as the OLO algorithm.
By contrast to the global optimization case, this update admits a closed-form expression.
The estimator $\hat{Q}_t$ is carefully designed, as explained below.
In exploration episodes (Line~\ref{line:explore_po}), in contrast to the global optimization approach, exploration in the policy optimization approach is remarkably simple:
we use the uniform policy $\pi_0$ for all states.

\paragraph{Q-function estimation}
We now describe how to estimate Q-functions.
The construction of the Borda score estimator $\hat{b}_t(s,i)$ is the same as in \Cref{sec:global_optimization}.
In contrast to the previous section, we do not use $\tilde{b}_t(s,i)$ here; instead, we directly estimate Q-functions based on $\hat{b}_t(s,i)$ in~\eqref{eq:def_b_hat}.
To do so, we first compute the unbiased inverse-weighted loss estimators $\hat{\ell}_t$ in \cref{eq:def_l_hat} using $\hat{b}_t$.
We then define the Q-function estimator $\hat{Q}_t(s,a)$ as in~\cref{eq:def_Q_hat}.
This $\hat{Q}_t(s,a)$ is an unbiased estimator of $Q^{\pi_t}(s,a)$ when $\delta = 0$ (see \Cref{app:proof_policy_opt_23} for the proof).
The estimator $\hat{Q}_t(s,a)$ is carefully designed:
since the Borda score estimator is non-zero only in the exploration episodes, the definition of the cumulative loss estimator $\hat{L}_{t,h}$ for steps after $h$ incorporates importance weighting to correct for this bias.
Note that in our algorithm, importance weighting is not needed for the transitions.

\paragraph{Bonus term}
Our algorithm computes a bonus term defined in \cref{eq:def_bonus_M_po}, which is used in FTRL.
Intuitively, this term encourages exploration of states that are visited less frequently.
In the regret analysis, it serves to correct for the discrepancy between the occupancy measure $q^{\pi^*}$ of the optimal policy and the occupancy measure $q_t$.
For further background, we refer the reader to \citet{luo21policy}.

\subsection{Regret upper bound}
The above algorithm achieves the following bound, the proof of which can be found in \Cref{app:proof_policy_opt_23}.
\begin{theorem}\label{thm:regret_po_23}
  Suppose that $T \geq \prn{S K^5 \log K} / (\alpha H^3)$.
  Then, with appropriate choices of $c > 0$, $\eta \leq 1/(cH)$, $\gamma \in (0,1/2]$, and $\delta > 0$, \Cref{alg:pf_mdp_policy_optimization} achieves
  $
    \E\brk{\Reg_T}
    =
    \tilde{O}\prn[\big]{
      (H^6 S K^5)^{1/3} T^{2/3}
      +
      H^4 T^{1/3}
      / \prn{S K^5}^{1/3}
    }
    .
  $
\end{theorem}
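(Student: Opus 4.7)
The plan is to follow the policy-optimization analysis template of \citet{luo21policy}, adapted to the Borda-score Q-estimator. By the performance difference lemma \cref{eq:state_wise_reg}, we can rewrite $\E[\Reg_T]$ as a sum over states $s$, weighted by $q^{\pi^*}(s)$, of the per-state online linear regret against $Q^{\pi_t}(s,\cdot;\ell_t)$. The actual played policy $\pi_t$ agrees with the FTRL iterate $\tilde{\pi}_t$ except on states where $\xi_t(s)=1$, so the first step is to replace $\pi_t$ by $\tilde{\pi}_t$ at the cost of an exploration term bounded by $O(\gamma H^2 T)$ (using $\sum_s q^{\pi^*}(s)=H$).

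Next I would insert the FTRL gradient $g_t := \hat{Q}_t - M_t$ by writing
\[
\inpr*{\tilde{\pi}_t-\pi^*, Q^{\pi_t}}
= \inpr*{\tilde{\pi}_t-\pi^*, Q^{\pi_t} - \E_t[\hat{Q}_t] + M_t}
+ \inpr*{\tilde{\pi}_t-\pi^*, \E_t[\hat{Q}_t]-\hat{Q}_t}
+ \inpr*{\tilde{\pi}_t-\pi^*, g_t},
\]
noting that the middle martingale term vanishes in expectation. For the third term, the standard FTRL-with-entropy bound on $\Delta(\calA)$ gives per state
\[
\sumT \inpr*{\tilde{\pi}_t-\pi^*,g_t} \le \frac{\log A}{\eta} + \eta \sumT \sum_a \tilde{\pi}_t(a\mid s) g_t(s,a)^2,
\]
valid under the small-step condition $\eta\|g_t\|_\infty\le 1$; this is what drives the choice $\eta\le 1/(cH)$. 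The bias term is handled by a direct conditional expectation computation: because under $\xi_t(s)=1$ the pair $(a_{t,s}^L,a_{t,s}^R)$ is uniform on $[K]^2$, $\hat{\ell}_t$ is unbiased for $\ell_t$, and hence $\E_t[\hat{Q}_t(s,a)]$ equals $Q^{\pi_t}(s,a;\ell_t)$ shrunk by the factor $q_t(s,a)/(q_t(s,a)+\delta)$, with the downstream $w_{t,k}$ reweighting ensuring unbiasedness of $\hat{L}_{t,h(s)+1}$. The residual bias after this shrinkage is then a nonnegative quantity of order $\delta H\tilde{\pi}_t(a\mid s)/(q_t(s,a)+\delta)$ at each future visit, which is \emph{exactly} what $M_t = Q^{\pi_t}(\cdot;m_t)$ is designed to majorize; telescoping this inequality against $q^{\pi^*}$ cancels the bias at the price of an additional $\sum_s q^{\pi^*}(s) \inpr{\tilde\pi_t, M_t}$ term that is absorbed into the FTRL stability analysis.

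The main obstacle, and where most of the proof effort goes, is bounding the stability term $\eta \E\sum_{s,a} q^{\pi^*}(s)\tilde{\pi}_t(a\mid s) g_t(s,a)^2$. Since $A=K^2$, the uniform-exploration Borda estimator satisfies $|\hat{b}_t(s,i)|\le K$ and $\hat{\ell}_t$ has conditional second moment $O(K^2/\gamma)$ (the factor $\gamma^{-2}$ from the $\ind{\xi_t(s)=1}/\gamma$ scaling cancels against the $\gamma$ probability of exploration in one direction). Plugging this into $\hat{Q}_t$, the inverse-propensity factor $A/q_t(s)=K^2/q_t(s)$ and the cascade of $w_{t,k}$'s for $\hat{L}_{t,h}$ inflate the second moment by further $K^2 H^2$, so after summing over $a$ against $\tilde{\pi}_t(a\mid s)\sim 1/A$ and applying the crucial change-of-measure inequality $q^{\pi^*}(s,a)\le (A q_t(s,a))/(q_t(s,a)+\delta)$ that is the heart of the bonus trick, the stability term is of order $\eta T \cdot H^2 K^4 /\gamma$, up to lower-order bias-residue terms of order $H^4/\delta$. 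Finally, balancing the three dominant contributions $\gamma H^2 T$, $(\log A)/\eta \cdot H$, and $\eta H^2 K^4 T / \gamma$, together with the bias-bonus residual, and choosing $\delta$ small enough so that $\delta H \ll 1$, one optimizes $\gamma\asymp (SK^5/(H^3 T))^{1/3}$, $\eta$ correspondingly, and $\delta$ of similar scale, producing the dominant rate $\tilde{O}((H^6 S K^5)^{1/3}T^{2/3})$. The auxiliary $H^4 T^{1/3}/(SK^5)^{1/3}$ term is precisely the residual from the bias/bonus analysis after this optimization, which is subdominant in the regime $T\ge SK^5\log K/(\alpha H^3)$.
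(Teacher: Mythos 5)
Your overall architecture---performance difference lemma, entropy-regularized FTRL over $\Delta(\calA)$ with gradient $\hat{Q}_t - M_t$, unbiasedness of the Borda/Q estimators up to the $\delta$-shrinkage, and a bonus term that pays for the bias---is exactly the paper's, and your final parameter choices and rate are the ones the paper obtains. But the quantitative core, namely where the factor $(SK^5)^{1/3}$ comes from, is not correctly derived, and your intermediate bounds are inconsistent with the rate you assert. You bound the stability term by $\eta T H^2 K^4/\gamma$ and propose balancing $\gamma H^2 T$, $H\log A/\eta$, and $\eta H^2 K^4 T/\gamma$; that balance yields $(H^5K^4\log K)^{1/3}T^{2/3}$ with \emph{no dependence on $S$ at all}, not $(H^6SK^5)^{1/3}T^{2/3}$. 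In the paper the dominant contribution is not the raw stability term but the \emph{cost of the bonus}: the stability term equals $\frac{2\eta HAK}{\gamma c\delta}\,\E\brk{\sum_t V^{\pi^*}(s_0;m_t)}$ plus $\eta c^2H^5T$ (the latter is the true source of the $H^4T^{1/3}/(SK^5)^{1/3}$ term, via $\|M_t\|_\infty\le cH^2$, not a ``bias residual''), and it is cancelled---together with the bias term $\frac1c\E\brk{\sum_t V^{\pi^*}(s_0;m_t)}$---against the explicit $-\E\brk{\sum_t V^{\pi^*}(s_0;m_t)}$ in the decomposition by choosing $\delta=4\eta HAK/\gamma$ and $c=2$. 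The price is $\E\brk{\sum_t V^{\pi_t}(s_0;m_t)}\le 2c\delta HSAT$, and substituting $\delta$ with $A=K^2$ gives $\Theta(\eta H^2 SK^5 T/\gamma)$: this is the term carrying both the $S$ and the fifth power of $K$, and it arises from summing the bonus over all $SA$ state--action pairs, a step your accounting never performs.

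Two further steps would fail as written. First, the ``change-of-measure inequality'' $q^{\pi^*}(s,a)\le A\,q_t(s,a)/(q_t(s,a)+\delta)$ is false in general (take $q^{\pi^*}(s,a)=1$ and $q_t(s,a)\ll\delta/A$); the paper needs no such inequality, since the passage from $q^{\pi^*}$-weighted to $q_t$-weighted quantities happens only through the $V^{\pi^*}(s_0;m_t)$ cancellation just described. Second, the exploration cost is $\gamma(1+cH)H^2T=\Theta(\gamma H^3T)$, not $O(\gamma H^2T)$, because the $\gamma$-mixing difference is paired with $M_t$ of magnitude $cH^2$; this extra factor of $H$ is precisely what makes the balanced rate $(H^6SK^5)^{1/3}T^{2/3}$ rather than $(H^3SK^5)^{1/3}T^{2/3}$. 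Since you state the correct $\gamma\asymp(SK^5/(H^3T))^{1/3}$ and the correct final bound while your own intermediate estimates would produce a different (and $S$-free) answer, the proposal as written does not establish the theorem; the missing ingredient is the bonus-cost computation $\E\brk{\sum_t V^{\pi_t}(s_0;m_t)}\le 2c\delta HSAT$ and the exact cancellation of the $V^{\pi^*}(s_0;m_t)$ terms.
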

The regret upper bound in \Cref{thm:regret_po_23} improves the dependence on $S$ from $S^{2/3}$ to $S^{1/3}$ compared to \Cref{thm:reg_occ_meas}, while with the same dependence on $T$ and worse dependence on $H$ and $K$.
In addition, it benefits from the advantages of policy optimization: efficient computation and natural extensibility to structured MDPs.

\subsection{Extension to unknown-transition case}\label{subsec:unknown_po}
The results so far are for the known-transition setting.
Our analysis can be extended to the unknown-transition setting by adopting an approach similar to existing algorithms for unknown transitions~\citep{jin20learning}.
Briefly, we construct confidence intervals for the transition kernel based on transitions observed in past episodes, and then build loss estimators using the corresponding upper and lower occupancy measures.
The details are deferred to \Cref{app:po_unknown}.
\section{Conclusion, limitation, and future work}\label{sec:conclusion}
This study proposed a new framework of episodic MDPs with adversarial preferences, called preference-based MDPs (PbMDPs).
In particular, we analyzed regret upper and lower bounds in PbMDPs where the loss functions are defined based on the Borda scores.
We established the lower bound by constructing a new instance that differs from those used in existing studies on MDPs with adversarial losses.
We also presented two algorithms based on both global optimization and policy optimization approaches.

There are many interesting directions for future work.
One limitation of this work is the gap between the regret upper and lower bounds.
We believe that the lower bound cannot be improved, and 
therefore, it is an important direction to investigate whether the upper bound can be improved through new algorithmic designs or refined analyses.
A second limitation is that our study focuses solely on the Borda score as the loss functions of PbMDPs.
It would be valuable to consider alternative preference-based scores---\eg~the Condorcet winner~\citep{urvoy13generic,zoghi14relative,komiyama15regret}---and investigate algorithms and regret analysis for such formulations.
Another important direction, motivated by prior work on dueling bandits \citep{saha21adversarial} and MDPs \citep{simchowitz19nonasymptotic}, is to investigate the stochastic setting of PbMDPs, in which the preference function is fixed throughout all episodes.
In particular, it would be desirable to achieve best-of-both-worlds performance, optimal in both stochastic and adversarial settings.
It is an open question whether techniques developed for best-of-both-worlds reinforcement learning in MDPs \citep{jin2020simultaneously,dann23best} can be combined with learning rates for problems with minimax regret of $T^{2/3}$ \citep{tsuchiya24simple}.
Extending the policy optimization approach to settings where the loss functions or transition kernel have structural properties (see \eg~\citealt{jin20provably,cai20provably,neu21online}) is also an important direction for future work.

\bibliographystyle{plainnat}
\bibliography{ref.bib}

\newpage

\appendix

\section{Additional related work}\label{app:additional_discussion}
In this section, we discuss additional related work that could not be included in the main body.

The problem of learning in preference-based MDPs can be viewed as an extension of adversarial dueling bandit problems to settings with state transitions. 
As discussed in the main body, dueling bandit problems have been studied under various assumptions. 
For a more comprehensive background, we refer the reader to the survey on dueling bandits by \citet{bengs21preference}.

In the main body, we noted that extending to notions such as the Condorcet winner would be an interesting direction. 
Another promising direction, inspired by the existing literature on dueling bandits, is to consider formulations based on the von Neumann winner, originally proposed in the context of contextual dueling bandits by \citet{dudik15contextual}. 
The von Neumann winner is defined as a randomized policy (over the set of arms) that beats or ties with any randomized policy of the opponent arm. 
Similar to the Borda winner, the von Neumann winner is guaranteed to exist and enjoyes several desirable properties. 
It would also be interesting to explore the connection between our framework of preference-based MDPs and contextual dueling bandits under stochastic contexts (\eg~\citealt{saha21optimal}).

Our work can also be viewed in the context of preference-based reinforcement learning (PbRL). 
Most existing studies on PbRL assume a fixed probabilistic structure underlying the preferences that remains unchanged across all episodes. 
For a detailed background, we refer the reader to the survey on PbRL by \citet{wirth17survey} and the references in more recent works \citep{christiano17deep,novoseller20dueling,joey23inverse,saha23dueling,wu24making}.
As discussed in the main body, to the best of our knowledge, our work is the first to establish the problem of learning from adversarial preferences with state transitions.
\section{Deferred proofs of lower bounds from \Cref{sec:lower_bounds}}\label{app:proof_lower_bounds}
This section provides deferred proofs of lower bounds from \Cref{sec:lower_bounds}.
Here, we use $\mathrm{D}(\P,\P')$ to denote the Kullback--Leibler (KL) divergence between distributions $\P$ and $\P'$, and use $\mathrm{kl}(p,q)$ to denote the KL divergence between Bernoulli distributions with means $p$ and $q$.
Let $\tilde{\calS} = \calS \setminus \set{s_0}$ (recall that we may use $\calS$ to denote $\calS \setminus \set{s_H}$ for simplicity) and we use 
\begin{equation}
  N_T(s, a)
  =
  \sumT 
  \frac{1}{S'}
  \,
  \pi_t(a \mid s)
  \n
\end{equation}
to denote the expected number of times the state-action pair $(s,a) \in \tilde{\calS} \times \calA$ is visited.

\subsection{Proof of \Cref{thm:lower_bound_adv_loss}}
Here we provide the proof of \Cref{thm:lower_bound_adv_loss}.

\begin{proof}[Proof of \Cref{thm:lower_bound_adv_loss}]
Let $\epsilon \in (0, 1/4]$.
We consider the following instances.
For each $\pio \in \Pidet$, we define an episodic MDP with stochastic losses $\calM_{\mathsf{loss}}(\pio)$ as follows:
\begin{itemize}[topsep=-3pt, itemsep=0pt, partopsep=0pt, leftmargin=25pt]
  \item The transitions occur uniformly at random to states in the next layer; that is, for any $(s, a) \in \calS_h \times \calA$, it holds that $P(s' \mid s, a) = 1 / S_{h+1}$ for all $s' \in \calS_{h+1}$.
  \item Each loss function follows a Bernoulli distribution given by
  \begin{equation}
    \ell_t(s,a)
    \sim
    \begin{cases}
      \ber\prn{1/2 - \epsilon} & \mbox{if} \ a = \pio(s) \com  \\
      \ber\prn{1/2} & \mbox{otherwise} \per
    \end{cases}
    \n
  \end{equation}
\end{itemize}
We use $\E^{(\pio)}[\cdot]$ to denote the expectation under the episodic MDP $\calM_{\mathsf{loss}}(\pio)$ and we recall $\tilde{\calS} = \calS \setminus \set{s_0}$.
Then, we can rewrite the regret under $\calM_{\mathsf{loss}}(\pio)$ as
\begin{align}
  \E^{(\pio)} \brk{\Reg_T}
  &\geq
  \epsilon (H - 1) T 
  -
  \epsilon \sum_{s \in \tilde{\calS}} \E^{(\pio)} \brk*{ N_T(s, \pio(s)) }
  \nn
  &=
  \frac{\epsilon T}{S'}
  \prn[\Bigg]{
    S' (H - 1) - \frac{S'}{T}\sum_{s \in \tilde{\calS}} \E^{(\pio)} \brk*{ N_T(s, \pio(s)) }
  }
  \per
  \label{eq:regret_decompose_lb}
\end{align}
In what follows, we will upper bound $\E^{(\pio)} \brk*{ N_T(s, \pio(s)) }$.
To do so, 
for each $(\pio, \stil) \in \Pidet \times \calS$,
we consider the following $\calM_{\mathsf{loss}}(\pio, \stil)$, an instance of episodic MDPs with stochastic losses:
\begin{itemize}[topsep=-3pt, itemsep=0pt, partopsep=0pt, leftmargin=25pt]
  \item The transitions occur uniformly at random to states in the next layer; that is, for any $(s, a) \in \calS_h \times \calA$, it holds that $P(s' \mid s, a) = 1 / S_{h+1}$ for all $s' \in \calS_{h+1}$.
  \item Each loss function follows a Bernoulli distribution given by 
  \begin{equation}
    \ell_t(s,a)
    \sim
    \begin{cases}
      \ber\prn{1/2 - \epsilon} & \mbox{if} \ a = \pio(s) \ \mbox{and} \ s \neq \stil \com \\
      \ber\prn{1/2} & \mbox{otherwise} \per
    \end{cases}
    \n
  \end{equation}
\end{itemize}
Note that the only difference between $\calM_{\mathsf{loss}}(\pio)$ and $\calM_{\mathsf{loss}}(\pio, \stil)$ lies in the expected value of the loss at the state-action pair $(\stil, \pio(\stil))$.
We use $\E^{(\pio,\stil)}[\cdot]$ to denote the expectation under $\calM_{\mathsf{loss}}(\pio,\stil)$.

Let $\P_{\pio}$ and $\P_{\pio,\stil}$ be the probability distribution induced by $\calM_{\mathsf{loss}}(\pio)$ and $\calM_{\mathsf{loss}}(\pio, \stil)$, respectively.
Then, using the fact that $\frac{S'}{T} N_T(s, \pio(s)) \in [0,1]$ for $s \neq s_0$ and Pinsker's inequality,
for any $\stil \in \calS \setminus \set{s_0}$ we have
\begin{align}
  \frac{S'}{T} \E^{(\pio)} \brk*{ N_T(\stil, \pio(\stil)) }
  &\leq
  \frac{S'}{T} \E^{(\pio,\stil)} \brk*{ N_T(\stil, \pio(\stil)) }
  +
  \mathrm{TV}(\P_{\pio}, \P_{\pio,\stil})
  \nn
  &\leq
  \frac{S'}{T} \E^{(\pio,\stil)} \brk*{ N_T(\stil, \pio(\stil)) }
  +
  \sqrt{
    \frac12 \mathrm{D}(\P_{\pio,\stil},\P_{\pio})
  }
  \per
  \label{eq:bretagnolle_huber}
\end{align}
Then, from the chain rule of the KL divergence,
we can evaluate the KL divergence in the last inequality as
\begin{equation}\label{eq:kl_chain}
  \mathrm{D}(\P_{\pio,\stil},\P_{\pio})
  =
  \E^{(\pio,\stil)} \brk*{N_T(\stil, \pio(\stil))} \,
  \mathrm{kl}(1/2, 1/2 - \epsilon)
  \leq
  4 \epsilon^2 
  \E^{(\pio,\stil)} \brk*{N_T(\stil, \pio(\stil))}
  \com
\end{equation}
where we used $\mathrm{kl}(1/2, 1/2 - \epsilon) \leq 4 \epsilon^2$ for $\epsilon \in (0, 1/4]$.
Taking the uniform average over $\Pidet$ for the RHS of \cref{eq:kl_chain}, for any $\stil \in \tilde{\calS}$ we have
\begin{align}
  &
  \E_{\pio \sim \unif(\Pidet)} 
  \brk*{
    \E^{(\pio,\stil)} \brk*{N_T(\stil, \pio(\stil))}
  }
  \nn
  &=
  \sum_{a \in \calA}
  \Pr\brk{ \pio(\stil) = a } \
  \E_{\pio \sim \unif(\Pidet)}
  \brk*{
  \E^{(\pio,\stil)}\brk*{
    N_T(\stil, \pio(\stil))
  }
  \relmiddle|
  \pio(\stil) = a 
  }
  \nn
  &=
  \frac{1}{A}
  \sum_{a \in \calA}
  \E_{\pio \sim \unif(\Pidet)}
  \brk*{
  \E^{(\pio,\stil)}\brk*{
    N_T(\stil, a)
  }
  }
  =
  \frac{T}{S' A}
  \com
  \label{eq:unif_Pidet_N}
\end{align}
where the last equality follows from the definition of $N_T$.
By summing over $\stil \in \tilde{\calS}$ in \cref{eq:unif_Pidet_N},
\begin{equation}\label{eq:unif_pi_sum_s_N}
  \sum_{\stil \in \tilde{\calS}}
  \E_{\pio \sim \unif(\Pidet)} 
  \brk*{
    \E^{(\pio,\stil)} \brk*{N_T(\stil, \pio(\stil))}
  }
  =
  \frac{(S-2) T}{S' A} 
  =
  \frac{(H - 1) T}{A}
  \com
\end{equation}
where we used $S' = (S-2) / (H-1)$ in the last equality.
Using the last inequality, we also have
\begin{align}
  &
  \E_{\pio \sim \unif(\Pidet)}\brk*{
    \sum_{\stil \in \tilde{\calS}}
    \sqrt{
      \frac12 \mathrm{D}(\P_{\pio,\stil},\P_{\pio})
    }
  }
  \leq
  \epsilon \,
  \E_{\pio \sim \unif(\Pidet)}\brk*{
    \sum_{\stil \in \tilde{\calS}}
    \sqrt{
      2 \E^{(\pio,\stil)} \brk*{N_T(\stil, \pio(\stil))}
    }
  }
  \nn
  &\qquad\leq
  \sqrt{
    2 (S - 2) \sum_{\stil \in \tilde{\calS}}  \E_{\pio \sim \unif(\Pidet)} \E^{(\pio,\stil)} 
    \brk*{N_T(\stil, \pio(\stil))}
  }
  \leq
  \sqrt{
    \frac{2 H S T}{A}
  }
  \com
  \label{eq:lb_upper_1}
\end{align}
where the first inequality follows from \cref{eq:kl_chain},
the second inequality follows from the Cauchy--Schwarz inequality and Jensen's inequality,
and the last inequality follows from \cref{eq:unif_pi_sum_s_N}.

Finally, combining everything together and recalling that  $S' = (S - 2)/(H-1)$, we have
\begin{align}
  \max_{\pio \in \Pidet}
  \E^{(\pio)} \brk*{\Reg_T}
  &\geq
  \E_{\pio \sim \unif(\Pidet)} \brk*{
    \E^{(\pio)} \brk*{\Reg_T}
  }
  \nn
  &\geq
  \frac{\epsilon T}{S'}
  \prn[\Bigg]{
    \frac{S}{2}
    - 
    \frac{S'}{T} 
    \sum_{s \in \tilde{\calS}}
    \E_{\pio \sim \unif(\Pidet)} \brk*{ 
      \E^{(\pio)} \brk*{ N_T(s, \pio(s)) } 
    }
  }
  \tag{by \cref{eq:regret_decompose_lb} and $H \geq 3$}
  \nn
  &\geq
  \frac{\epsilon T}{S'}
  \prn[\Bigg]{
    \frac{S}{2} 
    - 
    \frac{S'}{T}
    \E_{\pio \sim \unif(\Pidet)}\brk[\Bigg]{
      \sum_{s \in \tilde{\calS}} 
      \E^{(\pio,s)} \brk*{N_T(s, \pio(s))}
    }
    \nn
    &\qquad\qquad\qquad\quad
    -
    \epsilon \,
    \E_{\pio \sim \unif(\Pidet)} \brk[\Bigg]{
      \sum_{s \in \tilde{\calS}} 
      \sqrt{
        \frac12 \mathrm{D}(\P_{\pio,s},\P_{\pio})
      }
    }
  }
  \tag{by \cref{eq:bretagnolle_huber}}
  \nn
  &\geq
  \frac{\epsilon T}{S'}\prn*{
    \frac{S}{2}
    -
    \frac{2 H S'}{A}
    -
    \epsilon \sqrt{\frac{2 H S T}{A}}
  }
  \tag{by \cref{eq:unif_pi_sum_s_N,eq:lb_upper_1}}
  \nn
  &\geq
  \frac{\epsilon H T}{2} \prn*{
    \frac12
    -
    \frac{4}{A}
    -
    \epsilon \sqrt{\frac{2 H T}{S A}}
  }
  \com
  \n
\end{align}
where the last inequality follows from the assumption that $H \geq 3$.
Choosing the optimal $\epsilon$ in the last inequality and using the assumption that $A \geq 5$ complete the proof.
\end{proof}

\subsection{Proof of \Cref{thm:lower_bound_adv_pref_borda}}
Here we provide the proof of \Cref{thm:lower_bound_adv_pref_borda}.
We use $A[i, j]$ to denote the $(i,j)$-element of matrix $A$.
For simplicity, we assume that the number of arms $K$ is even, and let $\Ktil = K / 2$.
We separate the set of arms into two sets:
let $\calK_g = [\Ktil] = \set{1, \dots, K / 2}$ be the set of `good' arms,
and $\calK_b = [K] \setminus \calK_g$ be the set of `bad' arms.
We then define preference \emph{matrices} $\Pmat \in [0,1]^{K \times K}$ and $\Pmat^{(m)} \in [0,1]^{K \times K}$ for each $m \in [\Ktil]$ by
\begin{equation}  
\Pmat =
\left(
\renewcommand{\arraystretch}{1.4}
\begin{array}{@{\hspace{4pt}}c|c@{\hspace{4pt}}}
  0.5\,\mathbf{1}_{\Ktil\times\Ktil}
  &
  0.9\,\mathbf{1}_{\Ktil\times\Ktil}
  \\[8pt]\hline\rule{0pt}{12pt}
  0.1\,\mathbf{1}_{\Ktil\times\Ktil}
  &
  0.5\,\mathbf{1}_{\Ktil\times\Ktil}
\end{array}
\right)
\com\
\Pmat^{(m)} =
\left(
\renewcommand{\arraystretch}{1.4}
\begin{array}{@{\hspace{4pt}}c|c@{\hspace{4pt}}}
  0.5\,\mathbf{1}_{\Ktil\times\Ktil}
  &
  0.9\,\mathbf{1}_{\Ktil\times\Ktil}
    + 2 \epsilon\, e_m \mathbf 1_{\Ktil}^{\!\top}
  \\[8pt]\hline\rule{0pt}{12pt}
  0.1\,\mathbf{1}_{\Ktil\times\Ktil}
    - 2 \epsilon\, \mathbf 1_{\Ktil} e_m^{\!\top}
  &
  0.5\,\mathbf{1}_{\Ktil\times\Ktil}
\end{array}
\right)
\com
\n
\end{equation}
where $\mathbf{1}_{\Ktil \times \Ktil}$ denotes the $\Ktil \times \Ktil$ all-ones matrix.
Note that the preference matrix $\Pmat^{(m)}$ is obtained by replacing the $m$-th column of the lower-left block of $\Pmat$ from $0.1$ to $0.1 - 2\epsilon$, and replacing the $m$-th row of the upper-right block from $0.9$ to $0.9 + 2\epsilon$.

Suppose that 
the preference function $\Pmat_t \colon \calS \times [K] \times [K] \to [0,1]$ is given by
$\Pmat_t(s, i, j) = \Pmat[i, j]$ at some state $s \in \calS$.
Then, the shifted Borda score defined in~\cref{eq:def_borda_ell} at state $s$ is 
\begin{equation}
  b_t(s, i)
  =
  \begin{cases}
    - 0.3 & \mbox{if} \ i \in \calK_g \com \\ 
    - 0.7 & \mbox{if} \ i \in \calK_b \per
  \end{cases}
  \n
\end{equation}

If $\Pmat_t(s, i, j) = \Pmat^{(m)}[i, j]$ at some state $s \in \calS$, then the shifted Borda score at state $s$ is 
\begin{equation}
  b_t(s, i)
  =
  \begin{cases}
    - 0.3 & \mbox{if} \ i \in \calK_g \setminus \set{m}  \com \\ 
    - 0.3 + \epsilon & \mbox{if} \ i = m  \com \\ 
    - 0.7 - 2 \epsilon / K & \mbox{if} \ i \in \calK_b \per
  \end{cases}
  \n
\end{equation}
under which the optimal arm is $m \in \calK_g$.
We will use the above definitions to prove \Cref{thm:lower_bound_adv_pref_borda}.
\begin{proof}[Proof of \Cref{thm:lower_bound_adv_pref_borda}]
Let $\epsilon \in (0, 1/20]$ and we use
\begin{equation}
  \Pipref 
  = 
  \set{ \pi \colon s \mapsto (i, i) \colon s \in \calS,\, i \in \calK_g}
  \n
\end{equation}
to denote the set of all policies in the preference-based MDP that choose the same good arm, and this satisfies $\abs{\Pipref} = \Ktil^S$.
Let $\nuo(s) \in \calK_g$ be an arm satisfying $\pio(s) = (\nuo(s), \nuo(s)) \in \calA$.
Then, for each $\pio \in \Pipref$, define a preference-based MDP $\calM_{\mathsf{pref}}(\pio)$ as follows:
\begin{itemize}[topsep=-3pt, itemsep=0pt, partopsep=0pt, leftmargin=25pt]
  \item The transitions occur uniformly at random to states in the next layer; that is, for any $(s, a) \in \calS_h \times \calA$, it holds that $P(s' \mid s, a) = 1 / S_{h+1}$ for all $s' \in \calS_{h+1}$.
  \item The preference function $\Pmat_t \colon \calS \times [K] \times [K] \to [0,1]$ is given by
  \begin{equation}
    \Pmat_t(s,i,j)
    =
    \Pmat^{(\nuo(s))}[i, j] 
    \com
    \n
  \end{equation}
  where the preference matrix is chosen so that the action $\pi^\circ(s)$ is optimal at each state $s \in \calS$.
\end{itemize}

We use $\E^{(\pio)}[\cdot]$ to denote the expectation under $\calM_{\mathsf{pref}}(\pio)$.
Then, we can rewrite the regret under $\calM_{\mathsf{pref}}(\pio)$ as
\begin{align}
  \E^{(\pio)} \brk{\Reg_T}
  &\geq
  \epsilon (H - 1) T 
  -
  \epsilon \sum_{s \in \tilde{\calS}} \E^{(\pio)} \brk*{ N_T(s, \pio(s)) }
  \nn
  &=
  \frac{\epsilon T}{S'}
  \prn[\Bigg]{
    S' (H - 1) - \frac{S'}{T}\sum_{s \in \tilde{\calS}} \E^{(\pio)} \brk*{ N_T(s, \pio(s)) }
  }
  \per
  \label{eq:regret_decompose_lb_pref}
\end{align}
In what follows, we will upper bound $\E^{(\pio)} \brk*{ N_T(s, \pio(s)) }$.
To do so, 
for each $(\pio, \stil) \in \Pipref \times \calS$,
we consider the following $\calM_{\mathsf{pref}}(\pio, \stil)$, an instance of preference-based MDPs:
\begin{itemize}[topsep=-3pt, itemsep=0pt, partopsep=0pt, leftmargin=25pt]
  \item The transitions occur uniformly at random to states in the next layer; that is, for any $(s, a) \in \calS_h \times \calA$, it holds that $P(s' \mid s, a) = 1 / S_{h+1}$ for all $s' \in \calS_{h+1}$.
  \item The preference function $\Pmat_t \colon \calS \times [K] \times [K] \to [0,1]$ is given by
  \begin{equation}
    \Pmat_t(s,i,j)
    =
    \begin{cases}
      \Pmat^{(\nuo(s))}[i, j] & \mbox{if} \ s \neq \stil \com \\
      \Pmat[i,j] & \mbox{otherwise} \per
    \end{cases}
    \n
  \end{equation}
\end{itemize}
Note that the only difference between $\calM_{\mathsf{pref}}(\pio)$ and $\calM_{\mathsf{pref}}(\pio, \stil)$ lies in the $\set{(i, \pio(\stil))}_{i \in \calK_b}$ and $\set{(\pio(\stil), j)}_{j \in \calK_b}$ elements in the preference function at state $\stil$.
We use $\E^{(\pio,\stil)}[\cdot]$ to denote the expectation under $\calM_{\mathsf{pref}}(\pio,\stil)$.

Let $\P_{\pio}$ and $\P_{\pio,\stil}$ be the probability distribution induced by $\calM_{\mathsf{pref}}(\pio)$ and $\calM_{\mathsf{pref}}(\pio, \stil)$, respectively.
Then, using the fact that $\frac{S'}{T} N_T(s, \pio(s)) \in [0,1]$ for $s \neq s_0$ and Pinsker's inequality,
for any $\stil \in \tilde{\calS}$ we have
\begin{align}
  \frac{S'}{T} \E^{(\pio)} \brk*{ N_T(\stil, \pio(\stil)) }
  &\leq
  \frac{S'}{T} \E^{(\pio,\stil)} \brk*{ N_T(\stil, \pio(\stil)) }
  +
  \mathrm{TV}(\P_{\pio}, \P_{\pio,\stil})
  \nn
  &\leq
  \frac{S'}{T} \E^{(\pio,\stil)} \brk*{ N_T(\stil, \pio(\stil)) }
  +
  \sqrt{
    \frac12 \mathrm{D}(\P_{\pio,\stil},\P_{\pio})
  }
  \per
  \label{eq:bretagnolle_huber_pref}
\end{align}
Then, from the chain rule of the KL divergence,
we can evaluate the KL divergence in the last inequality as
\begin{align}
  \mathrm{D}(\P_{\pio,\stil},\P_{\pio})
  &=
  \sum_{i \in \calK_b}
  \E^{(\pio,\stil)} \brk*{N_T(\stil, \prn{\nuo(\stil), i} )} \,
  \mathrm{kl}(0.9, 0.9 + 2 \epsilon)
  \nn
  &\qquad+
  \sum_{i \in \calK_b}
  \E^{(\pio,\stil)} \brk*{N_T(\stil, \prn{i,\nuo(\stil)} )} \,
  \mathrm{kl}(0.1, 0.1 - 2 \epsilon)
  \nn
  &\leq
  60 \, \epsilon^2 
  \sum_{i \in \calK_b}
  \E^{(\pio,\stil)}\brk*{N_T(\stil, \prn{\nuo(\stil), i} ) + N_T(\stil, \prn{i, \nuo(\stil)} )} 
  \com
  \label{eq:kl_chain_pref}
\end{align}
where we used $\mathrm{kl}(0.9,0.9 + 2 \epsilon) = \mathrm{kl}(0.1, 0.1 - 2 \epsilon) \leq 60 \epsilon^2$ for $\epsilon \in (0, 1/20]$.

Now, for each $\pio \in \Pipref$ and $\stil \in \calS$, we define 
\begin{align}
  \Nsubopt(\pio,\stil)
  &=
  \E^{(\pio,\stil)}\brk*{
    \sum_{i \in \calK_b}
    \sum_{j \in [K]}
    \prn*{
      N_T(\stil, \prn{j, i})
      +
      N_T(\stil, \prn{i, j})
    }
  }
  \com
  \nn
  \Nsubopt(\pio)
  &=
  \sum_{\stil \in \calS} \Nsubopt(\pio,\stil)
  \per
  \n
\end{align}
Then, in what follows, we will focus on the case where  
there exists an absolute constant $c > 0$ such that 
for all $\pio \in \Pipref$, it holds that
\begin{equation}\label{eq:Nsubopt_upper}
  \Nsubopt(\pio) \leq c \prn{ H^2 S K T^2}^{1/3}
  \per
\end{equation}
This is because, if the condition is not satisfied for some $\pi^\circ \in \Pipref$, then the regret in the instance $\calM(\pi^\circ, \stil)$ is lower bounded as $\E\brk{\Reg_T} = \Omega\left((H^2 S K T^2)^{1/3}\right)$,
where we note that for any episode and any state, selecting an arm pair $(j, i) \in [K] \times \calK_b$ in $\calM(\pi^\circ, \stil)$ always incurs a constant loss for any $\pio \in \Pipref$ and $\stil \in \calS$.

Then, continuing from \cref{eq:kl_chain_pref} and taking the uniform average over $\Pipref$, for any $\stil \in \calS \setminus \set{s_0}$ we have
\begin{align}
  &
  \E_{\pio \sim \unif(\Pipref)} 
  \brk*{
    \sum_{i \in \calK_b}
    \E^{(\pio,\stil)}\brk*{
      N_T(\stil, \prn{\nuo(\stil), i} )
      +
      N_T(\stil, \prn{i, \nuo(\stil)} )
    } 
  }
  \nn
  &=
  \sum_{j \in \calK_g}
  \Pr\brk{ \pio(\stil) = (j,j) } 
  \nn
  &\qquad
  \cdot
  \E_{\pio \sim \unif(\Pipref)} \brk*{
  \E^{(\pio,\stil)}\brk*{
    \sum_{i \in \calK_b}
    \prn*{
      N_T(\stil, \prn{\nuo(\stil), i} )
      +
      N_T(\stil, \prn{i, \nuo(\stil)} )
    }
  }
  \relmiddle|
  \pio(\stil) = (j,j)
  }
  \nn
  &=
  \frac{1}{|\calK_g|}
  \sum_{j \in \calK_g}
  \E_{\pio \sim \unif(\Pipref)} \brk*{
  \E^{(\pio,\stil)}\brk*{
    \sum_{i \in \calK_b}
    \prn*{
      N_T(\stil, \prn{j, i} )
      +
      N_T(\stil, \prn{i, j} )
    }
  }
  }
  \nn
  &\leq
  \frac{2 \, \E_{\pio \sim \unif(\Pipref)} \brk*{\Nsubopt(\pio,\stil) }}{K}
  \com
  \label{eq:unif_Pidet_N_pref}
\end{align}
where we used $\abs{\calK_g} = K / 2$.
By summing over $\stil \in \calS$ in the last inequality, we have
\begin{align}
  &
  \sum_{\stil \in \calS}
  \E_{\pio \sim \unif(\Pipref)} 
  \brk*{
    \sum_{i \in \calK_b}
    \E^{(\pio,\stil)}\brk*{N_T(\stil, \prn{\nuo(\stil), i} ) + N_T(\stil, \prn{i, \nuo(\stil)} )} 
  }
  \nn
  &=
  \frac{2 \, \E_{\pio \sim \unif(\Pipref)}\brk*{ \Nsubopt(\pio)} }{K}  
  \leq
  2 c \, \prn*{\frac{H^2 S T^2}{K^2}}^{1/3}
  \com
  \label{eq:unif_pi_sum_s_N_pref}
\end{align}
where the inequality follows from \cref{eq:Nsubopt_upper}.
Using the last inequality, we have
\begin{align}
  &
  \E_{\pio \sim \unif(\Pipref)}\brk*{
    \sum_{\stil \in \tilde{\calS}}
    \sqrt{
      \frac12 \mathrm{D}(\P_{\pio,\stil}, \P_{\pio})
    }
  }
  \nn
  &\leq
  \epsilon \,
  \E_{\pio \sim \unif(\Pipref)}\brk*{
    \sum_{\stil \in \tilde{\calS}}
    \sqrt{
      30 \sum_{i \in \calK_b} 
      \E^{(\pio,\stil)} \brk*{
        N_T(\stil, \prn{ \nuo(\stil), i })
        +
        N_T(\stil, \prn{ i, \nuo(\stil) })
      }
    }
  }
  \nn
  &\leq
  \sqrt{
    30 (S - 2) \sum_{i \in \calK_b} \sum_{\stil \in \tilde{\calS}}  \E_{\pio \sim \unif(\Pipref)} \E^{(\pio,\stil)} 
    \brk*{N_T(\stil, \prn{ \nuo(\stil), i }) + N_T(\stil, \prn{ i, \nuo(\stil) })}
  }
  \nn
  &
  \leq
  \sqrt{60 c} \prn*{\frac{H S^2 T}{K}}^{1/3}
  \com
  \label{eq:lb_upper_1_pref}
\end{align}
where the first inequality follows from \cref{eq:kl_chain_pref},
the second inequality follows from the Cauchy--Schwarz inequality and Jensen's inequality,
and the last inequality follows from \cref{eq:unif_pi_sum_s_N_pref}.
Similarly, we also have
\begin{align}
  &
  \E_{\pio \sim \unif(\Pipref)} 
  \brk*{
    \E^{(\pio,\stil)} \brk*{N_T(\stil, \pio(\stil))}
  }
  \nn
  &=
  \sum_{j \in \calK_g}
  \Pr\brk{ \pio(\stil) = (j,j) } \
  \E_{\pio \sim \unif(\Pipref)} \brk*{
    \E^{(\pio,\stil)} \brk*{N_T(\stil, \pio(\stil))}
    \relmiddle|
    \pio(\stil) = (j,j)
  }
  \nn
  &=
  \frac{1}{\abs{\calK_g}}
  \sum_{j \in \calK_g}
  \E_{\pio \sim \unif(\Pipref)} \brk*{
  \E^{(\pio,\stil)}\brk*{
    N_T(\stil, \prn{j, j} )
  }
  }
  \leq
  \frac{2 T}{K S'}
  \com
  \n
\end{align}
which implies that 
\begin{equation}
  \E_{\pio \sim \unif(\Pipref)} 
  \brk*{
    \E^{(\pio,\stil)} \brk*{N_T(\stil, \pio(\stil))}
  }
  \leq
  \frac{2 T S}{K S'}
  =
  \frac{2 T H}{K}
  \per
  \label{eq:tv_P_upper}
\end{equation}

Finally, combining everthing together and recalling that $S' = (S - 2)/(H-1)$, we have
\allowdisplaybreaks
\begin{align}
  \max_{\pio \in \Pipref}
  \E^{(\pio)} \brk*{\Reg_T}
  &\geq
  \E_{\pio \sim \unif(\Pipref)} \brk*{
    \E^{(\pio)} \brk*{\Reg_T}
  }
  \nn
  &\geq
  \frac{\epsilon T}{S'}
  \prn[\Bigg]{
    \frac{S}{2}
    - 
    \frac{S'}{T} \sum_{s \in \tilde{\calS}} 
    \E_{\pio \sim \unif(\Pipref)} \brk*{ 
      \E^{(\pio)} \brk*{ N_T(s, \pio(s)) } 
    }
  }
  \tag{by \cref{eq:regret_decompose_lb_pref} and $H \geq 3$}
  \nn
  &\geq
  \frac{\epsilon T}{S'}
  \prn[\Bigg]{
    \frac{S}{2}
    - 
    \frac{S'}{T}
    \E_{\pio \sim \unif(\Pipref)}\brk[\Bigg]{
      \sum_{s \in \tilde{\calS}} 
      \E^{(\pio,s)} \brk*{N_T(s, \pio(s))}
    }
    \nn
    &\qquad\qquad\qquad\quad
    -
    \epsilon \,
    \E_{\pio \sim \unif(\Pipref)} \brk[\Bigg]{
      \sum_{s \in \tilde{\calS}} 
      \sqrt{
        \frac12 \mathrm{D}(\P_{\pio,s},\P_{\pio})
      }
    }
  }
  \tag{by \cref{eq:bretagnolle_huber_pref}}
  \nn
  &\geq
  \frac{\epsilon T}{S'}\prn*{
    \frac{S}{2}
    -
    \frac{2 H S'}{K}
    -
    \epsilon 
    \sqrt{60 c} \prn*{\frac{H S^2 T}{K}}^{1/3}
  }
  \tag{by \cref{eq:tv_P_upper,eq:lb_upper_1_pref}}
  \nn
  &=
  \frac{\epsilon H T}{2} \prn*{
    \frac12
    -
    \frac{4}{K}
    -
    \epsilon 
    \sqrt{60 c} \prn*{\frac{H T}{S K}}^{1/3}
  }
  \com
  \n
\end{align}
where the last inequality follows from the assumption that $H \geq 3$.
Choosing the optimal $\epsilon$ in the last inequality and using the assumption that $K \geq 5$ completes the proof.
\end{proof}
\section{Deferred proofs of global optimization approach from \Cref{sec:global_optimization}}\label{app:proof_global_opt}
This section provides deferred proofs from \Cref{sec:global_optimization}.

\subsection{Preliminary lemmas}
Here, we provide preliminary lemmas, which will be used in the proof of \Cref{thm:reg_occ_meas}.
The following lemma provides a sufficient condition under which $\hat{b}_t(s,i)$ is an unbiased estimator of $b_t(s,i)$.
\begin{proposition}
  Suppose that distributions of $a_{t,s}^L$ and $a_{t,s}^R$ are independent at state $s \in \calS$.
  Then, it holds that $\E_t\brk*{\hat{b}_t(s,i)} = b_t(s,i)$ for all $(s,i) \in \calS \times [K]$.
\end{proposition}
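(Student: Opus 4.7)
The strategy is a direct inverse-propensity-scoring calculation, exploiting the assumed independence to factor the joint arm distribution into its marginals. The only mild subtlety is that $a_{t,s}^L, a_{t,s}^R$ are only defined when state $s$ is actually visited; accordingly we interpret $\E_t[\hat{b}_t(s,i)]$ as conditioned on reaching $s$ (and past observations), since by definition $\hat{b}_t(s,i)$ is otherwise zero or undefined.

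First I would introduce notation for the marginal distributions. Under the independence hypothesis at $s$, there exist $\pi_t^L(\cdot \mid s), \pi_t^R(\cdot \mid s) \in \Delta_K$ such that $\pi_t((a^L,a^R) \mid s) = \pi_t^L(a^L \mid s)\, \pi_t^R(a^R \mid s)$, and consequently
\begin{equation*}
  \sum_{j' \in [K]} \pi_t((i,j') \mid s) = \pi_t^L(i \mid s),
  \qquad
  \sum_{i' \in [K]} \pi_t((i',a^R) \mid s) = \pi_t^R(a^R \mid s).
\end{equation*}
Substituting these identities into \cref{eq:def_b_hat} rewrites the estimator as
\begin{equation*}
  \hat{b}_t(s,i)
  =
  \frac{\ind{a_{t,s}^L = i}}{K\, \pi_t^L(i \mid s)}
  \cdot
  \frac{o_t(s, a_{t,s}^L, a_{t,s}^R) - 1}{\pi_t^R(a_{t,s}^R \mid s)}.
\end{equation*}

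Next I would take the conditional expectation, peeling off the randomness in the order: (i) the preference outcome $o_t$; (ii) the right arm $a_{t,s}^R$; (iii) the left arm $a_{t,s}^L$. Since $o_t(s,a^L,a^R) \sim \ber(\Pmat_t(s,a^L,a^R))$ independently of $\pi_t$ given $(a^L,a^R)$, we have $\E_t[o_t(s,a^L,a^R)-1 \mid a_{t,s}^L = a^L, a_{t,s}^R = a^R] = \Pmat_t(s,a^L,a^R) - 1$. Using the independence of $a_{t,s}^L$ and $a_{t,s}^R$, the conditional expectation becomes
\begin{align*}
  \E_t\!\left[\hat{b}_t(s,i)\right]
  &=
  \sum_{a^L, a^R \in [K]}
  \pi_t^L(a^L \mid s)\, \pi_t^R(a^R \mid s)
  \cdot
  \frac{\ind{a^L = i}}{K\, \pi_t^L(i \mid s)}
  \cdot
  \frac{\Pmat_t(s,a^L,a^R) - 1}{\pi_t^R(a^R \mid s)} \\
  &=
  \frac{1}{K}
  \sum_{a^R \in [K]}
  \prn[\big]{\Pmat_t(s,i,a^R) - 1}
  \,=\, b_t(s,i),
\end{align*}
where the last equality is the definition in \cref{eq:def_borda_ell}. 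This yields the claim.

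The proof is essentially routine, so there is no real obstacle; the one thing I would be careful about is the distinction between the joint policy $\pi_t(\cdot \mid s)$ on $[K]\times[K]$ and its marginals, since in general $\pi_t$ need not factorize -- the assumption of independence is exactly what is needed to make the two IPS normalizers $K\pi_t^L(i \mid s)$ and $\pi_t^R(a^R \mid s)$ match the true sampling probabilities and thereby produce the telescoping cancellation above.
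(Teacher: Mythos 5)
Your proof is correct and follows essentially the same route as the paper's: a direct inverse-propensity-scoring calculation in which the independence assumption lets the two normalizers in \cref{eq:def_b_hat} be identified with the marginal sampling probabilities of $a_{t,s}^L$ and $a_{t,s}^R$, so that both cancel and the expectation of $o_t - 1$ yields $\frac{1}{K}\sum_{j}(\Pmat_t(s,i,j)-1) = b_t(s,i)$. The only cosmetic difference is that you sum jointly over $(a^L,a^R)$ with the factorized weights while the paper first conditions on $a_{t,s}^R = j$ and then integrates out $a_{t,s}^L$; the underlying argument is identical.
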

\begin{proof}
We have
\begin{align}
  \E_t\brk*{\hat{b}_t(s,i)}
  &=
  \frac{1}{K}
  \E_t\brk*{
    \frac{\ind{a_{t,s}^L = i}}{ \sum_{j' \in [K]} \pi_t((i,j') \mid s) }
    \frac{o_t(s, a_{t,s}^L, a_{t,s}^R) - 1}{\sum_{i' \in [K]} \pi_t((i',a_{t,s}^R) \mid s)}
  }
  \nn
  &=
  \frac{1}{K}
  \E_t\brk*{
    \sum_{j \in [K]}
    \frac{\ind{a_{t,s}^L = i}}{ \sum_{j' \in [K]} \pi_t((i,j') \mid s) }
    \prn{o_t(s, a_{t,s}^L, j) - 1}
    \relmiddle|
    a_{t,s}^R = j
  }
  \nn
  &=
  \frac{1}{K}
  \E_t\brk*{
    \sum_{j \in [K]}
    \prn{o_t(s, i, j) - 1}
  }
  \nn
  &=
  \frac{1}{K}
  \sum_{j \in [K]}
  \prn{\Pmat_t(s, i, j) - 1}
  =
  b_t(s,i,j)
  \com
  \n
\end{align}
where 
the second line follows from $\Pr\brk*{\ind{a_{t,s}^R = j}} = \sum_{i' \in [K]} \pi_t((i',j) \mid s)$,
the thrid line follows from $a_{t,s}^L$ and $a_{t,s}^R$ are independent and 
$\Pr\brk*{\ind{a_{t,s}^L = i}} = \sum_{j' \in [K]} \pi_t((i,j') \mid s)$,
and the last line follows from $\E_t\brk{o_t(s,i,j)} = \Pmat_t(s,i,j)$.
\end{proof}

The following lemma upper bounds the second moment of $\tilde{b}_t(s,i)$.
\begin{lemma}\label{lem:btil_var}
  For each $s \in \calS$ and $i \in [K]$,
  the second moment of $\tilde{b}_t(s,i)$ is bounded by
  \begin{equation}
    \E_t\brk*{\tilde{b}_t(s,i)^2}
    \leq
    \frac{S K}{\gamma r_s(s)}
    \per
    \n
  \end{equation}
\end{lemma}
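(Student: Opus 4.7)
The plan is to unwind the definition of $\tilde{b}_t(s,i)$ and exploit the fact that it is identically zero outside the exploration event $\{\xi_t = 1, \tilde{s}_t = s\}$. Conditioning on this event, the algorithm (Lines~\ref{line:sample_state_use_rs}--\ref{line:switch_to_unif}) first deploys $\pi^{r_s}$ to reach $s$ and, upon arrival, samples the two arms independently and uniformly from $[K]$. Consequently $q^{\pi_t}(s) = r_s(s)$, and at state $s$ the joint sampling distribution factors as $\pi_t((i,j)\mid s) = 1/K^2$, which gives $\sum_{j'} \pi_t((i,j')\mid s) = \sum_{i'} \pi_t((i',a^R_{t,s})\mid s) = 1/K$.

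With these simplifications, the definition of $\hat b_t$ in \cref{eq:def_b_hat} collapses to $\hat b_t(s,i) = K \ind{a^L_{t,s}=i}\,(o_t(s,a^L_{t,s},a^R_{t,s}) - 1)$ on this event, whose square is bounded by $K^2 \ind{a^L_{t,s}=i}$ since $(o_t - 1)^2 \leq 1$. Substituting into the definition of $\tilde b_t$ and noting that the indicators $\ind{\xi_t=1,\tilde s_t=s}$ and $\I_t(s)$ are idempotent when squared, I obtain
\[
\tilde b_t(s,i)^2 \;=\; \frac{S^2}{\gamma^2\, r_s(s)^2}\, \ind{\xi_t = 1,\, \tilde{s}_t = s}\, \I_t(s)\, \hat b_t(s,i)^2 \;\leq\; \frac{S^2 K^2}{\gamma^2 r_s(s)^2}\, \ind{\xi_t=1,\tilde s_t = s}\, \I_t(s)\, \ind{a^L_{t,s} = i}.
\]

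The remaining step is to take the conditional expectation $\E_t[\cdot]$ and multiply out the probabilities of the independent sampling events: $\Pr_t[\xi_t = 1] = \gamma$, $\Pr_t[\tilde s_t = s] = 1/S$, $\Pr_t[\I_t(s) = 1 \mid \xi_t=1,\tilde s_t=s] = r_s(s)$ by definition of $\pi^{r_s}$, and $\Pr_t[a^L_{t,s}=i \mid \I_t(s)=1, \xi_t=1,\tilde s_t=s] = 1/K$ from the uniform sampling at Line~\ref{line:switch_to_unif}. Multiplying these factors yields $\E_t[\tilde b_t(s,i)^2] \leq \frac{S^2K^2}{\gamma^2 r_s(s)^2} \cdot \gamma \cdot \frac{1}{S} \cdot r_s(s) \cdot \frac{1}{K} = \frac{SK}{\gamma r_s(s)}$, which is exactly the claimed bound.

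The only mildly subtle point is recognizing that $q^{\pi_t}(s) = r_s(s)$ on the conditioning event (so the denominator in $\tilde b_t$ is not random) and that the uniform post-arrival sampling of $(a^L_{t,s}, a^R_{t,s})$ both controls $\hat b_t$ pointwise and supplies the factor $1/K$ for $\ind{a^L_{t,s} = i}$; no additional independence or concentration argument is needed beyond these bookkeeping steps.
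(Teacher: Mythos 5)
Your proof is correct and follows essentially the same route as the paper's: factor out $\Pr[\xi_t=1,\tilde s_t=s]=\gamma/S$, use $q^{\pi_t}(s)=r_s(s)$ on the exploration event, bound $(o_t-1)^2\le 1$, and cash out $\Pr[\I_t(s)=1\mid\cdot]=r_s(s)$ together with the uniform arm sampling (each marginal equal to $1/K$, and $A=K^2$) to get the factor $K/r_s(s)$. The only cosmetic difference is that you substitute the uniform-policy values into $\hat b_t$ before bounding, while the paper keeps the general $\pi_t$ expressions and specializes at the end.
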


\begin{proof}[Proof of \Cref{lem:btil_var}]
  From the definition of $\tilde{b}_t$, 
  we have
  \begin{align}
    \E_t\brk*{\tilde{b}_t(s,i)^2} 
    &= 
    \frac{S^2}{\gamma^2}
    \Pr[\xi_t = 1, \tilde{s}_t = s] \,
    \E_t\brk*{
      \frac{\I_t(s)}{\prn*{q^{\pi_t}(s)}^2}
      \hat{b}_t(s,i)^2 \relmiddle| \xi_t = 1, \tilde{s}_t = s
    }
    \nn
    &=
    \frac{S}{\gamma} 
    \E_t\brk*{
      \frac{\I_t(s)}{\prn*{q^{\pi_t}(s)}^2}
      \hat{b}_t(s,i)^2 \relmiddle| \xi_t = 1, \tilde{s}_t = s
    }
    \per
    \label{eq:btil_var_1}
  \end{align}
  From the construction of the exploration policy, we also have
  \allowdisplaybreaks
  \begin{align}
    &
    \E_t\brk*{
      \frac{\I_t(s)}{\prn*{q^{\pi_t}(s)}^2}
      \hat{b}_t(s,i)^2  \relmiddle| \xi_t = 1, \tilde{s}_t = s
    }
    \nn
    &\leq
    \E_t\brk*{
      \frac{\I_t(s)}{q^{\pi_t}(s)^2}
      \frac
      {\ind{a_{t,s}^L = i}}
      { K^2 \prn{ \sum_{j'} \pi_t((i,j') \mid s) }^2 }
      \frac
      { 1 }
      {\prn*{\sum_{i'} \pi_t((i',a_{t,s}^R) \mid s)}^2}
      \relmiddle|
      \xi_t = 1, \tilde{s}_t = s
    }
    \nn
    &=
    \E_t\brk*{
      \frac{1}{q^{\pi_t}(s)}
      \E\brk*{
        \frac
        {\ind{a_{t,s}^L = i}}
        { K^2 \prn{ \sum_{j'} \pi_t((i,j') \mid s) }^2 }
        \frac
        { 1 }
        {\prn*{\sum_{i'} \pi_t((i',a_{t,s}^R) \mid s)}^2}
        \relmiddle|
        \I_t(s) = 1
      }
      \relmiddle|
      \xi_t = 1, \tilde{s}_t = s
    }
    \nn
    &=
    \E_t\brk*{
    \frac{1}{q^{\pi_t}(s)}
    \frac{1}{ K^2  \sum_{j'} \pi_t((i,j') \mid s)  }
    \sum_{j \in [K]}
    \frac{1}{\sum_{i'} \pi_t((i',j) \mid s)}
    \relmiddle|
    \xi_t = 1, \tilde{s}_t = s, \I_t(s) = 1
    }
    \nn
    &=
    \frac{1}{r_s(s)}
    \frac{1}{ K^2 \sum_{j'} \pi_0((i,j') \mid s)}
    \sum_{j \in [K]}
    \frac{1}{\sum_{i'} \pi_0((i',j) \mid s)}
    \nn
    &=
    \frac{1}{r_s(s)}
    \frac{1}{ K^2 \cdot K \cdot (1/A)}
    \sum_{j \in [K]}
    \frac{1}{K \cdot (1/A)}
    =
    \frac{K}{r_s(s)}
    \com
    \label{eq:btil_var_2}
  \end{align}
  where the second line follows from $(1 - o_t(s,a_{t,s}^L,a_{t,s}^R))^2 \leq 1$,
  the fourth line follows from the fact that
  the arms $a_{t,s}^L$ and $a_{t,s}^R$ are conditionally independent under $\I_t(s) = 1$ and $\xi_t = 1$,
  and the fifth line follows from
  $q^{\pi_t}(s) = r_s(s)$ given $\xi_t = 1$ and $\tilde{s}_t = s$,
  and the fact that $\pi_t(\cdot \mid s)$ is the uniform policy $\pi_0(\cdot \mid s) = 1/A$ when $\xi_t = 1$, $\stil_t = s$, and $\I_t(s) = 1$.
  Combining~\cref{eq:btil_var_1} and~\cref{eq:btil_var_2} completes the proof.
  \end{proof}

\subsection{Proof of \Cref{thm:reg_occ_meas}}
Here, using the results of the previous section, we provide the proof of \Cref{thm:reg_occ_meas}.
\begin{proof}[Proof of \Cref{thm:reg_occ_meas}]
  Recall that $\tilde{q}_t$ is the occupancy measure of policy $\tilde{\pi}_t$ and 
  let $q_0$ be the occupancy measure of the randomized policy used in the exploration episodes.
  We then first notice that the occupancy measure $q_t$ of the whole randomized policy in episode $t$ can be written as 
  \begin{equation}
    q_t = (1-\gamma) \tilde{q}_t + \gamma q_0
    \per
    \n
  \end{equation}
  We denote $q^* = q^{\pi^*}$.
  Then, the regret can be decomposed as
  \begin{align}
    \E\brk*{\Reg_T}
    &=
    \E\brk*{
      \sumT \inpr{q_t - q^*, \ell_t}
    }
    =
    \E\brk*{
      \sumT \inpr{q_t - \tilde{q}_t, \ell_t}
    }
    +
    \E\brk*{
      \sumT \inpr{\tilde{q}_t - q^*, \ell_t}
    }
    \nn
    &=
    \gamma
    \E\brk*{
      \sumT \inpr{\tilde{q}_t - q_0, \ell_t}
    }
    +
    \E\brk*{
      \sumT \inpr{\tilde{q}_t - q^*, \ell_t}
    }
    \leq
    \gamma H T
    +
    \E\brk*{
      \sumT \inpr{\tilde{q}_t - q^*, \hat{\ell}_t}
    }
    \com
    \label{eq:decompose_occ_p}
  \end{align}
  where the last inequality follows the unbiasedness of $\hat{\ell}_t$ and H\"{o}lder's inequality: 
  $
  \inpr{\tilde{q}_t - q_0, \ell_t}
  \leq 
  \inpr{\tilde{q}_t, \ell_t}
  \leq
  \sum_{s,a} \tilde{q}_t(s,a)
  =
  H.$
  
  by applying the standard analysis of FTRL with the negative Shannon entropy (\eg~\citealt[Chapter 28]{lattimore2020book}).
  Since we have $\eta \hat{\ell}_t(s,a) \geq 0$ for all $(s,a) \in \calS \times \calA$,
  we have
  \begin{align}\label{eq:ftrl_1}
    \sumT \inpr{\tilde{q}_t - q^*, \hat{\ell}_t}
    &\leq
    \frac{1}{\eta} 
    \sum_{s\in\calS, a\in\calA} \tilde{q}_1(s,a)\log\prn*{\frac{1}{\tilde{q}_1(s,a)}}
    +
    \eta \sumT \sum_{s\in\calS, a\in\calA} \tilde{q}_t(s,a) \hat{\ell}_t(s,a)^2
    \per
  \end{align}
  The first term in \cref{eq:ftrl_1} is bounded as
  \begin{align}
    \sum_{s\in\calS, a\in\calA}
    \tilde{q}_1(s,a)\log\prn*{\frac{1}{\tilde{q}_1(s,a)}}
    &=
    \sum_{h \in [H]}
    \sum_{s \in \calS_h, a \in \calA} 
    \tilde{q}_1(s,a)\log\prn*{\frac{1}{\tilde{q}_1(s,a)}}
    \nn
    &\leq
    \sum_{h \in [H]}
    \log (S_h A)
    \leq
    H \log (SA)
    \per
    \n
  \end{align}
  The second term in \cref{eq:ftrl_1} is evaluated as
  \begin{align}
    \eta \,
    \E_t\brk*{
      \sum_{s\in\calS, a\in\calA}
      \tilde{q}_t(s,a) \hat{\ell}_t(s,a)^2
    }
    =
    \frac{\eta}{4}
    \E_t\brk*{
      \sum_{s \in \calS} \sum_{(i,j) \in [K] \times [K]}
      \tilde{q}_t(s,(i,j)) \prn*{\tilde{b}_t(s,i) + \tilde{b}_t(s,j)}^2
    }
    \per
    \n
  \end{align}
  Now from \Cref{lem:btil_var}, we have
  \comred{(can we exploit the fact that $\tilde{b}_t(s,i) = 0$ for $s \neq \tilde{s}_t$?)}
  \begin{align}
    \E_t\brk*{
      \sum_{s\in\calS} \sum_{(i,j)\in[K]\times[K]}
      \tilde{q}_t(s,(i,j)) \tilde{b}_t(s,i)^2
    }
    &\leq
    \sum_{s\in\calS} \sum_{(i,j)\in[K]\times[K]}
    \tilde{q}_t(s,(i,j)) 
    \frac{S K}{\gamma r_s(s)}
    \nn
    &=
    \frac{S K}{\gamma}
    \sum_{s\in\calS} 
    \frac{\tilde{q}_t(s)}{r_s(s)}
    \leq
    \frac{S^2 K}{\gamma}
    \com
    \n
  \end{align}
  where the first inequality follows from \Cref{lem:btil_var},
  the equality follows from $\sum_{i,j} \tilde{q}_t(s,(i,j)) = \tilde{q}_t(s)$,
  and 
  the last inequality follows from 
  $\tilde{q}_t(s) \leq r_s(s)$ for all $s \in \calS$ since $r_s$ is defined as $r_s \in \argmax_{q \in \Omega} q(s)$.
  Similarly, 
  \begin{equation}
    \E_t\brk*{
      \sum_{s\in\calS} \sum_{(i,j)\in[K]\times[K]}
      \tilde{q}_t(s,(i,j)) \tilde{b}_t(s,j)^2
    }
    \leq
    \frac{S^2 K}{\gamma}
    \per
    \n
  \end{equation}
  For $i \neq j$, we also have
  $
    \E_t\brk{\tilde{b}_t(s,i) \tilde{b}_t(s,j)}
    =
    0
  $
  since
  $
  \hat{b}_t(s,i) \hat{b}_t(s,j)
  =
  0.
  $
  Therefore, the conditional expectation of the second term in \cref{eq:ftrl_1} is bounded by
  \begin{align}
    \eta \,
    \E_t\brk*{
      \sumT
      \sum_{s\in\calS, a\in\calA}
      \tilde{q}_t(s,a) \hat{\ell}_t(s,a)^2
    }
    \leq
    \frac{\eta S^2 K T}{2 \gamma}
    \per
    \n
  \end{align}
  Combining all the above arguments, we have
  \begin{align}
    \E\brk*{\Reg_T}
    &\leq
    \frac{H \log(SA)}{\eta}
    +
    \frac{\eta S^2 K T}{2 \gamma}
    +
    \gamma H T
    \nn
    &\leq
    \frac{2 H \log(S K)}{\eta}
    +
    \sqrt{2 \eta H S^2 K} \, T
    \nn
    &\leq
    4 \prn{H^2 S^2 K \log (SK)} T^{1/3}
    \com
    \n
  \end{align}
  where 
  we choose 
  \begin{equation}
    \gamma = \sqrt{\frac{\eta S^2 K}{2 H}} \leq \frac12 
    \com\quad
    \eta 
    = 
    \prn*{\frac{H \log (SK)}{\sqrt{ H S^2 K} T}}^{2/3}
    = 
    \frac{H^{1/3} \prn{\log (SK)}^{2/3}}{\prn{ S^2 K}^{1/3} T^{2/3}}
    \per
    \n
  \end{equation}
  The inequality $\gamma \leq 1/2$ is satisfied from the assumption that $T \geq 8 S^2 K \log (SK) / H^2$.
  This completes the proof.
\end{proof}

\section{Deferred proofs of policy optimization with $O(T^{2/3})$ regret from \Cref{sec:policy_optimization}}\label{app:proof_policy_opt_23}
This section provides deferred proofs from \Cref{sec:policy_optimization}.
\subsection{Preliminary analysis}
Here, we provide preliminary results, considering the following meta-algorithm, which includes our algorithm in \Cref{alg:pf_mdp_policy_optimization} as a special case.
For each episode $t = 1, \dots, T$, 
the meta-algorithm first samples $\xi_t(s) \sim \mathsf{Ber}(\gamma)$ for each state $s \in \calS$,
and 
then uses a policy $\tilde{\pi}_t(\cdot \mid s)$ if $\xi_t(s) = 0$ as a policy $\pi_t( \cdot \mid s )$ in episode $t$ and use the uniform policy $\pi_0(a \mid s) = 1/A$ as $\pi_t(\cdot \mid s)$ if $\xi_t(s) = 1$.
We define 
$\pi^{\circ}_t(\cdot \mid s) = (1 - \gamma) \tilde{\pi}_t(\cdot \mid s) + \gamma \frac{1}{A} \in \Delta(\calA)$.
Let $\hat{B}_t(s,a)$ be a random variable satisfying $\E_t\brk{\hat{B}_t(s,a) \mid \xi_t(s) = 1} = B_t(s,a) \coloneqq b_t(s,a^L) + b_t(s,a^R) = - \ell_t(s,a)$ for each $(s,a) \in \calS \times \calA$, and we use $q_t = q^{\pi_t^{\circ}} \in \Omega$ to denote the occupancy measure of $\pi_t^{\circ}$.
We then define
\begin{align}
  \hat{\ell}_t(s,a)
  &\!=\!
  -
  \frac{\ind{\xi_t(s)\!=\! 1}}{\gamma}
  \hat{B}_t(s,a)
  \com
  \
  \hat{B}_t(s,a)
  \!=\!
  \frac{1}{2}
  \prn{\hat{b}_t(s,a^L) + \hat{b}_t(s,a^R)}
  \nn
  \hat{L}_{t,h(s)}
  &=
  \sum_{h'=h(s)}^{H-1}
  \frac{\pi^{\circ}_t(a_{t,h'} \mid s_{t,h'})}{1/A}
  \hat{\ell}_t(s_{t,h'}, a_{t,h'})
  \com
  \n
\end{align} 
where $\set{(s_{t,h'}, a_{t,h'})}_{h'=1}^{H-1}$ is the trajectory obtained by policy $\pi_t$ and we recall that $h(s)$ is the layer of state $s$.
We also define
\begin{align}
  \hat{Q}_t(s,a)
  &=
  \frac{q_t(s,a)}{q_t(s,a) + \delta}
  \frac{\I_t(s,a)}{q_t(s)/A}
  \frac{\ind{\xi_t(s) = 1}}{\gamma}
  \prn*{- \hat{B}_t(s,a)}
  \nn
  &\qquad+
  \frac{\I_t(s,a)}{q_t(s,a) + \delta}
  \sum_{h'=h(s)+1}^{H-1}
  \frac{\pi_t^\circ(a_{t,h'} \mid s_{t,h'})}{1/A}
  \frac{\ind{\xi_t(s_{t,h'}) = 1}}{\gamma}
  \prn*{- \hat{B}_t(s_{t,h'},a_{t,h'})}
  \nn
  &=
  \frac{q_t(s,a)}{q_t(s,a) + \delta}
  \frac{\I_t(s,a)}{q_t(s)/A}
  \hat{\ell}_t(s, a)
  +
  \frac{\I_t(s,a)}{q_t(s,a) + \delta}
  \hat{L}_{t,h(s)+1}
  \per
  \label{eq:def_Q_hat_pre}
\end{align}
We can then prove the following two lemmas:
\begin{lemma}\label{lem:Qhat_est_exp}
It holds that
\begin{equation}
  \E_t\brk*{\hat{Q}_t(s,a)}
  =
  \frac{q_t(s,a)}{q_t(s,a) + \delta}
  Q^{\pi_t}(s,a; \ell_t)
  \com
  \n
\end{equation}
where the expectation is taken with respect to the randomness of $\set{\xi_t(s)}_{s \in \calS}$ and the trajectory $\set{(s_{t,h'}, a_{t,h'})}_{h'=h(s)}^{H-1}$ sampled from policy $\pi_t$ and transition kernel $P$, that is, 
$
s_{t,h(s)} = s,
a_{t,h(s)} = a,
a_{t,h'} \sim \pi_t(\cdot \mid s_{t,h'}),
s_{t,h'} \sim P(\cdot \mid s_{t,h'}, a_{t,h'})
$
for $h' = h(s) + 1, \dots, H-1$.
\end{lemma}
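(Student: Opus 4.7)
The plan is to decompose $\hat{Q}_t(s,a)$ into the immediate-loss term $T_1 \coloneqq \frac{q_t(s,a)}{q_t(s,a)+\delta}\,\frac{\I_t(s,a)}{q_t(s)/A}\,\hat{\ell}_t(s,a)$ and the cumulative-future-loss term $T_2 \coloneqq \frac{\I_t(s,a)}{q_t(s,a)+\delta}\,\hat{L}_{t,h(s)+1}$, compute $\E_t[T_1]$ and $\E_t[T_2]$ separately, and sum. Invoking $Q^{\pi_t}(s,a;\ell_t) = \ell_t(s,a) + \sum_{k=h(s)+1}^{H-1}\E[\ell_t(s_k,a_k)\mid \pi_t,P,(s,a)]$ yields the stated identity once I show that $\E_t[T_1]=\frac{q_t(s,a)}{q_t(s,a)+\delta}\,\ell_t(s,a)$ and $\E_t[T_2]=\frac{q_t(s,a)}{q_t(s,a)+\delta}\sum_{k>h(s)}\E[\ell_t(s_k,a_k)\mid \pi_t,P,(s,a)]$.

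For $\E_t[T_1]$, I compute $\E_t[\I_t(s,a)\,\hat{\ell}_t(s,a)]$ directly. Writing $\hat{\ell}_t(s,a)=-\ind{\xi_t(s)=1}\,\hat{B}_t(s,a)/\gamma$ and using that $\xi_t(s)$ is independent of the event that $s$ is visited, I obtain $\Pr[\I_t(s,a)=1,\xi_t(s)=1]=\gamma\,q_t(s)/A$, since under exploration at $s$ the arm is sampled uniformly on $\calA$. The meta-algorithm abstraction $\E_t[\hat{B}_t(s,a)\mid \xi_t(s)=1]=-\ell_t(s,a)$---which follows from the same reasoning as the Borda-estimator unbiasedness proposition preceding the lemma---then yields $\E_t[\I_t(s,a)\,\hat{\ell}_t(s,a)]=q_t(s)\,\ell_t(s,a)/A$. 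The outer factor $\frac{q_t(s,a)}{q_t(s,a)+\delta}\cdot\frac{A}{q_t(s)}$ cancels the $q_t(s)/A$, giving $\E_t[T_1]=\frac{q_t(s,a)}{q_t(s,a)+\delta}\,\ell_t(s,a)$.

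For $\E_t[T_2]$, I apply the tower property to isolate the rollout: $\E_t[\I_t(s,a)\,\hat{L}_{t,h(s)+1}]=q_t(s,a)\,\E_t[\hat{L}_{t,h(s)+1}\mid \I_t(s,a)=1]$, and the conditional law of the future trajectory $(s_{t,h(s)+1},a_{t,h(s)+1},\ldots)$ is governed by $(P,\pi_t)$ starting from $(s,a)$. For each $k>h(s)$, conditioning on $(s_{t,k},a_{t,k})=(s',a')$ shifts $\Pr[\xi_t(s')=1]$ from $\gamma$ to $\gamma/(A\,\pi^\circ_t(a'\mid s'))$; combined with the abstraction for $\hat{B}_t$, this gives $\E_t[\hat{\ell}_t(s_{t,k},a_{t,k})\mid s_{t,k},a_{t,k}]=\ell_t(s_{t,k},a_{t,k})/(A\,\pi^\circ_t(a_{t,k}\mid s_{t,k}))$. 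The weight $w_{t,k}=\pi^\circ_t(a_{t,k}\mid s_{t,k})/\pi_0(a_{t,k}\mid s_{t,k})=A\,\pi^\circ_t(a_{t,k}\mid s_{t,k})$ precisely cancels this factor, so $\E_t[w_{t,k}\,\hat{\ell}_t(s_{t,k},a_{t,k})\mid s_{t,k},a_{t,k}]=\ell_t(s_{t,k},a_{t,k})$. Summing over $k>h(s)$ and taking the outer expectation over the rollout produces the desired form of $\E_t[T_2]$.

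The main obstacle is the conditional-propensity subtlety at each state the Q-estimator touches: observing the pair $(s',a')$ on the trajectory makes the exploration event at $s'$ more likely than the prior $\gamma$ by a factor $1/(A\,\pi^\circ_t(a'\mid s'))$, and it is the normalization $A/q_t(s)$ in $T_1$ and the importance weights $w_{t,k}$ in $T_2$---rather than the more familiar $1/q_t(s,a)$ inverse-propensity weighting---that are designed to absorb this bias, so that the meta-algorithm's per-state Borda unbiasedness lifts cleanly, step by step, to an unbiased estimate of $Q^{\pi_t}$ along the whole rollout.
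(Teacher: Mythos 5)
Your proposal is correct and follows essentially the same route as the paper's proof: the same two-term decomposition, the same identity $\E_t[\I_t(s,a)\ind{\xi_t(s)=1}] = \gamma\, q_t(s)/A$ for the immediate-loss term, and the same cancellation of the exploration-only bias by the importance weights $w_{t,k}$ for the future-loss terms. The only cosmetic difference is in the future-loss step, where you condition on the realized action and compute the tilted posterior $\Pr[\xi_t(s')=1 \mid a_{t,k}=a'] = \gamma/(A\,\pi^{\circ}_t(a'\mid s'))$, whereas the paper conditions on $\xi_t(s')=1$ first and then averages over the uniform action---two orderings of the same calculation.
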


\begin{proof}
From the definitions of $\hat{Q}_t(s,a)$ and $\hat{\ell}_t(s,a)$, we have
\begin{align}
  &
  \E_t\brk*{\hat{Q}_t(s,a)}
  \nn
  &=
  \frac{q_t(s,a)}{q_t(s,a) + \delta}
  \E_t\brk*{- \hat{B}_t(s,a) \relmiddle| \I_t(s,a) = 1, \xi_t(s) = 1}
  +
  \frac{q_t(s,a)}{q_t(s,a) + \delta}
  \E_t\brk*{\hat{L}_{t,h(s)+1} \relmiddle| \I_t(s,a) = 1 }
  \nn
  &=
  \frac{q_t(s,a)}{q_t(s,a) + \delta}
  \prn*{
  \ell_t(s,a)
  +
  \sum_{h'=h(s)+1}^{H-1}
  \E_t\brk*{
    \frac{\pi^{\circ}_t(a_{t,h'} \mid s_{t,h'})}{1/A}
    \hat{\ell}_t(s_{t,h'}, a_{t,h'})
    \relmiddle|
    \I_t(s,a) = 1
  }
  }
  \com
  \label{eq:E_Q_est}
\end{align}
where the first equality follows from 
$
\E_t\brk{\I_t(s,a) \ind{\xi_t(s) = 1}} 
= 
\frac{q_t(s) \gamma}{A}
$
and 
the second equality follows from 
$\E_t\brk{\hat{B}_t(s,a) \mid \xi_t(s) = 1} = B_t(s,a) = - \ell_t(s,a)$.

We will evaluate the second term in~\cref{eq:E_Q_est}.
Let $\mu^{\pi}(s' \mid s, a)$ be the probability of visiting state $s'$ from state-action $(s,a)$ under policy $\pi$ and transition kernel $P$.
Then, for each $h' \in \set{ h(s) + 1, \dots, H-1}$, we have
\allowdisplaybreaks
\begin{align}
  &
  \E_t\brk*{
    \frac{\pi^{\circ}_t(a_{t,h'} \mid s_{t,h'})}{1/A}
    \hat{\ell}_t(s_{t,h'}, a_{t,h'})
    \relmiddle|
    \I_t(s,a) = 1
  }
  \nn
  &=
  \E_t\brk*{
    \frac{\pi^{\circ}_t(a_{t,h'} \mid s_{t,h'})}{1/A}
    \frac{\ind{\xi_t(s_{t,h'}) = 1}}{\gamma}
    \prn*{ - \hat{B}_t(s_{t,h'}, a_{t,h'}) }
    \relmiddle| 
    \I_t(s,a) = 1
  }
  \nn
  &=
  \sum_{s' \in \calS}
  \mu^{\pi_t}(s' \mid s, a)
  \,
  \E_t\brk*{
    \frac{\pi^{\circ}_t(a_{t,h'} \mid s_{t,h'})}{1/A}
    \frac{\ind{\xi_t(s_{t,h'}) = 1}}{\gamma}
    \prn*{ - \hat{B}_t(s_{t,h'}, a_{t,h'}) }
    \relmiddle|
    \I_t(s,a) = 1, s_{t,h'} = s'
  }
  \nn
  &=
  \sum_{s' \in \calS}
  \mu^{\pi_t}(s' \mid s, a)
  \,
  \E_t\brk*{
    \ell_t(s', a_{t,h'})
    \mid 
    \I_t(s,a) = 1, s_{t,h'} = s', a_{t,h'} \sim \pi_t(\cdot \mid s_{t,h'})
  }
  \nn
  &=
  \E_t\brk*{
    \ell_t(s_{t,h'}, a_{t,h'})
    \mid 
    \I_t(s,a) = 1
  }
  \com
  \label{eq:h_wise_exp}
\end{align}
The third equality in \cref{eq:h_wise_exp} follows since for each $s' \in \calS$, we have
\begin{align}
  &
  \E_t\brk*{
    \frac{\pi^{\circ}_t(a_{t,h'} \mid s')}{1/A}
    \frac{\ind{\xi_t(s') = 1}}{\gamma}
    \prn*{ - \hat{B}_t(s', a_{t,h'}) }
    \relmiddle| 
    \I_t(s,a) = 1, s_{t,h'} = s'
  }
  \nn
  &=
  \Pr[\xi_t(s') = 1] \,
  \E_t\brk*{
    \frac{\pi^{\circ}_t(a_{t,h'} \mid s')}{1/A}
    \frac{1}{q_t(s') \gamma}
    \prn*{ - \hat{B}_t(s', a_{t,h'}) }
    \relmiddle| 
    \I_t(s,a) = 1, s_{t,h'} = s', 
    \xi_t(s') = 1
  }
  \nn
  &=
  \E_t\brk*{
    \sum_{a' \in \calA}
    \frac{1}{A}
    \frac{\pi^{\circ}_t(a' \mid s')}{1/A}
    \prn*{ - \hat{B}_t(s', a') }
    \relmiddle| 
    \I_t(s,a) = 1, s_{t,h'} = s', 
    \xi_t(s') = 1
  }
  \nn
  &=
  \E_t\brk*{
    \sum_{a' \in \calA} \pi^{\circ}_t(a' \mid s') \ell_t(s', a')
    \relmiddle| 
    \I_t(s,a) = 1, s_{t,h'} = s'
  }
  \tag{since $\E_t\brk{\hat{B}_t(s',a') \mid \xi_t(s') = 1} = B_t(s',a') = - \ell_t(s',a')$}
  \nn
  &=
  \E_t\brk*{
    \ell_t(s', a_{t,h'})
    \relmiddle| 
    \I_t(s,a) = 1, s_{t,h'} = s'
  }
  \com
  \n
\end{align}
where 
the last equality follows from $a_{t,h'} \sim \pi_t(\cdot \mid s_{t,h'})$.
Therefore, continuing from \cref{eq:E_Q_est}, we have
\begin{align}
  \E_t\brk*{ \hat{Q}_t(s,a) }
  &=
  \frac{q_t(s,a)}{q_t(s,a) + \delta}
    \prn*{
    \ell_t(s,a)
    +
    \sum_{h'=h(s)+1}^{H-1}
    \E_t\brk*{
      \ell_t(s_{t,h'}, a_{t,h'})
      \mid 
      \I_t(s,a) = 1
    }
  }
  \nn
  &=
  \frac{q_t(s,a)}{q_t(s,a) + \delta}
  Q_t(s,a)
  \com
  \n
\end{align}
which completes the proof.
\end{proof}

\begin{lemma}\label{eq:Qhat_var_prelim}
  Suppose that 
  $\abs{\hat{\ell}_t(s,a)} \leq \hat{\ell}_{\max}$ 
  for all $(s,a) \in \calS \times \calA$.
  Then it holds that
  \begin{equation}
    \E_t\brk*{\hat{Q}_t(s,a)^2}
    \leq
    \frac{2 A \hat{\ell}_{\max}}{q_t(s,a) + \delta}
    \prn*{
      \pi_t(a \mid s) \ell_t(s,a)
      +
      (H-1)^2
    }
    \leq  
    \frac{2 H^2 A \hat{\ell}_{\max}}{q_t(s,a) + \delta}
    \per
    \n
  \end{equation}
\end{lemma}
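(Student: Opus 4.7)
The plan is to write $\hat{Q}_t(s,a) = Y_1 + Y_2$ with
\[
Y_1 = \frac{q_t(s,a)}{q_t(s,a)+\delta}\cdot\frac{\I_t(s,a)}{q_t(s)/A}\hat{\ell}_t(s,a)
\quad\text{and}\quad
Y_2 = \frac{\I_t(s,a)}{q_t(s,a)+\delta}\hat{L}_{t,h(s)+1},
\]
and to apply $(Y_1+Y_2)^2 \le 2Y_1^2 + 2Y_2^2$ so that it suffices to bound $\E_t[Y_1^2]$ and $\E_t[Y_2^2]$ separately. Both bounds will re-use two unbiasedness-type identities already established in the proof of \Cref{lem:Qhat_est_exp}, namely $\E_t[\I_t(s,a)\hat{\ell}_t(s,a)] = q_t(s)\ell_t(s,a)/A$ (the content of the first summand of that proof, stripped of the $q_t(s,a)/(q_t(s,a)+\delta)$ factor) and $\E_t[\I_t(s,a)\hat{L}_{t,h(s)+1}] = q_t(s,a)\big(Q^{\pi_t}(s,a;\ell_t) - \ell_t(s,a)\big)$ (the second summand of that proof).

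For $\E_t[Y_1^2]$, I would first apply $\big(q_t(s,a)/(q_t(s,a)+\delta)\big)^2 \le q_t(s,a)/(q_t(s,a)+\delta)$ and $\I_t(s,a)^2 = \I_t(s,a)$, and then perform the decisive linearization $\hat{\ell}_t(s,a)^2 \le \hat{\ell}_{\max}\,\hat{\ell}_t(s,a)$, which is available because $\hat{\ell}_t(s,a) \in [0,\hat{\ell}_{\max}]$. What remains is $\hat{\ell}_{\max}$ times a quantity whose expectation is exactly the first identity above; using $q_t(s,a)/q_t(s) = \pi_t(a\mid s)$ (interpreted as the effective Markov policy $\pi_t^{\circ}$) then collapses the expression to $\E_t[Y_1^2] \le A\hat{\ell}_{\max}\pi_t(a\mid s)\ell_t(s,a)/(q_t(s,a)+\delta)$.

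For $\E_t[Y_2^2]$, I plan to expand $\hat{L}_{t,h(s)+1}$ via Cauchy--Schwarz across its $H-1-h(s)$ summands, obtaining $\hat{L}_{t,h(s)+1}^2 \le (H-1)\sum_{k=h(s)+1}^{H-1}\big(w_{t,k}\hat{\ell}_t(s_{t,k},a_{t,k})\big)^2$. Since $w_{t,k} = A\,\pi_t^{\circ}(a_{t,k}\mid s_{t,k}) \le A$, each summand is nonnegative and bounded above by $A\hat{\ell}_{\max}$, so the analogous linearization $x^2 \le (A\hat{\ell}_{\max})\,x$ collapses the sum back to $(H-1)A\hat{\ell}_{\max}\,\hat{L}_{t,h(s)+1}$. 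Taking expectations and combining the second identity above with $Q^{\pi_t}(s,a;\ell_t) - \ell_t(s,a) \le H-1$ and $q_t(s,a)/(q_t(s,a)+\delta)^2 \le 1/(q_t(s,a)+\delta)$ yields $\E_t[Y_2^2] \le (H-1)^2 A\hat{\ell}_{\max}/(q_t(s,a)+\delta)$. Doubling and summing proves the first inequality, and the second follows immediately from $\pi_t(a\mid s)\ell_t(s,a) + (H-1)^2 \le 1 + (H-1)^2 \le H^2$.

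The main subtlety I expect is the $\hat{\ell}_{\max}$-accounting in the $Y_1$ bound: a naive $\hat{\ell}_t(s,a)^2 \le \hat{\ell}_{\max}^2$ would cost an extra factor of $\hat{\ell}_{\max}$ and break the target scaling. The trick $\hat{\ell}_t^2 \le \hat{\ell}_{\max}\hat{\ell}_t$ combined with the first-moment identity from \Cref{lem:Qhat_est_exp} is what exchanges a second-moment estimate for a first-moment one and produces the loss-dependent factor $\pi_t(a\mid s)\ell_t(s,a)$ rather than a crude constant; the analogous linearization at scale $A\hat{\ell}_{\max}$ plays the same role for the future-loss term $Y_2$.
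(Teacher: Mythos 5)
Your proposal is correct and follows essentially the same route as the paper's proof: the $(a+b)^2\le 2(a^2+b^2)$ split, the linearization $\hat{\ell}_t^2\le\hat{\ell}_{\max}\hat{\ell}_t$ (valid since $\hat{\ell}_t\ge 0$) to reduce second moments to the first-moment identities from \Cref{lem:Qhat_est_exp}, and the bound $\hat{L}_{t,h(s)+1}^2\le (H-1)A\hat{\ell}_{\max}\hat{L}_{t,h(s)+1}$ for the future-loss term. The only cosmetic difference is that you reach the latter bound via Cauchy--Schwarz plus per-term linearization, whereas the paper bounds the squared sum directly by $(H-1)A\hat{\ell}_{\max}$ times the sum; the resulting constants are identical.
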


\begin{proof}[Proof of \Cref{eq:Qhat_var_prelim}]
From the definition of $\hat{Q}_t$ in \cref{eq:def_Q_hat_pre}, we have
\begin{align}
  \E_t\brk*{
    \hat{Q}_t(s,a)^2
  }
  &=
  \E_t\brk*{
    \prn*{
      \frac{q_t(s,a)}{q_t(s,a) + \delta}
      \frac{\I_t(s,a)}{q_t(s)/A}
      \hat{\ell}_t(s, a)
      +
      \frac{\I_t(s,a)}{q_t(s,a) + \delta}
      \hat{L}_{t,h(s)+1}
    }^2
  }
  \nn
  &\leq
  \E_t\brk*{
    \frac{2 q_t(s,a)^2}{\prn{q_t(s,a) + \delta}^2}
    \frac{\I_t(s,a)}{\prn{q_t(s)/A}^2}
    \hat{\ell}_t(s,a)^2
    +
    \frac{2 \I_t(s,a) \hat{L}_{t,h(s)+1}^2}{\prn*{q_t(s,a) + \delta}^2}
  }
  \com
  \label{eq:Qhat_var_prelim_1}
\end{align}
where the inequality follows from $(a + b)^2 \leq 2 (a^2 + b^2)$ for $a, b \in \R$.
The first term in \cref{eq:Qhat_var_prelim_1} is bounded as
\begin{align}
  &
  \E_t\brk*{
    \frac{2 q_t(s,a)^2}{\prn{q_t(s,a) + \delta}^2}
    \frac{\I_t(s,a)}{\prn{q_t(s)/A}^2}
    \hat{\ell}_t(s,a)^2
  }
  \nn
  &\leq
  \frac{2 \hat{\ell}_{\max} q_t(s,a)^2}{\prn{q_t(s,a) + \delta}^2}
  \E_t\brk*{
    \frac{\I_t(s,a)}{\prn{q_t(s)/A}^2}
    \frac{\ind{\xi_t(s) = 1}}{\gamma}
    \prn*{ - \hat{B}_t(s,a) }
  }
  \nn
  &
  =
  \frac{2 \hat{\ell}_{\max} q_t(s,a)^2}{\prn{q_t(s,a) + \delta}^2}
  \frac{A}{q_t(s)}
  \E_t\brk*{ - \hat{B}_t(s, a) \relmiddle| \I_t(s,a) = 1, \xi_t(s) = 1 }
  \nn
  &\leq
  \frac{2 A \hat{\ell}_{\max} \pi_t(a \mid s)}{q_t(s,a) + \delta}
  \ell_t(s,a)
  \com
  \label{eq:Q_hat_sq_upper_term_1}
\end{align}
where
the first inequality follows from $\abs{\hat{\ell}_t(s,a)} \leq \hat{\ell}_{\max}$
and the equality follows from 
$
\E_t\brk{\I_t(s,a) \ind{\xi_t(s) = 1}} 
= 
\frac{q_t(s) \gamma}{A}
$,
and the last inequality follows from
$\pi_t(a \mid s) = q_t(s,a) / q_t(s)$
and
$\E_t\brk{\hat{B}_t(s,a) \mid \I_t(s) = 1, \xi_t(s) = 1} = - \ell_t(s,a)$.
The second term in \cref{eq:Qhat_var_prelim_1} can be evaluated as
\begin{align}
  \E_t\brk*{
    \frac{2 \I_t(s,a) \hat{L}_{t,h(s)+1}^2}{\prn*{q_t(s,a) + \delta}^2}
  }
  \leq
  \frac{2}{q_t(s,a) + \delta}
  \E\brk*{ \hat{L}_{t,h(s+1)}^2 \relmiddle| \I_t(s,a) = 1}
  \com
  \label{eq:Q_hat_sq_upper_term_2}
\end{align}
and the RHS of the last inequality can be further evaluated as
\begin{align}
  &
  \E_t\brk*{
    \hat{L}_{t,h(s)+1}^2
    \relmiddle|
    \I_t(s,a) = 1
  }
  =
  \E_t\brk*{
    \prn*{
      \sum_{h'=h(s)+1}^{H-1}
      \frac{\pi^{\circ}_t(a_{t,h'} \mid s_{t,h'})}{1/A}
      \hat{\ell}_t(s_{t,h'}, a_{t,h'})
    }^2
    \relmiddle|
    \I_t(s,a) = 1
  }
  \nn
  &\leq
  (H-1) A \hat{\ell}_{\max}
  \E_t\brk*{
    \sum_{h'=h(s)+1}^{H-1}
    \frac{\pi^{\circ}_t(a_{t,h'} \mid s_{t,h'})}{1/A}
    \frac{\ind{\xi_t(s_{t,h'}) = 1}}{\gamma}
    \prn*{- \hat{B}_t(s_{t,h'}, a_{t,h'})}
    \relmiddle|
    \I_t(s,a) = 1
  }
  \nn
  &=
  (H-1) A \hat{\ell}_{\max}
  \E_t\brk*{
    \sum_{h'=h(s)+1}^{H-1}
    \ell_t(s_{t,h'}, a_{t,h'})
    \relmiddle|
    \I_t(s,a) = 1
  }
  \nn
  &\leq
  (H-1)^2 A \hat{\ell}_{\max}
  \com
  \label{eq:Q_hat_sq_upper_term_2_part}
\end{align}
where the first inequality follows from $\abs{\hat{\ell}_t(s,a)} \leq \hat{\ell}_{\max}$
and the last equality follows from \cref{eq:h_wise_exp} in the proof of \Cref{lem:Qhat_est_exp}.
Combining \cref{eq:Qhat_var_prelim_1} with \cref{eq:Q_hat_sq_upper_term_1,eq:Q_hat_sq_upper_term_2,eq:Q_hat_sq_upper_term_2_part}, we have
\begin{align}
  \E_t\brk*{\hat{Q}_t(s,a)^2}
  &\leq
  \frac{2 A \hat{\ell}_{\max} \pi_t(a \mid s)}{q_t(s,a) + \delta}
  \ell_t(s,a)
  +
  \frac{2 (H-1)^2 A \hat{\ell}_{\max}}{q_t(s,a) + \delta}
  \nn
  &=
  \frac{2 A \hat{\ell}_{\max}}{q_t(s,a) + \delta}
  \prn*{
    \pi_t(a \mid s) \ell_t(s,a)
    +
    (H-1)^2
  }
  \com
  \n
\end{align}
which completes the proof.
\end{proof}

\subsection{Proof of \Cref{thm:regret_po_23}}
Here, we provide the proof of \Cref{thm:regret_po_23}.
From the preliminary observation, we can immediately obtain the following lemma:
\begin{lemma}\label{lem:Qhat_var}
  It holds that
  \begin{equation}
    \E_t\brk*{
      \hat{Q}_t(s,a)^2
    }
    \leq
    \frac{4 A K H^2}{\prn{q_t(s,a) + \delta} \gamma}
    \com\quad
    M_t(s,a)
    \leq
    c H^2
    \per
    \n
  \end{equation}
  \begin{proof}
  The first statement follows from \Cref{eq:Qhat_var_prelim} with 
  $\hat{\ell}_{\max} = K / \gamma$.
  Since $m_t(s,a) \leq c H$, we also have $M_t(s,a) \leq c H^2$.
  \end{proof}
\end{lemma}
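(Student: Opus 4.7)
The lemma has two parts, and both reduce cleanly to already-established machinery. The plan is to instantiate the preliminary variance bound in \Cref{eq:Qhat_var_prelim} for the first claim, and to use the definition of the bonus together with a trivial cancellation for the second.

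For the first inequality, the key is to determine the right value of $\hat{\ell}_{\max}$. Recall that $\hat{\ell}_t(s,a) = -\ind{\xi_t(s)=1}\hat{B}_t(s,a)/\gamma$, which is nonzero only when $\xi_t(s)=1$, i.e.~when the policy at state $s$ is the uniform policy $\pi_0(\cdot\mid s)=1/A=1/K^2$. In that case, every marginal $\sum_{j'}\pi_t((i,j')\mid s)$ and $\sum_{i'}\pi_t((i',j)\mid s)$ appearing in \Cref{eq:def_b_hat} equals $K\cdot(1/K^2)=1/K$. Combining this with $|o_t-1|\le 1$ and $\ind{a_{t,s}^L=i}\le 1$, one obtains $|\hat{b}_t(s,i)|\le \frac{1}{K\cdot(1/K)}\cdot\frac{1}{1/K}=K$, hence $|\hat{B}_t(s,a)|\le K$ and $|\hat{\ell}_t(s,a)|\le K/\gamma$. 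Plugging $\hat{\ell}_{\max}=K/\gamma$ into \Cref{eq:Qhat_var_prelim} and using $\pi_t(a\mid s)\ell_t(s,a)\le 1\le H^2$ together with $(H-1)^2\le H^2$ yields $\E_t[\hat{Q}_t(s,a)^2]\le \frac{2A(K/\gamma)}{q_t(s,a)+\delta}\cdot 2H^2 = \frac{4AKH^2}{(q_t(s,a)+\delta)\gamma}$, which is exactly the claimed bound.

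For the second inequality, I would bound $m_t$ first and then lift to $M_t$. From \Cref{eq:def_bonus_M_po},
\[
m_t(s)=\sum_{a'\in\calA}\frac{c\,\delta H\,\tilde{\pi}_t(a'\mid s)}{q_t(s,a')+\delta}\le c\,H\sum_{a'\in\calA}\tilde{\pi}_t(a'\mid s)\cdot\frac{\delta}{q_t(s,a')+\delta}\le cH\sum_{a'\in\calA}\tilde{\pi}_t(a'\mid s)=cH,
\]
since $\delta/(q_t(s,a')+\delta)\le 1$ and $\tilde{\pi}_t(\cdot\mid s)\in\Delta(\calA)$. Because $m_t$ depends only on $s$ and is bounded pointwise by $cH$, the value function $Q^{\pi_t}(s,a;m_t)$, which is an expected sum of at most $H$ per-step values of $m_t$, satisfies $M_t(s,a)\le H\cdot cH=cH^2$.

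The only nontrivial step is the uniform bound on $|\hat{b}_t|$; everything else is bookkeeping. The subtlety is recognizing that $\hat{\ell}_t$ is supported on exploration rounds, so the denominators in \Cref{eq:def_b_hat} can be evaluated under the uniform policy rather than under the adaptively chosen $\tilde{\pi}_t$, which is what prevents $\hat{\ell}_{\max}$ from blowing up.
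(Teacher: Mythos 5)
Your proposal is correct and follows the same route as the paper: instantiate \Cref{eq:Qhat_var_prelim} with $\hat{\ell}_{\max}=K/\gamma$ for the first claim, and bound $m_t(s)\le cH$ pointwise to get $M_t(s,a)=Q^{\pi_t}(s,a;m_t)\le cH^2$ for the second. You in fact supply a detail the paper's one-line proof omits, namely why $|\hat{\ell}_t|\le K/\gamma$ (the estimator is supported on $\xi_t(s)=1$, where $\pi_t(\cdot\mid s)$ is uniform and the marginals in \Cref{eq:def_b_hat} equal $1/K$), and your slightly looser intermediate bound $\pi_t(a\mid s)\ell_t(s,a)+(H-1)^2\le 2H^2$ still lands exactly on the stated constant $4AKH^2$.
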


We also prepare the following lemma.
\begin{lemma}\label{lem:value_func_expect}
  For all $s \in \calS$, it holds that
  \begin{equation}
    \E_t\brk*{V^{\pi_t}(s; \ell_t)} 
    = 
    V^{\pi_t^\circ}(s; \ell_t)
    \per
    \n
  \end{equation}
\end{lemma}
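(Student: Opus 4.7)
The plan is to exploit the layered structure of the MDP so as to show that the randomized policy $\pi_t$, once averaged over the Bernoulli coin flips $\{\xi_t(s)\}_{s \in \calS}$, induces exactly the same trajectory distribution as the Markov policy $\pi_t^\circ$.

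First I would unfold the definition of $V^{\pi_t}(s;\ell_t)$: it is the expected cumulative loss $\sum_{k=h(s)}^{H-1}\ell_t(s_k,a_k)$ along a rollout in which, at each visited state $s'$, the action is drawn from $\tilde\pi_t(\cdot\mid s')$ when $\xi_t(s')=0$ and from $\pi_0(\cdot\mid s')=1/A$ when $\xi_t(s')=1$, with the coins $\xi_t(s')$ fixed for the whole episode. The outer $\E_t$ then averages over the collection $\xi_t=\{\xi_t(s)\}_{s\in\calS}$.

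The key observation is that, by the layered assumption ($\calS=\bigcup_h \calS_h$ with disjoint layers and transitions restricted to go one layer forward), any trajectory visits each state at most once. Consequently, for any realized trajectory $(s_{t,h(s)},\ldots,s_{t,H-1})$, the coins $\xi_t(s_{t,h(s)}),\ldots,\xi_t(s_{t,H-1})$ are $H-h(s)$ pairwise distinct Bernoulli$(\gamma)$ variables, hence mutually independent and independent of the transitions. Marginalizing over $\xi_t$, the conditional law of $a_{t,k}$ given $s_{t,k}=s'$ is therefore
\[
(1-\gamma)\tilde\pi_t(\cdot\mid s')+\gamma\pi_0(\cdot\mid s')=\pi_t^\circ(\cdot\mid s'),
\]
independently across steps $k$ once the states are conditioned on. Thus the joint law of $(s_{t,k},a_{t,k})_{k=h(s)}^{H-1}$ under $\pi_t$ averaged over $\xi_t$ coincides with the joint law under the Markov policy $\pi_t^\circ$, and the claim
$\E_t[V^{\pi_t}(s;\ell_t)]=V^{\pi_t^\circ}(s;\ell_t)$
follows directly from the definition of the value function together with the tower property.

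The only technical point that needs care is the independence argument above, which rests entirely on the layering: without it, a state $s'$ could be visited twice in the same episode, and the two action choices would then be coupled through the single coin $\xi_t(s')$, breaking the equivalence with the Markov policy $\pi_t^\circ$. Since layering is assumed throughout the paper, there is no real obstacle, and the lemma essentially reduces to this one observation.
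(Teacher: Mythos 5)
Your argument is correct, and it proves something slightly stronger than what the paper establishes: you show that the \emph{entire trajectory law} of $(s_{t,k},a_{t,k})_{k\ge h(s)}$ under the coin-randomized policy $\pi_t$, marginalized over $\{\xi_t(s')\}_{s'}$, coincides with the law under the Markov policy $\pi^{\circ}_t$, from which the equality of value functions is an immediate corollary. The paper instead proves only the scalar identity, by backward induction over layers via the Bellman recursion: at a state $s\in\calS_h$ it writes $V^{\pi_t}(s;\ell_t)=\sum_a \pi_t(a\mid s)\bigl(\ell_t(s,a)+\sum_{s'}P(s'\mid s,a)V^{\pi_t}(s')\bigr)$, pulls the expectation inside using the independence of $\pi_t(\cdot\mid s)$ from $\pi_t(\cdot\mid s')$ for $s'\neq s$, and invokes the induction hypothesis on layer $h+1$. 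The substantive content is identical in both routes --- the per-state coins are independent, $\E_t[\pi_t(\cdot\mid s)]=\pi^{\circ}_t(\cdot\mid s)$, and the layered structure guarantees each state (hence each coin) is used at most once per episode --- so the induction in the paper is essentially the rigorous bookkeeping for the forward coupling you describe. Your version makes the role of layering more transparent (you correctly identify that revisiting a state would couple two action draws through a single coin and break the equivalence), while the paper's induction avoids having to reason about joint trajectory distributions and conditioning on the visited-state sequence. One small point worth making explicit in your write-up: when you condition on $s_{t,k}=s'$, you should note that the event of reaching $s'$ is measurable with respect to the coins of \emph{earlier-layer} states and the transition noise only, so $\xi_t(s')$ is indeed still a fresh $\ber(\gamma)$ variable at that point; this is the precise statement that licenses the marginalization to $\pi^{\circ}_t(\cdot\mid s')$.
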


\begin{proof}
We prove the statement by induction.
When $s = s_H \in \calS_H$, the claim follows directly from $\E_t\brk*{V^{\pi_t}(s)} = 0 = V^{\pi_t^\circ}(s)$.

Fix $h \in \set{0, \dots, H}$ and assume that the statement is true for any $s \in \calS_{h+1}$.
Then,
for any $s \in \calS_{h}$, we have
\begin{align}
  \E_t\brk*{V^{\pi_t}(s; \ell_t)}
  &=
  \E_t\brk*{
    \sum_{a \in \calA} \pi_t(a \mid s) \prn*{
      \ell_t(s,a)
      +
      \sum_{s' \in \calS} P(s' \mid s, a) V^{\pi_t}(s')
    }
  }
  \nn
  &=
  \sum_{a \in \calA} \E_t\brk*{\pi_t(a \mid s)} \prn*{
    \ell_t(s,a)
    +
    \sum_{s' \in \calS_{h+1}} P(s' \mid s, a) \E_t\brk*{V^{\pi_t}(s')}
  }
  \nn
  &=
  \sum_{a \in \calA} \pi_t^\circ(a \mid s) \prn*{
    \ell_t(s,a)
    +
    \sum_{s' \in \calS_{h+1}} P(s' \mid s, a) V^{\pi_t^\circ}(s')
  }
  \nn
  &=
  V^{\pi_t^\circ}(s)
  \com
  \n
\end{align}
where the second equality follows from the fact that $\pi_t(\cdot \mid s)$ and $\pi_t(\cdot \mid s')$ are independent for $s \neq s'$,
the third equality follows from the induction hypothesis.
This completes the proof.
\end{proof}

Now, we are ready to prove \Cref{thm:regret_po_23}.
\begin{proof}[Proof of \Cref{thm:regret_po_23}]  
The regret can be decomposed as
\begin{align}\label{eq:reg_decom_po_1}
  \E\brk*{\Reg_T}
  &=
  \E\brk*{
    \sumT V^{\pi_t}(s_0; \ell_t \!-\! m_t)
    -
    \sumT V^{\pi^*}(s_0; \ell_t \!-\! m_t)
  }
  \nn
  &\qquad\qquad+
  \E\brk*{
    \sumT V^{\pi_t}(s_0; m_t)
  }
  -
  \E\brk*{
    \sumT V^{\pi^*}(s_0; m_t)
  }
  \com
\end{align}
The second term in \cref{eq:reg_decom_po_1} is upper bounded as
\begin{align}
  &
  \E\brk*{
    \sumT V^{\pi_t}(s_0; m_t)
  }
  =
  \E\brk*{
    \sumT V^{\pi_t^{\circ}}(s_0; m_t)
  }
  =
  \E\brk*{
    \sumT 
    \sum_{s \in \calS}
    q_t(s) m_t(s)
  } 
  \nn
  &=
  \E\brk*{
    \sumT 
    \sum_{s \in \calS}
    q_t(s)
    \sum_{a \in \calA}
    \frac{c \delta H \tilde{\pi}_t(a \mid s)}{q_t(s,a) + \delta}
  } 
  \nn
  &\leq
  \E\brk*{
    \sumT 
    \sum_{s \in \calS}
    q_t(s)
    \sum_{a \in \calA}
    \frac{2 c \delta H \pi_t(a \mid s)}{q_t(s,a) + \delta}
  } 
  \leq
  \E\brk*{
    \sumT 
    \sum_{s \in \calS}
    q_t(s)
    \sum_{a \in \calA}
    \frac{2 c \delta H}{q_t(s)}
  } 
  =
  2 c \delta H S A T
  \com
  \label{eq:value_pit_m}
\end{align}
where the first equality follows from \Cref{lem:value_func_expect}, the first inequality follows from $\tilde{\pi}_t(a \mid s) \leq 2 \pi_t^{\circ}(a \mid s)$,
and the second inequality follows from $\pi_t(a \mid s) = q_t(s,a) / q_t(s)$.

\allowdisplaybreaks
We next evaluate the first term in~\cref{eq:reg_decom_po_1}.
In what follows, we use $Q_t$ to denote $Q^{\pi_t^{\circ}}$ and use $q^*$ to denote $q^{\pi^*}$.
From the performance difference lemma, the first term in~\cref{eq:reg_decom_po_1} is bounded as
\begin{align}
  &
  \E\brk*{
    \sumT V^{\pi_t^{\circ}}(s_0; \ell_t - m_t)
    -
    \sumT V^{\pi^*}(s_0; \ell_t - m_t)
  }
  \nn
  &=
  \E\brk*{
    \sum_{s \in \calS}
    q^*(s)
    \sumT
    \inpr{\pi^{\circ}_t(\cdot \mid s) - \pi^*(\cdot \mid s), Q_t(s, \cdot) - M_t(s, \cdot)}
  }
  \nn
  &=
  \E\brk*{
    \sum_{s \in \calS}
    q^*(s)
    \sumT
    \inpr{\pi^{\circ}_t(\cdot \mid s) - \tilde{\pi}_t(\cdot \mid s), Q_t(s, \cdot) - M_t(s, \cdot)}
  }
  \nn
  &\qquad+
  \E\brk*{
    \sum_{s \in \calS}
    q^*(s)
    \sumT
    \inpr{\tilde{\pi}_t(\cdot \mid s) - \pi^*(\cdot \mid s), Q_t(s, \cdot) - M_t(s, \cdot)}
  }
  \nn
  &=
  \E\brk*{
    \sum_{s \in \calS}
    q^*(s)
    \sumT
    \gamma
    \inpr*{\frac{1}{A} \ones - \tilde{\pi}_t(\cdot \mid s), Q_t(s, \cdot) - M_t(s, \cdot)}
  }
  \nn
  &\qquad+
  \E\brk*{
    \sum_{s \in \calS}
    q^*(s)
    \sumT
    \inpr{\tilde{\pi}_t(\cdot \mid s) - \pi^*(\cdot \mid s), \hat{Q}_t(s, \cdot) - M_t(s, \cdot)}
  }
  \nn
  &\qquad+
  \E\brk*{
    \sum_{s \in \calS}
    q^*(s)
    \sumT
    \inpr{\tilde{\pi}_t(\cdot \mid s), Q_t(s, \cdot) - \hat{Q}_t(s, \cdot)}
  }
  \nn
  &\qquad+
  \E\brk*{
    \sum_{s \in \calS}
    q^*(s)
    \sumT
    \inpr{\pi^*_t(\cdot \mid s), \hat{Q}_t(s, \cdot) - Q_t(s, \cdot)}
  }
  \com
  \label{eq:reg_decom_po_2}
\end{align}
where in the last line we used 
$\pi^{\circ}_t(\cdot \mid s) = (1 - \gamma) \tilde{\pi}_t(\cdot \mid s) + \gamma \frac{\ones}{A}$.

We will upper bound each term in the RHS of \cref{eq:reg_decom_po_2}.
The last term in \cref{eq:reg_decom_po_2} is non-positive since $\E_t\brk{\hat{Q}_t(s,a)} \leq Q_t(s,a)$ from \Cref{lem:Qhat_est_exp}. 
The first term in \cref{eq:reg_decom_po_2} is bounded by
\begin{align}
  &
  \gamma
  \E\brk*{
    \sum_{s \in \calS}
    q^*(s)
    \sumT
    \prn*{
      \inpr*{\frac{1}{A} \ones, Q_t(s, \cdot)}
      +
      \inpr*{\tilde{\pi}_t(\cdot \mid s), M_t(s, \cdot)}
    }
  }
  \nn
  &\leq
  \gamma
  \E\brk*{
    \sum_{s \in \calS}
    q^*(s)
    \sumT
    \prn*{
      \nrm*{\frac{1}{A} \ones}_1 \nrm{Q_t(s, \cdot)}_\infty
      +
      \nrm{\tilde{\pi}_t(\cdot \mid s)}_1 \nrm{M_t(s, \cdot)}_\infty
    }
  }
  \nn
  &
  \leq
  \gamma (1 + c H) H^2 T
  \com
  \label{eq:reg_decom_bound_1}
\end{align}
where the first inequality follows from H\"{o}lder's inequality
and the last inequality follows from $\abs{Q_t(s,a)} \leq H$, $\abs{M_t(s,a)} \leq c H^2$ from \Cref{lem:Qhat_var}, and $\sum_{s \in \calS} q^*(s) = H$.

We next evaluate the second term in \cref{eq:reg_decom_po_2}.
We have
\begin{equation}
  \eta \prn[\Big]{\hat{Q}_t(s,a) - M_t(s,a)}
  \geq
  - \eta M_t(s,a)
  \geq 
  - \eta c H
  \geq 
  - 1
  \com
  \n
\end{equation}
where the last inequality follows from $\eta \leq \frac{1}{c H}$, which will be satisfied in the choice of parameters.
Hence, the standard analysis of FTRL with the negative Shannon entropy regularizer (\eg~\citealt[Chapter 28]{lattimore2020book}) yields
\begin{align}
  &
  \sum_{s \in \calS} q^*(s)
  \sumT
  \inpr{\tilde{\pi}_t(\cdot \mid s) - \pi^*(\cdot \mid s), \hat{Q}_t(s, \cdot) - M_t(s, \cdot)}
  \nn
  &\leq
  \frac{H \log A}{\eta}
  +
  \sum_{s \in \calS} q^*(s)
  \eta
  \sumT \sum_{a \in \calA}
  \tilde{\pi}_t(a \mid s) \prn*{\hat{Q}_t(s,a) - M_t(s,a)}^2
  \com
  \n
\end{align}
where we used $\sum_{s \in \calS} q^*(s) = H$.
The second term in the last inequality can be further evaluated as
\begin{align}
  &
  \eta
  \sum_{a \in \calA}
  \tilde{\pi}_t(a \mid s) 
  \E_t\brk*{
    \prn*{\hat{Q}_t(s,a) - M_t(s,a)}^2
  }
  \nn
  &\leq
  \eta
  \sum_{a \in \calA}
  \tilde{\pi}_t(a \mid s) 
  \E_t\brk*{
    \hat{Q}_t(s,a)^2
  }
  +
  \eta
  \sum_{a \in \calA}
  \tilde{\pi}_t(a \mid s) 
  \E_t\brk*{
    M_t(s,a)^2
  }
  \nn
  &\leq
  \eta
  \sum_{a \in \calA}
  \tilde{\pi}_t(a \mid s)
  \frac{2 H^2 A K}{\gamma \prn{ q_t(s,a) + \delta}}
  +
  \eta c^2 H^4
  \tag{by \Cref{lem:Qhat_var}}
  \nn
  &=
  \frac{2 \eta H A K}{\gamma c \delta} m_t(s)
  +
  \eta c^2 H^4
  \com
  \n
\end{align}
where the last equality follows from the definition of $m_t(s)$ in~\cref{eq:def_bonus_M_po}.
Therefore, the second term in \cref{eq:reg_decom_po_2} is bounded as
\begin{align}
  &
  \E\brk*{
    \sum_{s \in \calS}
    q^*(s)
    \sumT
    \inpr{\tilde{\pi}_t(\cdot \mid s) - \pi^*(\cdot \mid s), \hat{Q}_t(s, \cdot) - M_t(s, \cdot)}
  }
  \nn
  &\leq
  \frac{H \log A}{\eta}
  + 
  \frac{2 \eta H A K}{\gamma c \delta}
  \E\brk*{\sumT V^{\pi^*}(s_0; m_t)} 
  + 
  \eta c^2 H^5 T
  \per
  \label{eq:reg_decom_bound_2}
\end{align}

We finally consider the third term in \cref{eq:reg_decom_po_2}.
From \Cref{lem:Qhat_est_exp}, we have
\begin{equation}
  Q_t(s,a) - \E_t\brk*{\hat{Q}_t(s,a)}
  =
  Q_t(s,a)\prn*{1 - \frac{q_t(s,a)}{q_t(s,a) + \delta}}
  =
  \frac{\delta Q_t(s,a)}{q_t(s,a) + \delta}
  \leq
  \frac{\delta H}{q_t(s,a) + \delta}
  \per
  \n
\end{equation}
Using this, we can upper bound the third term in \cref{eq:reg_decom_po_2} as
\begin{align}
  &
  \E\brk*{
    \sum_{s \in \calS}
    q^*(s)
    \sumT
    \inpr{\tilde{\pi}_t(\cdot \mid s), Q_t(s, \cdot) - \hat{Q}_t(s, \cdot)}
  }
  \nn
  &\leq
  \E\brk*{
    \sum_{s \in \calS} q^*(s)
    \sumT \sum_{a \in \calA}
    \frac{\delta H \tilde{\pi}_t(a \mid s)}{q_t(s,a) + \delta}
  }
  =
  \frac{1}{c}
  \E\brk*{
    \sumT \sum_{s \in \calS} q^*(s) m_t(s)
  }
  =
  \frac{1}{c}
  \E\brk*{\sumT V^{\pi^*}(s_0; m_t)}
  \per
  \label{eq:reg_decom_bound_3}
\end{align}

Therefore,
combining \cref{eq:reg_decom_po_1} with \cref{eq:value_pit_m,eq:reg_decom_po_2,eq:reg_decom_bound_1,eq:reg_decom_bound_2,eq:reg_decom_bound_3},
we have
\begin{align}
  \E\brk*{\Reg_T}
  &\leq
  \gamma (1 + c H) H^2 T
  +
  \frac{H \log A}{\eta}
  +
  \eta c^2 H^5 T
  +
  2 c \delta H S A T
  \nn
  &\qquad+
  \prn*{
    \frac{1}{c}
    +
    \frac{2 \eta H A K}{\gamma c \delta}
    -
    1
  }
  \E\brk*{
    \sumT V^{\pi^*}(s_0; m_t)
  }
  \nn
  &\leq
  3 \gamma H^3 T
  +
  \frac{2 H \log K}{\eta}
  +
  4 \eta H^5 T
  +
  \frac{16 \eta H^2 S K^5 T}{\gamma}
  \nn
  &\leq
  18 \sqrt{\eta H^5 S K^5 } \, T
  +
  \frac{2 H \log K}{\eta}
  +
  4 \eta H^5 T
  \nn
  &\leq
  22 
  \prn{\prn{H^6 S K^5 \log K} }^{1/3} T^{2/3}
  +
  \frac{4 H^4}{ \prn{S K^5}^{1/3} } T^{1/3}
  \prn{\log K}^{2/3}
  \com
  \n
\end{align}
where we chose
\begin{equation}
  c = 2
  \com
  \
  \delta 
  =
  \frac{4 \eta H A K}{\gamma}
  \com
  \
  \gamma
  =
  \sqrt{\frac{16 \eta S K^5}{3 H}}
  \leq \frac12
  \com
  \
  \eta
  =
  \frac{ \prn{\log K}^{2/3} }{ (H^3 S K^5)^{1/3} \, T^{2/3} }
  \leq
  \frac{1}{c H}
  \n
  \per
\end{equation}
Note that $\gamma \leq 1/2$ is satisfied from the assumption that $T \geq \prn{S K^5 \log K} / H^3$ 
This completes the proof.
\end{proof}

\section{Policy optimization under unknown transition}\label{app:po_unknown} 
This section provides a policy optimization algorithm for preference-based MDPs with Borda scores under unknown transition.

\subsection{Algorithm}
Here, we describe the design of our algorithm.
The full pseudocode is provided in \Cref{alg:pf_mdp_policy_optimization_unk}.
The basic algorithmic design follows that of the algorithm for the known-transition setting (\Cref{alg:pf_mdp_policy_optimization} in \Cref{sec:policy_optimization}).
The key differences are that, since the transition dynamics are unknown, the algorithm need to estimate the true transition kernel $P$ from previously observed trajectories, and accordingly, the estimation of the Q-function and the design of the bonus term must be modified.
These differences are described in detail below.

\paragraph{Transition estimation}
A representative approach for handling unknown transitions is to compute empirical transitions from past trajectories and construct a confidence set that contains the true transition kernel $P$ with high probability. 
This approach has been developed and employed in the context of episodic tabular MDPs with adversarial losses, as in \citet{jin20learning,jin2021best,luo21policy,dann23best}.
In our procedure (Line~\ref{line:trans_estimation}), the epoch index $k$ is updated each time the number of observations for some state-action pair doubles compared to the beginning of the epoch. 
Based on this, a confidence set $\calP_k$ for the true transition kernel is constructed using a Bernstein-style confidence width $\mathrm{conf}_k(s,a,s')$.
By using this epoch-style updates, the number of updates can be reduced to $O(SA \log T)$.
As shown in \citet[Lemma 2]{jin20learning}, with probability at least $1 - 4\delta'$, the true transition kernel lies within $\calP_k$ for all epochs $k$.
We use $\delta' = 1 / (H^3 T)$.

\paragraph{Q-function estimation and bonus term}
Unlike the known-transition setting, in the unknown-transition setting, the occupancy measure with respect to the true transition kernel is not accessible.
Hence, it is necessary to estimate the occupancy measure based on the confidence set constructed above.
To do so, we introduce an extended definition of the occupancy measure that generalizes the one previously defined only with respect to the true transition kernel:
the occupancy measure $q^{\pi,\hat{P}} \colon \calS \times \calA \to [0,1]$ of a policy $\pi \in \Pi$ and a transition kernel $\hat{P}$ is defined as the probability of visiting a state-action pair $(s,a)$ under policy $\pi$ and transition $\hat{P}$, that is,
$q^{\pi,\hat{P}}(s,a) = \Pr[\exists h \in [H], s_h = s, a_h = a \mid \pi, \hat{P}]$, and define $q^{\pi,\hat{P}}(s) = \sum_{a \in \calA} q^{\pi,\hat{P}}(s,a)$.
Then, 
when the epoch index is $k$ in episode $t$,
for each state-action pair $(s,a) \in \calS \times \calA$, we define the upper occupancy measure and lower occupancy measure by $\qbar_t(s,a) = \max_{\hat{P} \in \calP_k} q^{\pi_t, \hat{P}}(s,a)$ and $\qubar_t(s,a) = \min_{\hat{P} \in \calP_k} q^{\pi_t, \hat{P}}(s,a)$, respectively.
It is known that upper and lower occupancy measures can be computed efficiently.
For further details, we refer the reader to~\citet{jin20learning,luo21policy}.

Based on the upper and lower occupancy measures, the Q-function is estimated as in \cref{eq:def_Q_est_unk}, where the definition of $\hat{\ell}_t$ is exactly the same as in the known-transition setting.
The main difference is that, whereas in the known-transition setting we use the occupancy measures $q_t(s)$ and $q_t(s,a)$ of the true transition, we instead use the corresponding upper occupancy measures $\qbar_t(s)$ and $\qbar_t(s,a)$.
In addition, to better control the bias term in the regret analysis, we simplify the first term in the Q-function estimator.

The bonus term is defined as in \cref{eq:def_M_unk}. While it resembles the definition in the known-transition case, there are two key differences.
First, the definition of $m_t$ is set to a larger value, which allows us to cancel out additional terms that arise from using upper and lower occupancy measures, following the approach used in prior work.
Second, we employ a dilated bonus, meaning that the Bellman equation used to define $M_t$ incorporates a dilation factor of $(1 + 1/H)$ on future-step terms, which is introduced in \citet{luo21policy}.
In the analysis for the known-transition setting in \Cref{sec:policy_optimization}, we do not include such a dilated term in order to present a more transparent and intuitive analysis.
In contrast, for the unknown-transition setting, we follow \citet{luo21policy,dann23best} and define the bonus term based on this dilated Bellman equation to simplify the analysis.

\LinesNumbered
\SetAlgoVlined  
\begin{algorithm}[H]
\textbf{Input:} learning rate $\eta > 0$, exploration rate $\gamma \in (0,1)$, parameter $\delta > 0$, constant $c > 0$ \\
\textbf{Initialization:} Set epoch index $k = 1$ and confidence set $\calP_1$ as the set of all transition functions.
For $(s, a, s') \in \calS \times \calA \times \calS$, let $N_0(s,a) = N_1(s,a) = 0$, $M_0(s,a,s') = M_1(s,a,s') = 0$.
 \\

\For{each episode $t = 1, 2, \dots, T$}{
  \For{each state $s \in \calS$}{
  Sample $\xi_t(s) \sim \mathsf{Ber}(\gamma)$. \\
  \lIf*{$\xi_t(s) = 0$}{
    Set 
    $\tilde{\pi}_t(\cdot \mid s) \propto \exp\prn[\big]{ - \eta \sum_{\tau=1}^{t-1} \prn{ \hat{Q}_{\tau} - M_{\tau}} }$ 
    for $s \in \calS$
    and 
    $\pi_t \leftarrow \tilde{\pi}_t$.
    \label{line:exploit_po_unk}
  } \\
  \lElse*{
    Set $\pi_t(\cdot \mid s) \leftarrow \pi_0(\cdot \mid s) \coloneqq 1/A$.  \label{line:explore_po_unk} 
  }
  }
  Exectute policy $\pi_t$,
  obtain a trajectory $\set{(s_{t,h},a_{t,h},o_t(s_{t,h},a_{t,h}))}_{h=0}^{H-1}$,
  and compute
  \begin{equation}\label{eq:def_l_hat_unk}
    \hat{\ell}_t(s,a)
    \!=\!
    -
    \frac{\ind{\xi_t(s)\!=\! 1}}{\gamma}
    \hat{B}_t(s,a)
    \com
    \
    \hat{B}_t(s,a)
    \!=\!
    \frac{1}{2}
    \prn{\hat{b}_t(s,a^L) + \hat{b}_t(s,a^R)}
    \com 
    \
    \hat{b}_t(s,i) = \mbox{\Cref{eq:def_b_hat}}
    \per
    \n
  \end{equation}
  \\
  For
  $\pi^{\circ}_t(\cdot \mid s) = (1 - \gamma) \tilde{\pi}_t(\cdot \mid s) + \gamma \frac{1}{A}$ and $q_t = q^{\pi_t^{\circ}} \in \Omega$,
  estimate the Q-function by
  \begin{equation}\label{eq:def_Q_est_unk}
    \hat{Q}_t(s,a)
    =
    \frac{\I_t(s,a)}{\prn{\qbar_t(s) + \delta}/A}
    \hat{\ell}_t(s, a)
    +
    \frac{\I_t(s,a)}{\qbar_t(s,a) + \delta}
    \hat{L}_{t,h(s)+1}
    \com \
    \hat{L}_{t,h}
    =
    \sum_{k=h}^{H-1}
    w_{t,k}
    \hat{\ell}_t(s_{t,k}, a_{t,k})
    \com
    \vspace{-5pt}
  \end{equation} 
  where $\qbar_t(s,a) = \max_{\hat{P} \in \calP_k} q^{\pi_t^{\circ}, \hat{P}}(s,a)$ and $w_{t,k} = {\pi^{\circ}_t(a_{t,k} \mid s_{t,k})}/{\pi_0(a_{t,k} \mid s_{t,k})}$.
  \\
  \vspace{3pt}
  Compute the bonus term by
  \begin{align}
    M_t(s,a)
    &=
    m_t(s)
    +
    \prn*{1 + \frac{1}{H}}
    \max_{\tilde{P} \in \calP_t} 
    \E_{s' \sim \tilde{P}(\cdot \mid s,a)} \E_{a' \sim \pi_t(\cdot \mid s')}
    \brk*{
      M_t(s',a')
    }
    \com
    \nn
    m_t(s)
    &=
    m_t(s,a)
    =
    \sum_{a' \in \calA}
    \frac{\tilde{\pi}_t(a' \mid s) \prn*{c \delta H  + H \prn{\qbar_t(s,a') - \qubar_t(s,a')}}}{\qbar_t(s,a') + \delta}
    \com
    \label{eq:def_M_unk}
  \end{align}
  where $\qubar_t(s,a) = \min_{\hat{P} \in \calP_k} q^{\pi_t^{\circ}, \hat{P}}(s,a)$.
  \vspace{2pt}
  \\
  Increment visiting counters for each $h \in [H-1]$ by
  $N_k(s_{t,h}, a_{t,h}) \leftarrow N_k(s_{t,h}, a_{t,h}) + 1$ and 
  $M_k(s_{t,h}, a_{t,h}, s_{t,h+1}) \leftarrow M_k(s_{t,h}, a_{t,h}, s_{t,h+1}) + 1$ .
  \\
  \vspace{5pt}
  \If{there exists $h \in [H-1]$ such that $N_k(s_{t,h}, a_{t,h}) \geq 2 \max\set{1, 2 N_k(s_{t,h}, a_{t,h})}$ \label{line:trans_estimation}}{
    $k \leftarrow k + 1$, $N_k \leftarrow N_{k-1}$, $M_k \leftarrow M_{k-1}$. \\
    Compute empirical transition $\bar{P}_{k}(s' \mid s, a) = \frac{N_k(s, a, s')}{\max\set{1, N_k(s,a)}}$ and confidence set 
    \begin{align}
      &
      \calP_k
      =
      \set[\bigg]{
        \hat{P} \in \calP_1 \colon
        \abs{\hat{P}(s' \mid s, a) - \bar{P}_k(s' \mid s, a)}
        \leq
        \mathrm{conf}_k(s, a, s')\com
        \nn
        &\qquad\qquad\qquad\qquad\qquad
        \forall (s,a,s') \in \calS_h \times \calA \times \calS_{h+1},
        h \in [H-1]
      }
      \com
      \nn
      &
      \mbox{where} \
      \mathrm{conf}_k(s, a, s')
      =
      4 \sqrt{
        \frac{\bar{P}_k(s, a, s') \log\prn*{\frac{T S A}{\delta'}}}{\max\set{1, N_k(s,a)}}
      }
      +
      \frac{28 \log\prn*{\frac{T S A}{\delta'}}}{3 \max\set{1, N_k(s, a)}}
      \per
      \n
    \end{align}
  }
}
\caption{
  Policy optimization for preference-based MDPs with Borda scores under unknown transition
}
\label{alg:pf_mdp_policy_optimization_unk}
\end{algorithm}


\subsection{Regret upper bound}

The above algorithm achieves the following regret upper bound:
\begin{theorem}\label{thm:regret_po_23_unk}
  Suppose that $T \geq 6 H^{5/2} \log K / \sqrt{S K^5}$. Then, with appropriate choices of $c > 0$, $\gamma \in (0,1/2]$, and $\delta > 0$, \Cref{alg:pf_mdp_policy_optimization_unk} achieves
  \begin{equation}
    \E\brk*{\Reg_T}
    =
    \tilde{O}\prn*{
      \prn*{H^7 S K^5}^{1/3}
      T^{2/3}
      +
      H^2 S K \sqrt{T}
    }
    \per
    \n
  \end{equation}
\end{theorem}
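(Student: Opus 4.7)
The plan is to adapt the proof of Theorem \ref{thm:regret_po_23} to the unknown-transition setting by following the general strategy of \citet{luo21policy} and \citet{dann23best}. First, I would condition on the high-probability event $\calE$ that $P \in \calP_k$ for every epoch $k$, which by \citet[Lemma 2]{jin20learning} holds with probability at least $1 - 4\delta'$. With the choice $\delta' = 1/(H^3 T)$, the contribution of $\calE^c$ to the expected regret is negligible. Throughout the analysis I would repeatedly use that, under $\calE$, $q^{\pi_t^\circ}(s,a) \in [\qubar_t(s,a), \qbar_t(s,a)]$ for every state-action pair.

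Under $\calE$, the regret decomposition of \cref{eq:reg_decom_po_1,eq:reg_decom_po_2} carries over essentially unchanged, but with two modifications reflecting the use of upper occupancy measures. First, the Q-function estimator $\hat{Q}_t$ in \cref{eq:def_Q_est_unk} no longer satisfies the exact identity of \Cref{lem:Qhat_est_exp}; it now underestimates $Q^{\pi_t^\circ}(s,a;\ell_t)$, with residual bias controlled by $(\qbar_t(s,a) - q_t(s,a))/(\qbar_t(s,a) + \delta) \leq (\qbar_t(s,a) - \qubar_t(s,a))/(\qbar_t(s,a) + \delta)$. Second, the bonus $M_t$ is now defined through a dilated Bellman equation with factor $1 + 1/H$; the standard dilation argument (see \citealt[Lemma~3]{luo21policy}) yields $\E\brk{\sum_t V^{\pi^*}(s_0; m_t)} \leq e \cdot \E\brk{\sum_t M_t(s_0, \cdot)}$, preserving only a constant factor. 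The additional summand $H(\qbar_t - \qubar_t)/(\qbar_t + \delta)$ inside $m_t$ is precisely what offsets the Q-estimator bias just identified.

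With these two modifications the FTRL stability analysis mirrors that of Theorem \ref{thm:regret_po_23}: the second-moment bound of \Cref{lem:Qhat_var} carries over with $q_t$ replaced by $\qbar_t$, while the first-order bias term is absorbed by the enlarged $m_t$ at the cost of an extra $\delta^{-1}$ in the coefficient. Choosing $c$, $\eta$, $\gamma$, $\delta$ analogously to the known-transition case (with one extra factor of $H$ to accommodate the dilation) reproduces the $\tilde{O}\prn{(H^7 S K^5)^{1/3} T^{2/3}}$ contribution. The condition $T \geq 6 H^{5/2} \log K / \sqrt{S K^5}$ is simply what is needed to keep the resulting $\gamma$ in $(0, 1/2]$.

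The main obstacle will be controlling the transition-estimation penalty $\sum_t \sum_{s,a} q^{\pi^*}(s,a) \, (\qbar_t(s,a) - \qubar_t(s,a))/(\qbar_t(s,a) + \delta)$ along with the analogous summation arising when bounding $\E_t\brk{\hat{Q}_t}$. Using the Bernstein-type bounds of \citet[Lemmas~4 and~10]{jin20learning} one has $\E\brk[\big]{\sum_t \sum_{s,a} \prn{\qbar_t(s,a) - \qubar_t(s,a)}} = \tilde{O}\prn{H^2 S \sqrt{A T} + \mathrm{poly}(H,S,A)}$; since $A = K^2$ in PbMDPs, this evaluates to $\tilde{O}(H^2 S K \sqrt{T})$, and after the additional factor of $H$ coming from the value function this produces precisely the additive $\tilde{O}(H^2 S K \sqrt{T})$ term in the claimed bound.
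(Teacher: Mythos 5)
Your proposal matches the paper's proof in all essential respects: conditioning on the event that $P$ lies in every confidence set, exploiting that the modified $\hat{Q}_t$ underestimates $Q^{\pi_t^\circ}$ with bias controlled by $(\qbar_t - \qubar_t + \delta)/(\qbar_t+\delta)$ and absorbed into the enlarged $m_t$, invoking the dilated-bonus machinery of \citet{luo21policy} (the paper packages this as \Cref{lem:dann_cor_luo}, which is exactly your ``factor $e$ from $(1+1/H)^H$'' observation), rerunning the FTRL stability bound with $\qbar_t$ in place of $q_t$, and finally applying \citet[Lemma 4]{jin20learning} with $A = K^2$ to get the additive $\tilde{O}(H^2 S K \sqrt{T})$ term. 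The approach and all key lemmas coincide with the paper's, so there is nothing further to add.
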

Compared to the regret upper bound in the known-transition setting (\Cref{thm:regret_po_23}), the coefficient of the $T^{2/3}$ term deteriorates by a factor of $H^{1/3}$.
Such a degradation in the dependence on $H$ in the unknown-transition setting is well known in episodic tabular MDPs with adversarial losses~\citep{domingues21episodic,jin2020simultaneously,luo21policy}, and our result is consistent with these findings.

\subsection{Preliminary analysis}
Here we provide preliminary results, which will be used in the regret analysis.
\begin{lemma}[unknown transition variant of \Cref{lem:Qhat_est_exp}]\label{lem:Qhat_est_exp_unk}
  It holds that
  \begin{align}
    \E_t\brk*{\hat{Q}_t(s,a)}
    &=
    \frac{q_t(s)}{\qbar_t(s) + \delta} \ell_t(s,a)
    +
    \frac{q_t(s,a)}{\qbar_t(s,a) + \delta}
    \E_t\brk*{
    \sum_{h' = h(s)+1}^{H-1} \ell_t(s_{t,h'}, a_{t,h'}) \relmiddle| \I_t(s,a) = 1}
    \nn
    &\geq
    \frac{q_t(s,a)}{\qbar_t(s,a) + \delta}
    Q_t(s,a; \ell_t)
    \com
    \n 
  \end{align}
  where the expectation is taken with respect to the randomness of $\set{\xi_t(s)}_{s \in \calS}$ and the trajectory $\set{(s_{t,h'}, a_{t,h'})}_{h'=h(s)}^{H-1}$ sampled from policy $\pi_t$ and transition kernel $P$, that is, 
  $
  s_{t,h(s)} = s,
  a_{t,h(s)} = a,
  a_{t,h'} \sim \pi_t(\cdot \mid s_{t,h'}),
  s_{t,h'} \sim P(\cdot \mid s_{t,h'}, a_{t,h'})
  $
  for $h' = h(s) + 1, \dots, H-1$.
\end{lemma}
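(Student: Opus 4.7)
The plan is to mirror the structure of the proof of \Cref{lem:Qhat_est_exp}, linearizing $\E_t[\hat{Q}_t(s,a)]$ into the two summands coming from the two pieces of \cref{eq:def_Q_est_unk}, and then deducing the claimed lower bound from a short algebraic observation that exploits the fact that the policy is independent of the transition kernel used to define the upper occupancy measure.

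For the first summand $\frac{A \, \I_t(s,a)}{\qbar_t(s)+\delta}\hat{\ell}_t(s,a)$, I would use the definition of $\hat{\ell}_t$ together with the identity $\Pr_t[\I_t(s,a)=1,\xi_t(s)=1]=q_t(s)\gamma/A$, which holds because, conditioned on reaching state $s$, the event $\xi_t(s)=1$ occurs with probability $\gamma$ and under this event the action is drawn uniformly. Coupled with $\E_t[\hat{B}_t(s,a)\mid \I_t(s,a)=1,\xi_t(s)=1]=-\ell_t(s,a)$, this yields $\frac{q_t(s)}{\qbar_t(s)+\delta}\ell_t(s,a)$. For the second summand $\frac{\I_t(s,a)}{\qbar_t(s,a)+\delta}\hat{L}_{t,h(s)+1}$, I would pull out the factor $\Pr_t[\I_t(s,a)=1]=q_t(s,a)$ and then reuse the step-by-step calculation in~\cref{eq:h_wise_exp} verbatim: for each $h'\ge h(s)+1$, the inner conditional expectation of $w_{t,h'}\hat{\ell}_t(s_{t,h'},a_{t,h'})$ collapses to $\E_t[\ell_t(s_{t,h'},a_{t,h'})\mid \I_t(s,a)=1]$, because the importance weight $A\pi_t^{\circ}(a_{t,h'}\mid s_{t,h'})$ exactly cancels the uniform action probability $1/A$ used in exploration, together with the unbiasedness of $\hat{B}_t$ on that event. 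Summing over $h'$ and combining yields the claimed equality.

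The inequality then reduces to verifying $\frac{q_t(s)}{\qbar_t(s)+\delta}\ge \frac{q_t(s,a)}{\qbar_t(s,a)+\delta}$, because $\ell_t(s,a)\ge 0$ and the second summand is already in the desired form. This is where the key structural observation comes in: since the Markov policy $\pi_t^{\circ}(a\mid s)$ is a constant in the maximization defining $\qbar_t$, we have $q^{\pi_t^{\circ},\hat{P}}(s,a)=\pi_t^{\circ}(a\mid s)\,q^{\pi_t^{\circ},\hat{P}}(s)$ for every $\hat{P}\in\calP_k$, and therefore $\qbar_t(s,a)=\pi_t^{\circ}(a\mid s)\,\qbar_t(s)$ and $q_t(s,a)=\pi_t^{\circ}(a\mid s)\,q_t(s)$. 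Letting $x=\pi_t^{\circ}(a\mid s)\in[0,1]$, the desired inequality becomes $\frac{1}{\qbar_t(s)+\delta}\ge \frac{x}{x\,\qbar_t(s)+\delta}$, which is equivalent to $\delta\ge x\delta$ and hence holds. Finally, recognizing $\ell_t(s,a)+\E_t\brk{\sum_{h'=h(s)+1}^{H-1}\ell_t(s_{t,h'},a_{t,h'})\mid \I_t(s,a)=1}=Q^{\pi_t^\circ}(s,a;\ell_t)=Q_t(s,a;\ell_t)$ via \Cref{lem:value_func_expect} completes the proof.

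The routine part is essentially an adaptation of the known-transition proof; the only nontrivial step is the algebraic inequality, and the main obstacle is really a conceptual one, namely noticing that the ``policy-independence'' of the transition-kernel maximization lets $\qbar_t(s,a)$ factor as $\pi_t^{\circ}(a\mid s)\,\qbar_t(s)$. Without this factorization one could not relate the two fractions $\frac{q_t(s)}{\qbar_t(s)+\delta}$ and $\frac{q_t(s,a)}{\qbar_t(s,a)+\delta}$ cleanly, and so it is this observation that makes the simplified first-term definition in \cref{eq:def_Q_est_unk} interact correctly with the upper-occupancy-measure estimator.
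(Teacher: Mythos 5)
Your proposal is correct and follows essentially the same route as the paper: the equality is obtained by repeating the known-transition computation with $\qbar_t$ in place of $q_t$ in the denominators, and the inequality rests on exactly the factorization $\qbar_t(s,a)=\pi_t^{\circ}(a\mid s)\,\qbar_t(s)$ (valid because the policy is fixed inside the maximization over $\calP_k$) together with $\pi_t^{\circ}(a\mid s)\le 1$ and $\ell_t(s,a)\ge 0$, which is precisely the paper's one-line chain $\frac{q_t(s)}{\qbar_t(s)+\delta}=\frac{q_t(s)\pi_t^{\circ}(a\mid s)}{\qbar_t(s)\pi_t^{\circ}(a\mid s)+\delta\pi_t^{\circ}(a\mid s)}\ge\frac{q_t(s,a)}{\qbar_t(s,a)+\delta}$.
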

\begin{proof}
This can be proven by the same argument as in \Cref{lem:Qhat_est_exp}.
The inequality follows from 
\begin{equation}
  \frac{q_t(s)}{\qbar_t(s) + \delta}
  =
  \frac{q_t(s) \pi_t^{\circ}(a \mid s)}{\qbar_t(s)\pi_t^{\circ}(a \mid s) + \delta \pi_t^{\circ}(a \mid s)}
  \geq
  \frac{q_t(s,a)}{\qbar_t(s,a) + \delta}
  \com
  \n
\end{equation}
where the inequality follows from the definition of $\qbar_t$ and $\pi_t^{\circ}(a \mid s) \leq 1$.
\end{proof}

\begin{lemma}[unknown transition variant of \Cref{eq:Qhat_var_prelim}]\label{eq:Qhat_var_prelim_unk}
  Let $\hat{\ell}_{\max} > 0$ be a constant satsifying
  $\abs{\hat{\ell}_t(s,a)} \leq \hat{\ell}_{\max}$ 
  for all $(s,a) \in \calS \times \calA$.
  Then, it holds that
  \begin{equation}
    \E_t\brk*{\hat{Q}_t(s,a)^2}
    \leq
    2 \hat{\ell}_{\max} A
    \prn*{
      \frac{q_t(s)}{\prn{\qbar_t(s) + \delta}^2}
      +
      (H-1)^2
      \frac{q_t(s,a)}{\prn{\qbar_t(s,a) + \delta}^2}
    }
    \per
    \n
  \end{equation}
\end{lemma}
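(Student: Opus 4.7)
The plan is to mirror the proof of Lemma~\ref{eq:Qhat_var_prelim} in the known-transition case, adapting it to the modified definition of $\hat Q_t$ in \cref{eq:def_Q_est_unk}. The first step is to apply the elementary inequality $(a+b)^2\le 2(a^2+b^2)$ to obtain
$$\hat Q_t(s,a)^2 \;\le\; \frac{2 A^2\, \I_t(s,a)}{(\qbar_t(s)+\delta)^2}\,\hat\ell_t(s,a)^2 \;+\; \frac{2\,\I_t(s,a)}{(\qbar_t(s,a)+\delta)^2}\,\hat L_{t,h(s)+1}^2.$$
Two structural differences from the known-transition expression are worth noting: the first summand no longer carries the extra $q_t(s,a)/(q_t(s,a)+\delta)$ factor present in \cref{eq:def_Q_hat_pre}, and both denominators are the upper occupancy measures $\qbar_t$ rather than the true $q_t$. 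Since $\qbar_t(s)$ and $\qbar_t(s,a)$ are deterministic given past observations, they pull out of $\E_t[\cdot]$, and only the two numerators remain to be estimated.

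Next, for the first-stage term, I will use that $\hat\ell_t(s,a)\ge 0$ (as $\hat B_t(s,a)\le 0$) to bound $\hat\ell_t(s,a)^2\le\hat\ell_{\max}\,\hat\ell_t(s,a)$, and then compute
$$\E_t\brk*{\I_t(s,a)\,\hat\ell_t(s,a)} \;=\; \frac{1}{\gamma}\,\E_t\brk*{\I_t(s,a)\,\ind{\xi_t(s)=1}\prn*{-\hat B_t(s,a)}} \;=\; \frac{q_t(s)}{A}\,\ell_t(s,a),$$
using that $\Pr_t[\I_t(s,a)=1,\xi_t(s)=1]=\gamma q_t(s)/A$ together with $\E_t\brk{-\hat B_t(s,a)\mid\I_t(s,a)=1,\xi_t(s)=1}=\ell_t(s,a)$. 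Bounding $\ell_t(s,a)\le 1$ then produces the first summand $2A\hat\ell_{\max}\,q_t(s)/(\qbar_t(s)+\delta)^2$. For the future-stage term, the argument used in \cref{eq:Q_hat_sq_upper_term_2,eq:Q_hat_sq_upper_term_2_part} transfers verbatim, because $\hat L_{t,h(s)+1}$ is defined in the same way and its conditional second moment only involves rollouts of the true MDP. Combining $\E_t[\I_t(s,a)]=q_t(s,a)$ with the known-transition bound $\E_t\brk*{\hat L_{t,h(s)+1}^2\mid\I_t(s,a)=1}\le(H-1)^2 A\,\hat\ell_{\max}$ then delivers the second summand.

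The only subtlety is justifying $\Pr_t[\I_t(s,a)=1,\xi_t(s)=1]=\gamma q_t(s)/A$: one observes that conditioning on $\xi_t(s)=1$ forces $\pi_t(\cdot\mid s)=1/A$ but leaves the marginal probability $q_t(s)$ of reaching $s$ unchanged, since marginalizing over the independent Bernoullis $\set{\xi_t(s')}_{s'\ne s}$ recovers the rollout law of $\pi_t^{\circ}$. Beyond this brief sanity check, the proof is a direct transplant of the known-transition analysis, with no new technical ingredients beyond replacing $q_t$ by the larger $\qbar_t$ in the denominators.
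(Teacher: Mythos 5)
Your proposal is correct and takes essentially the same route as the paper, whose own proof of this lemma is just the one-line remark that it follows by the same argument as the known-transition Lemma~\ref{eq:Qhat_var_prelim}; your writeup is a faithful instantiation of that argument, correctly accounting for the two structural changes in \cref{eq:def_Q_est_unk} (no $q_t(s,a)/(q_t(s,a)+\delta)$ prefactor, and $\qbar_t$ in the denominators) and reusing the identities $\E_t\brk{\I_t(s,a)\ind{\xi_t(s)=1}}=\gamma q_t(s)/A$ and $\E_t\brk{\hat{L}_{t,h(s)+1}^2\mid\I_t(s,a)=1}\le (H-1)^2 A\,\hat{\ell}_{\max}$ exactly as the paper does.
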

\begin{proof}
This can be proven by the same argument as in \Cref{eq:Qhat_var_prelim}.
\end{proof}

\subsection{Regret analysis}
Using the results in the last section, we will prove \Cref{thm:regret_po_23_unk}.
We begin by providing two lemmas.
\begin{lemma}[unknown transition variant of \Cref{lem:Qhat_var}]\label{lem:Qhat_var_unk}
  Define an event $\calE_t = \set{\forall \tau \in [t] \colon P \in \calP_\tau}$ for each $t \in [T]$.
  Then,
  \begin{equation}
    \E\brk*{
    \E_t\brk*{
      \hat{Q}_t(s,a)^2
    }
    \relmiddle|
    \calE_t
    }
    \leq
    \frac{2 H^2 A K}{\prn{\qbar_t(s,a) + \delta} \gamma}
    \com\quad
    M_t(s,a)
    \leq
    (1+c) H^2
    \per
    \n
  \end{equation}
\end{lemma}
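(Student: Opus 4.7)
}
The plan is to derive both bounds by directly combining the preliminary second-moment bound of \Cref{eq:Qhat_var_prelim_unk} with the facts about the confidence set available on the event $\calE_t$, and by unrolling the dilated Bellman equation for $M_t$. The two claims are essentially decoupled, so I will handle them in turn.

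For the first claim, I would first establish the almost-sure bound $\abs{\hat{\ell}_t(s,a)} \le K/\gamma$. Since $\hat{\ell}_t(s,a)$ is supported only on the event $\xi_t(s)=1$, and in that case \Cref{alg:pf_mdp_policy_optimization_unk} sets $\pi_t(\cdot\mid s)$ to the uniform policy $\pi_0(\cdot\mid s)=1/A=1/K^2$, we get $\sum_{j'}\pi_t((i,j')\mid s)=1/K$ and $\sum_{i'}\pi_t((i',j)\mid s)=1/K$ for every $i,j$. Plugging these into the definition of $\hat{b}_t$ in \cref{eq:def_b_hat} yields $\abs{\hat{b}_t(s,i)}\le K$, hence $\abs{\hat B_t}\le K$ and $\abs{\hat{\ell}_t}\le K/\gamma$. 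Then \Cref{eq:Qhat_var_prelim_unk} with $\hat{\ell}_{\max}=K/\gamma$ gives
\begin{equation}
  \E_t\brk*{\hat Q_t(s,a)^2}
  \le
  \frac{2KA}{\gamma}\prn*{
    \frac{q_t(s)}{\prn{\qbar_t(s)+\delta}^2}
    +(H-1)^2\frac{q_t(s,a)}{\prn{\qbar_t(s,a)+\delta}^2}
  }.
  \n
\end{equation}
On the event $\calE_t$ we have $P\in\calP_t$, so by definition of the upper occupancy measure $q_t(s)\le\qbar_t(s)$ and $q_t(s,a)\le\qbar_t(s,a)$, which collapses each fraction to $1/(\qbar_t(s)+\delta)$ and $1/(\qbar_t(s,a)+\delta)$, respectively. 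Using the monotonicity $\qbar_t(s)\ge\qbar_t(s,a)$ to bound the former by the latter, and then $1+(H-1)^2\le H^2$, yields the stated bound. Taking the outer expectation conditioned on $\calE_t$ preserves the inequality since the bound on the right is $\calF_{t-1}$-measurable.

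For the second claim, I would first bound $m_t(s)$ pointwise by splitting the numerator. The inequality $\tfrac{c\delta H}{\qbar_t(s,a')+\delta}\le cH$ is immediate, and $\tfrac{H(\qbar_t(s,a')-\qubar_t(s,a'))}{\qbar_t(s,a')+\delta}\le \tfrac{H\qbar_t(s,a')}{\qbar_t(s,a')+\delta}\le H$. Combining and using $\sum_{a'}\tilde{\pi}_t(a'\mid s)=1$ gives $m_t(s)\le (1+c)H$. I would then unroll the dilated Bellman equation \cref{eq:def_M_unk} layer by layer, using the boundary convention that $M_t$ vanishes at the terminal layer, to obtain
\begin{equation}
  M_t(s,a)
  \le
  (1+c)H\sum_{j=0}^{H-1}\prn*{1+\frac{1}{H}}^{j}
  =(1+c)H^2\prn*{\prn*{1+\frac{1}{H}}^{H}-1}
  \le (1+c)(e-1)H^2,
  \n
\end{equation}
which gives the stated bound up to an absolute constant (absorbable into $c$). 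The main obstacle is purely bookkeeping: carefully separating the almost-sure range of $\hat{\ell}_t$ in the mixed exploration/exploitation rule, and keeping track of the direction of inequality between $q_t$ and $\qbar_t$ on the good event. No new probabilistic tools are needed beyond what \Cref{eq:Qhat_var_prelim_unk} already provides.
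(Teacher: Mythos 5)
Your proposal is correct and follows essentially the same route as the paper: the first bound is obtained by feeding $\hat{\ell}_{\max}=K/\gamma$ into \Cref{eq:Qhat_var_prelim_unk} and using $q_t \le \qbar_t$ on $\calE_t$ together with $\qbar_t(s)\ge\qbar_t(s,a)$ and $1+(H-1)^2\le H^2$, and the second follows from the pointwise bound $m_t(s)\le(1+c)H$. Your explicit unrolling of the dilated recursion \cref{eq:def_M_unk} is actually more careful than the paper's one-line justification: because of the $(1+1/H)$ dilation the unrolled bound is $(1+c)H\sum_{j=0}^{H-1}(1+1/H)^j\le(e-1)(1+c)H^2$ rather than $(1+c)H^2$ exactly, so the paper's stated constant is off by a factor of at most $e-1$; this is harmless for the final regret bound but worth flagging.
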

\begin{proof}
The first statement follows from \Cref{eq:Qhat_var_prelim_unk} with $\hat{\ell}_{\max} = K / \gamma$.
In fact, we have
\begin{align}
  \E\brk*{\E_t\brk*{\hat{Q}_t(s,a)^2} \relmiddle| \calE_t}
  &\leq
  \frac{2 K A}{\gamma}
  \E\brk*{
    \frac{q_t(s)}{\prn{\qbar_t(s) + \delta}^2}
    +
    (H-1)^2
    \frac{q_t(s,a)}{\prn{\qbar_t(s,a) + \delta}^2}
  \relmiddle|
  \calE_t
  }
  \nn
  &\leq
  \frac{2 K A}{\gamma}
  \E\brk*{
    \frac{1}{\qbar_t(s) + \delta}
    +
    (H-1)^2
    \frac{1}{\qbar_t(s,a) + \delta}
  \relmiddle|
  \calE_t
  }
  \nn
  &\leq
  \frac{2 H^2 A K}{\gamma \prn{\qbar_t(s,a) + \delta}}
  \per
  \n
\end{align}
The second statement follows from 
$m_t(s,a) \leq \sum_{a' \in \calA} \tilde{\pi}_t(a' \mid s)\prn*{cH + H} = (1 + c) H$.
\end{proof}

We also prepare the following lemma, which is a direct consequence of \Cref{lem:value_func_expect}.
\begin{lemma}
  For any $s \in \calS$, it holds that 
  \begin{equation}
    \E_t\brk*{
      \inpr{\pi_t(\cdot \mid s), Q^{\pi_t}(s,\cdot; \ell_t)}
    }
    =
    \inpr{\pi_t^{\circ}(\cdot \mid s), Q^{\pi_t^{\circ}}(s,\cdot; \ell_t)}
    \per
    \n
  \end{equation}
\end{lemma}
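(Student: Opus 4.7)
The plan is to derive this immediately from \Cref{lem:value_func_expect} by expanding both value functions via the standard relation $V^\pi(s;\ell) = \sum_{a \in \calA} \pi(a \mid s) Q^\pi(s,a;\ell) = \inpr{\pi(\cdot \mid s), Q^\pi(s,\cdot;\ell)}$. So the target equality is nothing but $\E_t\brk{V^{\pi_t}(s;\ell_t)} = V^{\pi_t^\circ}(s;\ell_t)$ rewritten in the inner-product form, and the lemma above gives exactly this identity.

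More concretely, I would first note that for any (Markov) policy $\pi$ and any $s \in \calS$, by definition of the Q-function we have $V^\pi(s;\ell_t) = \inpr{\pi(\cdot \mid s), Q^\pi(s,\cdot;\ell_t)}$. Applying this to $\pi = \pi_t$ (which is a random Markov policy given the history, since the $\xi_t(s)$ are drawn in the current episode) yields $\inpr{\pi_t(\cdot \mid s), Q^{\pi_t}(s,\cdot;\ell_t)} = V^{\pi_t}(s;\ell_t)$, and similarly for $\pi_t^\circ$ we get $\inpr{\pi_t^\circ(\cdot \mid s), Q^{\pi_t^\circ}(s,\cdot;\ell_t)} = V^{\pi_t^\circ}(s;\ell_t)$. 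Taking $\E_t$ on both sides of the first identity and then invoking \Cref{lem:value_func_expect} gives the stated equality.

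The only subtle point—which is really the point already handled inside the proof of \Cref{lem:value_func_expect}—is that when $s \in \calS_h$, the random policy $\pi_t(\cdot \mid s)$ at state $s$ is independent of $\pi_t(\cdot \mid s')$ at all successors $s' \in \calS_{h'}$ with $h' > h$, because the $\xi_t(s)$ are drawn independently across states. This independence is what allows the expectation to distribute across both the immediate $\pi_t(a \mid s)$ factor and the $Q^{\pi_t}(s,\cdot;\ell_t)$ factor (whose randomness comes only from $\xi_t$ at descendant states), turning each into its mean $\pi_t^\circ$ and $Q^{\pi_t^\circ}$ respectively. Since this independence structure is already exploited in \Cref{lem:value_func_expect}, there is no additional obstacle here—the lemma is essentially a one-line corollary.
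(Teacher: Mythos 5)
Your proposal is correct and is exactly the paper's argument: the paper also proves this by rewriting $\inpr{\pi_t(\cdot \mid s), Q^{\pi_t}(s,\cdot;\ell_t)} = V^{\pi_t}(s;\ell_t)$, applying \Cref{lem:value_func_expect} to get $\E_t\brk{V^{\pi_t}(s;\ell_t)} = V^{\pi_t^\circ}(s;\ell_t)$, and expanding back as $\inpr{\pi_t^{\circ}(\cdot \mid s), Q^{\pi_t^{\circ}}(s,\cdot;\ell_t)}$. Your additional remark about the across-state independence of the $\xi_t(s)$ draws is accurate but, as you note, is already absorbed into the proof of \Cref{lem:value_func_expect} rather than being a separate step here.
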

\begin{proof}
We have
\begin{equation}
  \E_t\brk*{
    \inpr{\pi_t(\cdot \mid s), Q^{\pi_t}(s,\cdot; \ell_t)}
  }
  =
  \E_t\brk*{
    V^{\pi_t}(s; \ell_t)
  }
  =
  V^{\pi_t^\circ}(s; \ell_t)
  =
  \inpr{
    \pi_t^{\circ}(\cdot \mid s), Q^{\pi_t^{\circ}}(s,\cdot; \ell_t)
  }
  \com
  \n
\end{equation}
where the second equality follows from \Cref{lem:value_func_expect}.
\end{proof}

The following lemma is directly taken from {\citet[Lemma 4.4]{dann23best}}, shown in \citet{luo21policy}.
\begin{lemma}[{\citealt[Lemma B.1]{luo21policy}}]\label{lem:dann_cor_luo}
  Define an event $\calE$ by $\calE = \set{\forall t \in [T], P \in \calP_t}$.
  Suppose that given some $b_t \colon \calS \to \Rnn$ and a set of transition $\calP_t$ for each $t \in [T]$,
  a function $B_t \colon \calS \times \calA \to \Rnn$ is defined as
  \begin{equation}\label{eq:def_M_h}
    M_t(s,a)
    =
    m_t(s)
    +
    \prn*{1 + \frac{1}{H}}
    \max_{\tilde{P} \in \calP_t} 
    \E_{s' \sim \tilde{P}(\cdot \mid s,a)} \E_{a' \sim \pi_t(\cdot \mid s')}
    \brk*{
      M_t(s',a')
    }
    \per
  \end{equation}
  Suppose also that 
  for a function $\mathsf{pen}^{\pi^*} \colon \calS \to \Rnn$,
  we have
  \begin{align}
    &
    \E\brk*{
      \sumT \inpr{\pi_t(\cdot \mid s) - \pi^*(\cdot \mid s), Q^{\pi_t}(s,\cdot; \ell_t) - B_t(s,\cdot)}
    }
    \nn
    &\leq
    \mathsf{pen}^{\pi^*}(s)
    +
    \E\brk*{
      \sumT b_t(s)
      +
      \frac{1}{H} \sumT \sum_{a \in \calA}
      \pi_t(a \mid s) B_t(s,a)
    }
    \per
    \label{eq:ftrl_reg_dilated_cond}
  \end{align}
  Then, the regret is upper bounded by
  \begin{equation}
    \E\brk*{\Reg_T}
    \leq
    \sum_{s \in \calS} q^*(s) \mathsf{pen}^{\pi^*}(s)
    +
    3 \E\brk*{
      \sumT V^{\hat{P}_t, \pi_t}(s_0; b_t)
    }
    +
    HT \Pr\brk{\calE^c} 
    \com
    \n
  \end{equation}
  where $\hat{P}_t$ is the transition kernel achieving the max in \cref{eq:def_M_h}.
\end{lemma}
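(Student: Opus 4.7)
The plan is to invoke Lemma~\ref{lem:dann_cor_luo} by verifying its per-state hypothesis~\eqref{eq:ftrl_reg_dilated_cond}, mirroring the proof of Theorem~\ref{thm:regret_po_23} but with the unknown-transition ingredients from Lemmas~\ref{lem:Qhat_est_exp_unk} and~\ref{lem:Qhat_var_unk}. First I would condition on the high-probability event $\calE = \set{\forall t \in [T] \colon P \in \calP_t}$, which fails with probability at most $4\delta' = 4/(H^3 T)$ by the transition-estimation guarantee; the regret contribution outside $\calE$ is at most $HT\Pr\brk{\calE^c} = O(1/H^2)$ and is absorbed into lower-order terms.

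Next, for each $s \in \calS$ I would verify~\eqref{eq:ftrl_reg_dilated_cond} by decomposing $\inpr{\pi_t(\cdot \mid s) - \pi^*(\cdot \mid s), Q^{\pi_t}(s,\cdot;\ell_t) - M_t(s,\cdot)}$ in the style of~\eqref{eq:reg_decom_po_2}. Writing $\pi_t^\circ(\cdot \mid s) = (1-\gamma)\tilde\pi_t(\cdot \mid s) + (\gamma/A)\ones$ isolates an exploration piece bounded in total by $\gamma(1+(1+c)H)HT$ (using $M_t(s,a) \leq (1+c)H^2$ from Lemma~\ref{lem:Qhat_var_unk}) from the FTRL regret of $\tilde\pi_t$ against $\hat{Q}_t - M_t$ over $\Delta(\calA)$. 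Provided $\eta \leq 1/((1+c)H^2)$ so that $\eta(\hat{Q}_t - M_t) \geq -\eta M_t \geq -1$, the standard negative-entropy FTRL bound gives $(\log A)/\eta$ plus a variance $\eta \sumT \sum_{a} \tilde\pi_t(a\mid s) \E_t\brk{\hat{Q}_t(s,a)^2}$, which Lemma~\ref{lem:Qhat_var_unk} bounds by $(2\eta HAK/(c\delta\gamma))\,m_t(s)$. The bias $\inpr{\tilde\pi_t(\cdot\mid s), Q^{\pi_t}(s,\cdot;\ell_t) - \E_t\brk{\hat{Q}_t(s,\cdot)}}$ is handled via Lemma~\ref{lem:Qhat_est_exp_unk}: on $\calE$ we have $q_t(s,a)/(\qbar_t(s,a)+\delta) \geq 1 - (\delta+\qbar_t(s,a)-\qubar_t(s,a))/(\qbar_t(s,a)+\delta)$, converting this bias into a $(1/c)\,m_t(s)$ contribution plus a $(1/H)\sum_a \pi_t(a\mid s)M_t(s,a)$ contribution that~\eqref{eq:ftrl_reg_dilated_cond} absorbs on the right-hand side through the dilated bonus~\eqref{eq:def_M_unk}. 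The dual bias $\inpr{\pi^*(\cdot\mid s), \E_t\brk{\hat{Q}_t(s,\cdot)} - Q^{\pi_t}(s,\cdot;\ell_t)}$ is non-positive on $\calE$ by the second statement of Lemma~\ref{lem:Qhat_est_exp_unk}. Combining these steps yields~\eqref{eq:ftrl_reg_dilated_cond} with $\mathsf{pen}^{\pi^*}(s) = (\log A)/\eta$ and $b_t(s) = C(1 + \eta HAK/(c\delta\gamma))\,m_t(s)$ for some absolute constant $C$.

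Finally, I would bound the cumulative bonus value returned by Lemma~\ref{lem:dann_cor_luo}. Decomposing $m_t(s)$ into its $c\delta H$ and $H(\qbar_t-\qubar_t)$ components and using $q_t(s,a) \leq \qbar_t(s,a)$ on $\calE$, the manipulation in~\eqref{eq:value_pit_m} yields $\E\brk{\sumT V^{\hat{P}_t,\pi_t}(s_0; m_t)} \leq 2c\delta HSAT + H\,\E\brk{\sumT \sum_{s,a} q_t(s,a)(\qbar_t(s,a)-\qubar_t(s,a))/(\qbar_t(s,a)+\delta)}$; the second term is $\tilde{O}(HS\sqrt{AT})$ by the standard confidence-width estimate of \citet[Lemma~10]{jin20learning} employed in \citet{luo21policy}, producing a total of $\tilde{O}(c\delta HSK^2T + H^2 SK\sqrt{T})$ since $A=K^2$. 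Invoking Lemma~\ref{lem:dann_cor_luo} and tuning $c$ to an absolute constant, $\delta = \Theta(\eta HAK/\gamma)$, $\gamma = \Theta(\sqrt{\eta SK^5/H})$, and $\eta = \Theta((\log K)^{2/3}/((H^4 SK^5)^{1/3} T^{2/3}))$ produces the claimed bound; the assumption $T \geq 6 H^{5/2}\log K/\sqrt{SK^5}$ ensures $\gamma \leq 1/2$ and $\eta \leq 1/((1+c)H^2)$. The main obstacle is the second step: one must carefully route the $H(\qbar_t-\qubar_t)$ component of $m_t$ so that it simultaneously compensates the bias from $\E_t[\hat{Q}_t] \neq Q^{\pi_t}$ and absorbs the variance blow-up from inverse-weighting by $\qbar_t+\delta$, while the $(1+1/H)$ dilation factor in~\eqref{eq:def_M_unk} (exploited through Lemma~\ref{lem:dann_cor_luo}) is what prevents a geometric blow-up of these absorbed quantities across the $H$ steps.
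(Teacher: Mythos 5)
You have not proved the statement you were given: your proposal \emph{applies} Lemma~\ref{lem:dann_cor_luo} rather than proving it. Everything in your sketch---conditioning on $\calE$, verifying the per-state condition \cref{eq:ftrl_reg_dilated_cond} through the exploration/FTRL/bias decomposition with \Cref{lem:Qhat_est_exp_unk,lem:Qhat_var_unk}, bounding $\E\brk{\sumT V^{\hat{P}_t,\pi_t}(s_0;m_t)}$ via the confidence-width estimate of \citet{jin20learning}, and tuning $c,\delta,\gamma,\eta$---is the content of the paper's proof of \Cref{thm:regret_po_23_unk}, which invokes this lemma as its final step. As an argument for the lemma itself your proposal is circular: the lemma's conclusion, namely the generic implication that \cref{eq:ftrl_reg_dilated_cond} together with the dilated recursion \cref{eq:def_M_h} yields $\E\brk{\Reg_T} \leq \sum_{s}q^*(s)\mathsf{pen}^{\pi^*}(s) + 3\,\E\brk{\sumT V^{\hat{P}_t,\pi_t}(s_0;b_t)} + HT\Pr\brk{\calE^c}$ for \emph{arbitrary} $b_t$ and $\mathsf{pen}^{\pi^*}$, is assumed in your first sentence and never derived. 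For the record, the paper does not prove this lemma either: it imports it verbatim from \citet[Lemma B.1]{luo21policy} (restated as Lemma 4.4 of \citealt{dann23best}), so the honest blind answer is either to cite that proof or to reproduce it.

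What is actually missing is the dilated-bonus argument itself. One starts from the performance difference lemma, $\Reg_T = \sum_{s\in\calS}q^*(s)\sumT\inpr{\pi_t(\cdot\mid s)-\pi^*(\cdot\mid s),\, Q^{\pi_t}(s,\cdot;\ell_t)}$, restricted to the event $\calE$ (off $\calE$ each episode contributes at most $H$, which is the source of the $HT\Pr\brk{\calE^c}$ term). Inserting $\pm M_t(s,\cdot)$ and applying hypothesis \cref{eq:ftrl_reg_dilated_cond} state by state leaves, beyond $\sum_s q^*(s)\mathsf{pen}^{\pi^*}(s)$ and the $b_t$ terms, the residual
\begin{equation*}
  \E\brk*{\sum_{s\in\calS} q^*(s)\sumT \prn*{\inpr*{\pi_t(\cdot\mid s)-\pi^*(\cdot\mid s),\, M_t(s,\cdot)} + \frac1H \inpr*{\pi_t(\cdot\mid s),\, M_t(s,\cdot)}}}
  \com
\end{equation*}
and the entire point of the lemma is that this residual does not blow up. The proof is a backward induction over layers using the recursion \cref{eq:def_M_h}: on $\calE$ the true $P$ lies in $\calP_t$, so the $\max_{\tilde{P}\in\calP_t}$ in the recursion dominates expectations under the true transition (this is needed because $q^*$ is an occupancy measure with respect to $P$, while the bonus propagates along $\hat{P}_t$); the per-layer slack $\frac1H\inpr{\pi_t,M_t}$ is exactly what the $(1+1/H)$ dilation is designed to absorb; and compounding the dilation over $H$ layers gives $(1+1/H)^H \leq e < 3$, which is where the constant $3$ in front of $\E\brk{\sumT V^{\hat{P}_t,\pi_t}(s_0;b_t)}$ comes from. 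Your closing remark shows you know this mechanism exists, but gesturing at the dilation while invoking the lemma as a black box is not a proof of it: that induction \emph{is} the lemma, and no step of it appears in your proposal.
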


We are now ready to prove \Cref{thm:regret_po_23_unk}.
\begin{proof}[Proof of \Cref{thm:regret_po_23_unk}]  
We first check the condition \cref{eq:ftrl_reg_dilated_cond} in  \Cref{lem:dann_cor_luo}.
Fix $s \in \calS$.
From the definition of the value function, we have
\begin{align}
  &
  \E\brk*{
    \sumT \inpr{\pi_t(\cdot \mid s) - \pi^*(\cdot \mid s), Q^{\pi_t}(s,\cdot; \ell_t) - M_t(s,\cdot)}
  }
  \nn
  &=
  \E\brk*{
    \sumT \inpr{\pi_t^{\circ}(\cdot \mid s) - \pi^*(\cdot \mid s), Q^{\pi_t^{\circ}}(s,\cdot; \ell_t) - M_t(s,\cdot)}
  }
  \per
\end{align}

Recall that we use $Q_t$ to denote $Q^{\pi_t^\circ}$.
For each state $s \in \calS$, we have
\begin{align}
  &
  \sumT
  \inpr{\pi^{\circ}_t(\cdot \mid s) - \pi^*(\cdot \mid s), Q_t(s, \cdot) - M_t(s, \cdot)}
  \nn
  &=
  \sumT
  \inpr{\pi^{\circ}_t(\cdot \mid s) - \tilde{\pi}_t(\cdot \mid s), Q_t(s, \cdot) - M_t(s, \cdot)}
  +
  \sumT
  \inpr{\tilde{\pi}_t(\cdot \mid s) - \pi^*(\cdot \mid s), Q_t(s, \cdot) - M_t(s, \cdot)}
  \nn
  &=
  \underbrace{
  \sumT
  \gamma
  \inpr*{\frac{1}{A} \ones - \tilde{\pi}_t(\cdot \mid s), Q_t(s, \cdot) - M_t(s, \cdot)}
  }_{\textsc{BIAS-1}}
  +
  \underbrace{
    \sumT
    \inpr{\tilde{\pi}_t(\cdot \mid s) - \pi^*(\cdot \mid s), \hat{Q}_t(s, \cdot) - M_t(s, \cdot)}
  }_{\textsc{FTRL-Reg}}
  \nn
  &\qquad+
  \underbrace{
    \sumT
    \inpr{\tilde{\pi}_t(\cdot \mid s), Q_t(s, \cdot) - \hat{Q}_t(s, \cdot)}
  }_{\textsc{BIAS-2}}
  +
  \underbrace{
    \sumT
    \inpr{\pi^*_t(\cdot \mid s), \hat{Q}_t(s, \cdot) - Q_t(s, \cdot)}
  }_{\textsc{BIAS-3}}
  \com
  \label{eq:reg_decom_po_2_unk}
\end{align}
where in the last line we used 
$\pi^{\circ}_t(\cdot \mid s) = (1 - \gamma) \tilde{\pi}_t(\cdot \mid s) + \gamma \frac{\ones}{A}$.

We will upper bound each term under the event $\calE = \set{\forall t \in [T], P \in \calP_t}$ or $\calE_t = \set{\forall \tau \in [t] \colon P \in \calP_\tau}$.
We first consider $\textsc{BIAS-3}$.
The expectation of $\textsc{BIAS-3}$ is non-positive since 
from \Cref{lem:Qhat_est_exp_unk}, we have
\begin{align}
  &
  \E\brk*{\E_t\brk*{\hat{Q}_t(s,a)} \relmiddle| \calE_t}
  \nn
  &=
  \E\brk*{
    \frac{q_t(s)}{\qbar_t(s) + \delta}
    \ell_t(s,a)
    +
    \frac{q_t(s,a)}{\qbar_t(s,a) + \delta}
    \E_t\brk*{
      \sum_{h'=h(s)+1}^{H-1} \ell_t(s_{t,h'}, a_{t,h'})
      \relmiddle|
      \I_t(s,a) = 1
    }
    \relmiddle| \calE_t
  }
  \nn
  &\leq
  \ell_t(s,a)
  +
  \E_t\brk*{
    \sum_{h'=h(s)+1}^{H-1} \ell_t(s_{t,h'}, a_{t,h'})
    \relmiddle|
    \I_t(s,a) = 1
  }
  =
  Q_t(s,a)
  \per
\end{align}

We next consider $\textsc{FTRL-Reg}$.
The standard analysis of FTRL with the negative Shannon entropy regularizer (\eg~\citealt[Chapter 28]{lattimore2020book}) yields 
\begin{align}
  &
  \sumT
  \inpr{\tilde{\pi}_t(\cdot \mid s) - \pi^*(\cdot \mid s), \hat{Q}_t(s, \cdot) - M_t(s, \cdot)}
  \nn
  &\leq
  \frac{\log A}{\eta}
  +
  \eta
  \sumT \sum_{a \in \calA}
  \tilde{\pi}_t(a \mid s) \prn*{\hat{Q}_t(s,a) - M_t(s,a)}^2
  \per
  \n
\end{align}
The conditional expectation of the second term in the last inequality can be further evaluated as
\begin{align}
  &
  \eta
  \sum_{a \in \calA}
  \tilde{\pi}_t(a \mid s) 
  \E_t\brk*{
    \prn*{\hat{Q}_t(s,a) - M_t(s,a)}^2
  }
  \nn
  &\leq
  \eta
  \sum_{a \in \calA}
  \tilde{\pi}_t(a \mid s) 
  \E_t\brk*{
    \hat{Q}_t(s,a)^2
  }
  +
  \eta
  \sum_{a \in \calA}
  \tilde{\pi}_t(a \mid s) 
  \E_t\brk*{
    M_t(s,a)^2
  }
  \per
  \label{eq:stab_split_unk}
\end{align}
From $M_t(s,a) \leq (1 + c) H^2$ in \Cref{lem:Qhat_var_unk}, the second term in \cref{eq:stab_split_unk} is upper bounded by
\begin{align}
  \eta
  \sum_{a \in \calA}
  \tilde{\pi}_t(a \mid s) 
  \E_t\brk*{
    M_t(s,a)^2
  }
  &\leq
  \eta ( 1 + c ) H^2
  \sum_{a \in \calA}
  \tilde{\pi}_t(a \mid s) 
  \E_t\brk*{
    M_t(s,a)
  }
  \nn
  &
  \leq
  \frac{1}{H}
  \sum_{a \in \calA}
  \pi_t(a \mid s) 
  \E_t\brk*{
    M_t(s,a)
  }
  \com
  \n
\end{align}
where the last inequality follows from 
$\eta (1 + c) H^2 \leq \frac{1}{2 H}$ since $\eta \leq 1/(cH)$, and $\tilde{\pi}_t(a \mid s) \leq 2 \pi_t(a \mid s)$.
The first term in \cref{eq:stab_split_unk} is evaluated as
\begin{align}
  \Pr\brk{\calE_t}
  \E\brk*{
    \eta
    \sum_{a \in \calA}
    \tilde{\pi}_t(a \mid s) 
    \E_t\brk*{
      \hat{Q}_t(s,a)^2
    }
    \relmiddle|
    \calE_t
  }
  \leq
  \eta
  \sum_{a \in \calA}
  \tilde{\pi}_t(a \mid s) 
  \frac{2 H^2 A K}{\prn{\qbar_t(s,a) + \delta} \gamma}
  \per
  \n
\end{align}

Therefore, $\textsc{FTRL-Reg}$, the second term in \cref{eq:reg_decom_po_2_unk} is bounded as
\begin{align}
  \Pr\brk{\calE}
  \E\brk*{\textsc{FTRL-Reg} \mid \calE}
  \leq
  \frac{\log A}{\eta}
  +
  \E\brk*{
    \eta
    \sumT
    \sum_{a \in \calA}
    \tilde{\pi}_t(a \mid s)
    \frac{2 H^2 A K}{\prn{ \qbar_t(s,a) + \delta} \gamma}
    +
    \frac{1}{H}
    \sumT
    \sum_{a \in \calA}
    \pi_t(a \mid s) 
    M_t(s,a)
  }
  \per
  \n
\end{align}

We next consider $\textsc{BIAS-1}$.
The first term in \cref{eq:reg_decom_po_2_unk} is bounded by
\begin{align}
  \textsc{BIAS-1}
  &\leq
  \gamma
  \sumT
  \prn*{
    \inpr*{\frac{1}{A} \ones, Q_t(s, \cdot)}
    +
    \inpr*{\tilde{\pi}_t(\cdot \mid s), M_t(s, \cdot)}
  }
  \nn
  &\leq
  \gamma
  \sumT
  \prn*{
    \nrm*{\frac{1}{A} \ones}_1 \nrm{Q_t(s, \cdot)}_\infty
    +
    \nrm{\tilde{\pi}_t(\cdot \mid s)}_1 \nrm{M_t(s, \cdot)}_\infty
  }
  \nn
  &\leq
  \gamma \prn{1 + (1 + c) H} H T
  \com
  \n
\end{align}
where the second inequality follows from H\"{o}lder's inequality
and the last inequality follows from $\abs{Q_t(s,a)} \leq H$ and $\abs{M_t(s,a)} \leq (1 + c) H^2$ in \Cref{lem:Qhat_var_unk}.

We finally consider $\textsc{BIAS-2}$, the thrid term in \cref{eq:reg_decom_po_2_unk}.
From \Cref{lem:Qhat_est_exp_unk}, we have
\begin{align}
  \E\brk*{
    Q_t(s,a) - \E_t\brk*{\hat{Q}_t(s,a)} 
  \relmiddle| \calE_t}
  &\leq
  \E\brk*{
  \prn*{1 - \frac{q_t(s,a)}{\qbar_t(s,a) + \delta}} Q_t(s,a)
  \relmiddle| \calE_t}
  \nn
  &=
  \E\brk*{
  \frac{\delta + \qbar_t(s,a) - q_t(s,a)}{\qbar_t(s,a) + \delta} Q_t(s,a)
  \relmiddle| \calE_t}
  \nn
  &\leq
  \E\brk*{
  \frac{\delta + \qbar_t(s,a) - \qubar_t(s,a)}{\qbar_t(s,a) + \delta} Q_t(s,a)
  \relmiddle| \calE_t}
  \nn
  &\leq
  \E\brk*{
  \frac{\prn{\delta + \qbar_t(s,a) - \qubar_t(s,a)} H}{\qbar_t(s,a) + \delta}
  \relmiddle| \calE_t}
  \com
  \n
\end{align}
where in the second inequality we used $\qbar_t(s,a) \geq q_t(s,a)$ under event $\calE_t$.
Using this, we can evaluate $\textsc{BIAS-2}$ in \cref{eq:reg_decom_po_2_unk} as
\begin{align}
  \E\brk*{
    \textsc{BIAS-2}
    \mid
    \calE
  }
  \leq
  \E\brk*{
    \sumT \sum_{a \in \calA}
    \frac{\tilde{\pi}_t(a \mid s) \prn{\delta + \qbar_t(s,a) - \qubar_t(s,a)} H }{\qbar_t(s,a) + \delta}
    \relmiddle|
    \calE
  }
  \per
  \n
\end{align}

Therefore, the LHS of \cref{eq:ftrl_reg_dilated_cond} is upper bounded as
\begin{align}
  &
  \E\brk*{
    \sumT
    \inpr{\pi^{\circ}_t(\cdot \mid s) - \pi^*(\cdot \mid s), Q_t(s, \cdot) - M_t(s, \cdot)}
  }
  \nn
  &
  \leq
  \frac{\log A}{\eta}
  +
  \gamma \prn{1 + (1 + c) H} H T
  \nn
  &\qquad+
  \E\brk*{
    \sumT
      \sum_{a \in \calA}
      \frac{\tilde{\pi}_t(a \mid s)}{\qbar_t(s,a) + \delta}
      \prn*{
        \frac{2 \eta H^2 A K}{\gamma} 
        + 
        \prn*{\delta + \qbar_t(s,a) - \qubar_t(s,a)} H 
      }
  }
  \nn
  &\qquad+
  \E\brk*{
    \frac{1}{H}
    \sumT
    \sum_{a \in \calA}
    \pi_t(a \mid s) 
    M_t(s,a)
  }
  +
  (2 + c) H^2 T \Pr\brk{\calE^c}
  \nn
  &
  \leq
  \frac{\log A}{\eta}
  +
  \gamma \prn{1 + 3 H} H T
  +
  (2 + c) H^2 T \Pr\brk{\calE^c}
  \nn
  &\qquad+
  \E\brk*{
    \sumT
    m_t(s)
  }
  +
  \E\brk*{
    \frac{1}{H}
    \sumT
    \sum_{a \in \calA}
    \pi_t(a \mid s) 
    M_t(s,a)
  }
  \com
  \n
\end{align}
where in the last line we chose parameters satisfying
\begin{equation}
  c = 2
  \quad
  \mbox{and}
  \quad
  \delta 
  =
  \frac{2 \eta H A K}{\gamma}
  \n
\end{equation}
and used the definition of $m_t$ in \cref{eq:def_M_unk}.

Hence, from \Cref{lem:dann_cor_luo} with the last inequality, we have
\begin{equation}
  \E\brk*{\Reg_T}
  \leq
  \frac{H \log A}{\eta}
  +
  \gamma (1 + 3 H) H^2 T
  +
  3 \, \E\brk*{
    \sumT V^{\hat{P}_t, \pi_t^{\circ}}(s_0; m_t)
  }
  +
  4
  \com
  \n
\end{equation}
where we used $\sum_{s \in \calS} q^*(s) = H$ and $\hat{P}_t$ is defined in \Cref{lem:dann_cor_luo}.

Denoting $\hat{q}_t \coloneqq q^{\hat{P}_t, \pi_t^{\circ}}$, we can evaluate the third term in the last inequality as
\begin{align}
  &
  \sumT V^{\hat{P}_t,\pi_t^{\circ}}(s_0; m_t)
  =
  \sumT 
  \sum_{s \in \calS}
  \hat{q}_t(s) m_t(s)
  \nn
  &=
  \sumT 
  \sum_{s \in \calS}
  \hat{q}_t(s)
  \sum_{a \in \calA}
  \frac{\tilde{\pi}_t(a \mid s) \prn*{c \delta H + H \prn{\qbar_t(s,a) - \qubar_t(s,a)}}}{\qbar_t(s,a) + \delta}
  \nn
  &\leq
  \sumT 
  \sum_{s \in \calS}
  \hat{q}_t(s)
  \sum_{a \in \calA}
  \frac{2 \pi_t^{\circ}(a \mid s) \prn{c \delta H + H \prn{\qbar_t(s,a) - \qubar_t(s,a)}}}{\qbar_t(s,a) + \delta}
  \tag{by $\tilde{\pi}_t(a \mid s) \leq 2 \pi_t^{\circ}(a \mid s)$}
  \nn
  &\leq
  2 H
  \sumT 
  \sum_{s \in \calS}
  \sum_{a \in \calA}
  \prn*{c \delta + \prn{\qbar_t(s,a) - \qubar_t(s,a)}}
  \com
  \label{eq:unk_p_1}
\end{align}
where the last inequality follows from 
$
\frac{\hat{q}_t(s) \pi_t^{\circ}(a \mid s)}{\bar{q}_t(s,a) + \delta}
\leq
\frac{\hat{q}_t(s, a)}{\bar{q}_t(s,a)}
\leq
1
$
due to $\hat{q}_t = q^{\hat{P}_t, \pi_t^\circ} \in \calP_t$.
Hence, combining \cref{eq:unk_p_1} with \citet[Lemma 4]{jin20learning}, we have 
\begin{align}
  \E\brk*{\sumT V^{\hat{P}_t,\pi_t^{\circ}}(s_0; m_t)}
  \leq
  c \delta H^2 S A T
  +
  \tilde{O}\prn*{H^2 S \sqrt{ A T}}
  \per
  \n
\end{align}

Therefore, combining all the above arguments, we have
\begin{align}
  \E\brk*{\Reg_T}
  &\leq
  \frac{H \log A}{\eta}
  +
  \gamma (1 + 3 H) H^2 T
  +
  c \delta H^2 S A T
  +
  \tilde{O}\prn*{H^2 S \sqrt{ A T}}
  +
  4
  \nn
  &\leq
  \frac{H \log A}{\eta}
  +
  4 \gamma H^3 T
  +
  \frac{2 \eta H^3 S K^5 T}{\gamma}
  +
  \tilde{O}\prn*{H^2 S \sqrt{ A T}}
  +
  4
  \nn
  &\leq
  \frac{2 H \log K}{\eta}
  +
  8 \sqrt{\eta H^6 S K^5} \, T
  +
  \tilde{O}\prn*{H^2 S \sqrt{ A T}}
  +
  4
  \nn
  &\leq
  \tilde{O}\prn*{
    \prn*{H^7 S K^5}^{1/3}
    T^{2/3}
    +
    H^2 S K \sqrt{T}
  }
  \per
  \n
\end{align}
where we recall 
$
c = 2
$
and
$
\delta 
=
{2 \eta H K^3}/{\gamma}
$,
and we chose
\begin{equation}
  \gamma
  =
  \sqrt{\frac{\eta S K^5}{2}}
  \leq \frac12
  \com
  \quad
  \eta
  =
  \frac{ \prn{H \log K}^{2/3} }{ (H^6 S K^5)^{1/3} \, T^{2/3} }
  =
  \frac{ \prn{\log K}^{2/3} }{ (H^4 S K^5)^{1/3} \, T^{2/3} }
  \leq \frac{1}{c H}
  \n
  \per
\end{equation}
Note that $\gamma \leq 1/2$ is satisfied from the assumption that $T \geq \prn{S K^5 \log K} / H^3$.
This completes the proof.
\end{proof}

\end{document}